\title{Learning multi-modal generative models with permutation-invariant encoders and tighter variational objectives}
\author{\name Marcel Hirt \email marcelandre.hirt@ntu.edu.sg \\
       \addr School of Social Sciences\\
       Nanyang Technological University\\
       Singapore\\
       \AND
       \name Domenico Campolo
       \email d.campolo@ntu.edu.sg \\
       \addr  School of Mechanical and Aerospace Engineering\\
       Nanyang Technological University\\
       Singapore\\
       \AND
       \name Victoria Leong
       \email VictoriaLeong@ntu.edu.sg \\
       \addr  School of Social Sciences\\\
       Nanyang Technological University\\
       Singapore\\
       \AND
       \name Juan-Pablo Ortega
       \email juan-pablo.ortega@ntu.edu.sg \\
       \addr   School of Physical and Mathematical Sciences\\
       Nanyang Technological University\\
       Singapore
       }
\begin{document}

\maketitle

\begin{abstract}
Devising deep latent variable models for multi-modal data has been a long-standing theme in machine learning research. Multi-modal Variational Autoencoders (VAEs) have been a popular generative model class that learns latent representations that jointly explain multiple modalities. Various objective functions for such models have been suggested, often motivated as lower bounds on the multi-modal data log-likelihood or from information-theoretic considerations. To encode latent variables from different modality subsets, Product-of-Experts (PoE) or Mixture-of-Experts (MoE) aggregation schemes have been routinely used and shown to yield different trade-offs, for instance, regarding their generative quality or consistency across multiple modalities. In this work, we consider a variational objective that can tightly approximate the data log-likelihood. We develop more flexible aggregation schemes that avoid the inductive biases in PoE or MoE approaches by combining encoded features from different modalities based on permutation-invariant neural networks. Our numerical experiments illustrate trade-offs for multi-modal variational objectives and various aggregation schemes. We show that our variational objective and more flexible aggregation models can become beneficial when one wants to approximate the true joint distribution over observed modalities and latent variables in identifiable models.
\end{abstract}

\section{Introduction}
Multi-modal data sets where each sample has features from distinct sources have grown in recent years. For example, multi-omics data such as genomics, epigenomics, transcriptomics, and metabolomics can provide a more comprehensive understanding of biological systems if multiple modalities are analyzed in an integrative framework \citep{argelaguet2018multi, lee2021variational, minoura2021mixture}. In neuroscience, multi-modal integration of neural activity and behavioral data can help to learn latent neural dynamics \citep{zhou2020learning, schneider2023learnable}. However, annotations or labels in such data sets are often rare, making unsupervised or semi-supervised generative approaches particularly attractive as such methods can be used in these settings to (i) generate data, such as missing modalities, and (ii) learn latent representations that are useful for down-stream analyzes or that are of scientific interest themselves. 
The availability of heterogeneous data for different modalities promises to learn generalizable representations that can capture shared content across multiple modalities in addition to modality-specific information. A promising class of weakly-supervised generative models is multi-modal VAEs \citep{suzuki2016joint, wu2019multimodal, shi2019variational, sutter2021generalized} that combine information across modalities in an often-shared low-dimensional latent representation. A common route for learning the parameters of latent variable models is via maximization of the marginal data likelihood with various lower bounds thereof, as suggested in previous work. 



\paragraph{Setup.} 
We consider a set of $M$ random variables $\{X_1, \ldots, X_M\}$ with empirical density $p_d$, where each random variable $X_s$, $s \in  \mcm=\{1, \dots, M\}$, can be used to model a different data modality taking values in $\msx_s$. With some abuse of notation, we write $X=\{X_1, \ldots, X_M\}$ and for any subset $\mcs \subset \mcm$, we set $X=(X_{\mcs},X_{\sm \mcs})$ for two partitions of the random variables into $X_{\mcs}=\{X_{s}\}_{s\in \mcs}$ and $X_{\sm \mcs}=\{X_{s}\}_{s\in \mcm\sm\mcs}$.
We pursue a latent variable model setup, analogous to uni-modal VAEs \citep{kingma2014adam,rezende2014stochastic}. For a latent variable $Z \in \msz$ with prior density $p_{\theta}(z)$, we posit a joint generative model\footnote{We usually denote random variables using upper-case letters, and their realizations by the corresponding lower-case letter. We assume throughout that $\msz=\rset^D$, and that $p_{\theta}(z)$ is a Lebesgue density, although the results can be extended to more general settings such as discrete random variables $Z$ with appropriate adjustments, for instance, regarding the gradient estimators.}
$ p_{\theta}(z,x)=p_{\theta}(z) \prod_{s=1}^M p_{\theta}(x_s|z)$,
where $p_{\theta}(x_s|z)$ is commonly referred to as the decoding distribution for modality $s$. Observe that all modalities are independent given the latent variable $z$ shared across all modalities. However, one can introduce modality-specific latent variables by making sparsity assumptions for the decoding distribution. Intuitively, this conditional independence assumption means that the latent variable $Z$ captures all unobserved factors shared by the modalities.

\begin{figure}[t]
    \centering
    \subfloat[]{%
      \includegraphics[width=0.18\textwidth]{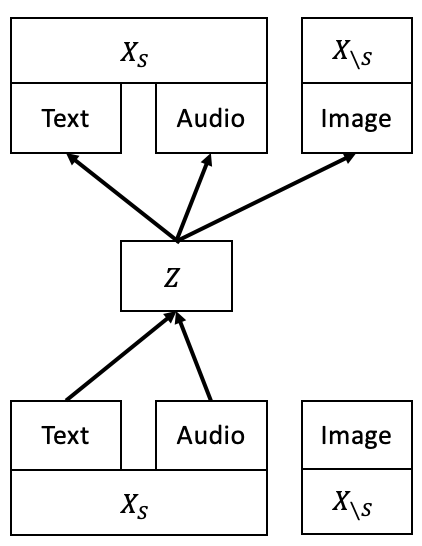}
      \label{fig:recon_mixture}
    }
    \subfloat[]{%
      \includegraphics[width=0.18\textwidth]{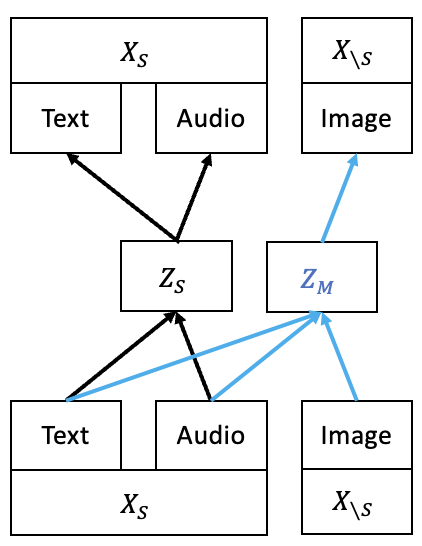}
       \label{fig:recon_masked}
    } 
    \subfloat[]{%
      \includegraphics[width=0.29\textwidth]{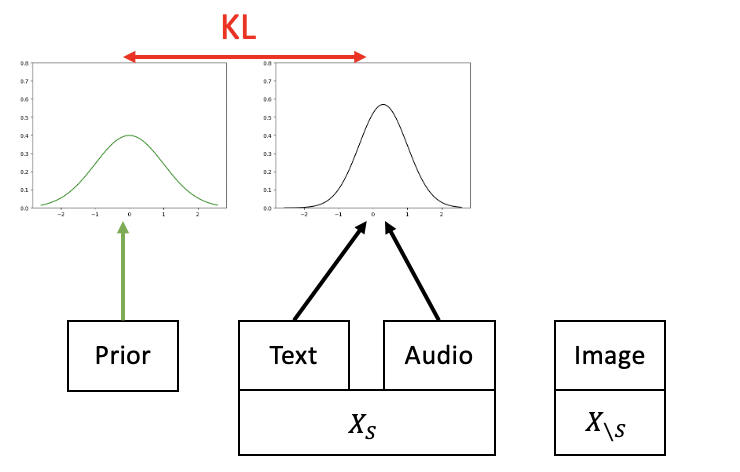}
    \label{fig:reg_mixture}
    }
    \subfloat[]{%
      \includegraphics[width=0.29\textwidth]{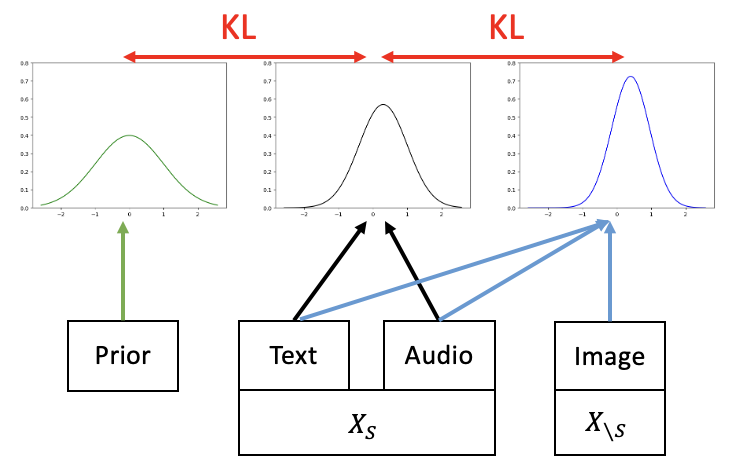}
      \label{fig:reg_masked}

    } 
     \\
    \caption{Reconstruction or cross-prediction of modalities in \ref{fig:recon_mixture} and \ref{fig:recon_masked} for a mixture-based bound and our objective, respectively. The mixture-based bound resorts to a single latent variable $Z \sim q_\phi(\cdot| x_\mcs)$ that encodes information from a modality subset $x_\mcs$ and is trained to reconstruct the conditioning modalities $x_\mcs$, as well as to predict the masked modalities $x_{\sm \mcs}$. Our objective relies on two latent variables  $Z_\mcs \sim q_\phi(\cdot| x_\mcs)$ and $Z_\mcm \sim q_\phi(\cdot| x_\mcs, x_{\sm \mcs})$, where $Z_\mcs$ is learned to reconstruct all its conditioning modalities, with $Z_\mcm$ learned to reconstruct the remaining modalities. KL regularization terms in \ref{fig:reg_mixture} and \ref{fig:reg_masked} for a mixture-based bound and our objective, respectively. The mixture-based bound aims to minimize the KL divergence between the encoding distribution given a modality subset $x_{\mcs}$ and a prior distribution. Our objective additionally aims to minimize the KL divergence between the encoding distribution given all modalities relative to the encoding distribution of a modality subset $x_{\mcs}$.}
    \label{fig:recon_illustration}
\end{figure}

\paragraph{Multi-modal variational bounds and mutual information.}
Popular approaches to train multi-modal models are based on a mixture-based variational bound \citep{daunhawer2022limitations, shi2019variational} given by
$ \mcl^{\text{Mix}}(\theta,\phi, \beta)=\int \rho(S) \mcl^{\text{Mix}}_{\mcs}(x,\theta,\phi, \beta) \rmd \mcs$,
where
\begin{equation} \mcl^{\text{Mix}}_{\mcs}(x,\theta,\phi, \beta)=\int  q_{\phi}(z|x_{\mcs}) \left[ \log p_{\theta}(x|z) \right] \rmd z - \beta \KL(q_{\phi}(z|x_{\mcs})|p_{\theta}(z)) \label{eq:mixture_bound}\end{equation}
and $\rho$ is some distribution on the power set $\mathcal{P}(\mcm)$ of $\mcm$ and $\beta>0$. For $\beta=1$, one obtains the bound $\mcl^{\text{Mix}}_{\mcs}(x,\theta,\phi, \beta)\leq \log p_{\theta}(x)$. However, as shown in \citet{daunhawer2022limitations}, there is a gap between the variational bound and the log-likelihood given by the conditional entropies that cannot be reduced even for flexible encoding distributions. More precisely, it holds that
$$ \int p_d(x) \log p_\theta(x)  \rmd x \geq \int p_d(x) \mcl^{\text{Mix}}(x, \theta, \phi, 1) \rmd x + \int \rho(\mcs) \mch(p_d(X_{\sm \mcs}|X_\mcs)) \rmd \mcs,$$
where $\mch(p_d(X_{\sm \mcs}|X_\mcs))$ is the entropy of the conditional data distributions. Intuitively, in \eqref{eq:mixture_bound}, one tries to reconstruct or predict all modalities from incomplete information using only the modalities $\mathcal{S}$, which leads to learning an inexact, average prediction \citep{daunhawer2022limitations}. In particular, it cannot reliably predict modality-specific information that is not shared with other modality subsets, as measured by the conditional entropies  $\mch(p_d(X_{\sm \mcs}|X_\mcs))$. For an illustration, see Figure \ref{fig:recon_mixture}, where the latent variable $Z$ encodes information from a text and audio modality and is tasked to both reconstruct the text and audio modalities and to predict an unobserved image. Information that is specific to the image modality only thus cannot be recovered. A related observation has been made for Masked AutoEncoders \citep{he2022masked} that correspond to the limiting case $\beta \to 0$, where the cross-reconstruction objective leads to learning features that are invariant to the masking of modalities \citep{kong2023bunderstanding} and allows for the recovery of latent variables that represent maximally shared information between the unmasked and masked modality \citep{kong2023understanding}.

We will illustrate that maximizing $\mcl^{\text{Mix}}_\mcs$ can be interpreted as the information-theoretic objective of
\begin{equation}\text{maximizing } \left\{\hat{\I}^{\text{lb}}_{q_\phi}(X,Z_\mcs) -\beta \hat{\I}^{\text{ub}}_{q_\phi}(X_\mcs,Z_\mcs) \right\}, \label{eq:mixture_info}
\end{equation}
where $\hat{\I}^{\text{ub}}_q$ and $\hat{\I}^{\text{lb}}_q$ are variational upper, respectively, lower bounds of the corresponding mutual information $I_{q}(X,Y)=\int q(x,y) \log \frac{q(x,y)}{q(x)q(y)} \rmd x \rmd y$ of random variables $X$ and $Y$ having marginal and joint densities $q$. In order to emphasize that the latent variable $Z$ is conditional on $X_\mcs$ under the encoding density $q_{\phi}$, we write $Z_\mcs$ instead of $Z$. Variations of \eqref{eq:mixture_bound} have been suggested \citep{sutter2020multimodal}, such as by replacing the prior density $p_{\theta}$ in the KL-term by a weighted product of the prior density $p_{\theta}$ and the uni-modal encoding distributions $q_{\phi}(z|x_s)$, for all $s \in \mcm$. 
Likewise, the multi-view variational information bottleneck approach developed in \citet{lee2021variational} for predicting $X_{\sm \mcs}$ given $X_{\mcs}$ can be interpreted as maximizing $ \hat{\I}^{\text{lb}}_{q_\phi}(X_{\sm \mcs},Z_\mcs)-\beta \hat{\I}^{\text{ub}}_{q_\phi}(X_\mcs,Z_\mcs)$.
\citet{hwang2021multi} suggested a related bound that aims to maximize the reduction of total correlation of $X$ when conditioned on a latent variable. 
Similar bounds have been suggested in \citet{sutter2020multimodal} and \citet{suzuki2016joint} by considering different KL-regularisation terms; see also \citet{suzukimitigating2022}. \citet{shi2020relating} add a contrastive estimate $\hat{\I}_{p_\theta}$ of the point-wise mutual information to the maximum likelihood objective and minimize $-\log p_{\theta}(x) - \beta \hat{\I}_{p_{\theta}}(x_\mcs, x_{\sm \mcs})$. Optimizing variational bounds of different mutual information terms such as \eqref{eq:mixture_info} yield latent representations that have different trade-offs in terms of either (i) reconstruction or (ii) cross-prediction of multi-modal data from a rate-distortion viewpoint \citep{alemi2018fixing}.

\paragraph{Multi-modal aggregation schemes.} 
To optimize the variational bounds above or to allow for flexible conditioning at test time, we need to learn encoding distributions $q_{\phi}(z|x_{\mcs})$ for any $\mcs \in \mcp({\mcm})$. The typical aggregation schemes that are scalable to a large number of modalities are based on a choice of uni-modal encoding distributions $q_{\phi_s}(z|x_s)$ for any $s \in \mcm$, which are then used to define the multi-modal encoding distributions as follows:
\begin{itemize}
\item Mixture of Experts (MoE), see \citet{shi2019variational}, $$q_{\phi}^{\text{MoE}}(z|x_{\mcs})=\frac{1}{|\mcs|} \sum_{s\in \mcs}  q_{\phi_s}(z|x_s).$$
\item Product of Experts (PoE), see \citet{wu2018multimodal}, 
$$q_{\phi}^{\text{PoE}}(z|x_{\mcs}) \propto p_{\theta}(z)\prod_{s \in \mcs} q_{\phi_s}(z|x_s).$$
\end{itemize}

Figure \ref{fig:poe}-\ref{fig:moe} illustrates these previously considered aggregation schemes. While these schemes do not require learning the aggregation function, we introduce aggregation schemes that, as illustrated in \ref{fig:sum}-\ref{fig:selfattention}, involve learning additional neural network parameters that specify a learnable aggregation function.

\begin{figure}[t]
    \centering
    \subfloat[PoE]{%
      \includegraphics[width=0.24\textwidth]{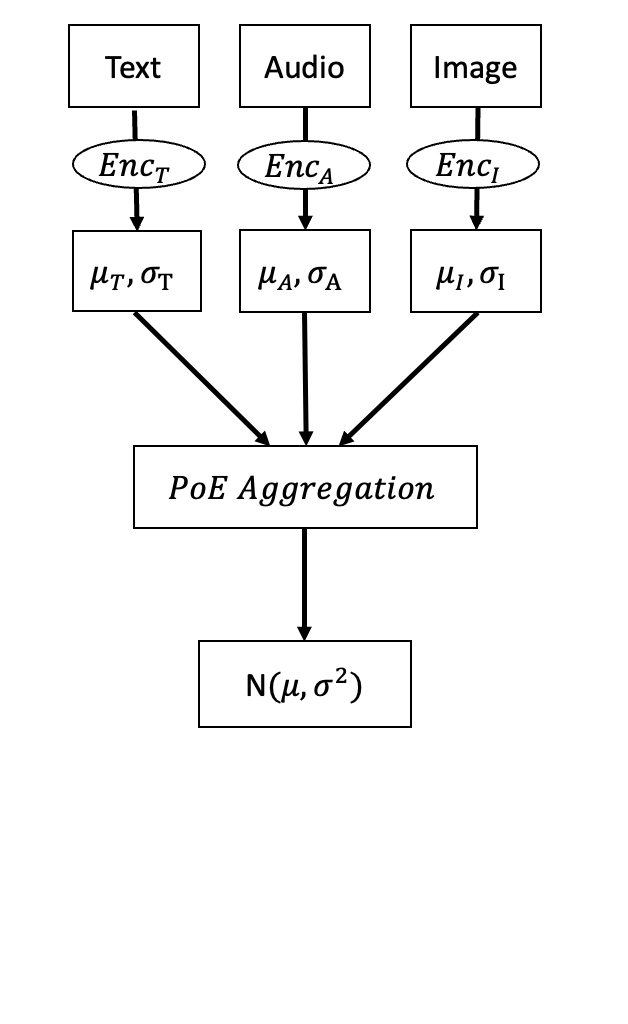}
      \label{fig:poe}
    }
    \subfloat[MoE]{%
      \includegraphics[width=0.24\textwidth]{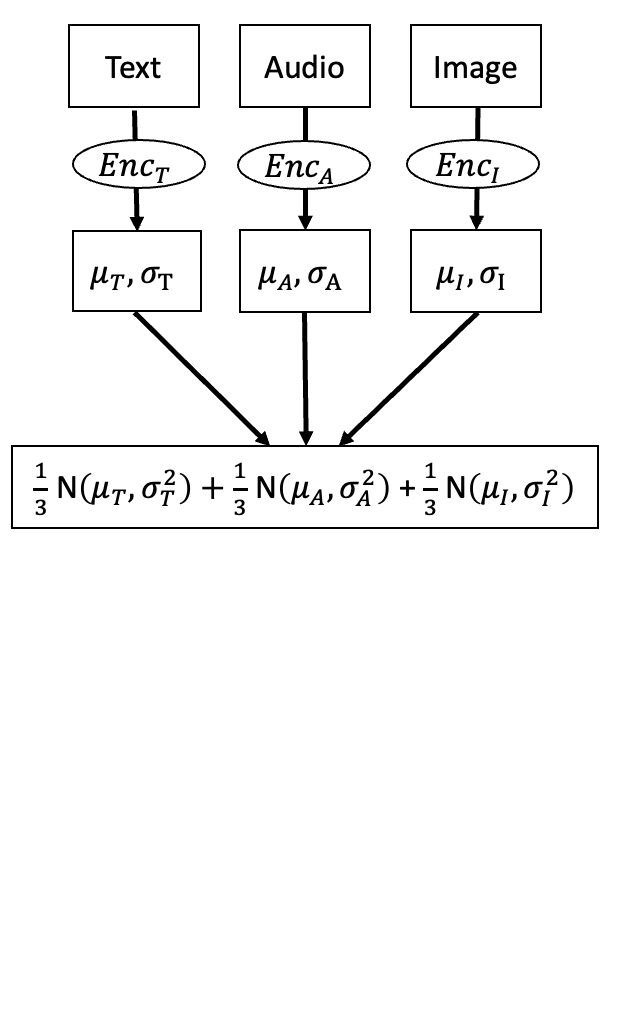}
       \label{fig:moe}
    } 
    \subfloat[Sum-Pooling]{%
      \includegraphics[width=0.24\textwidth]{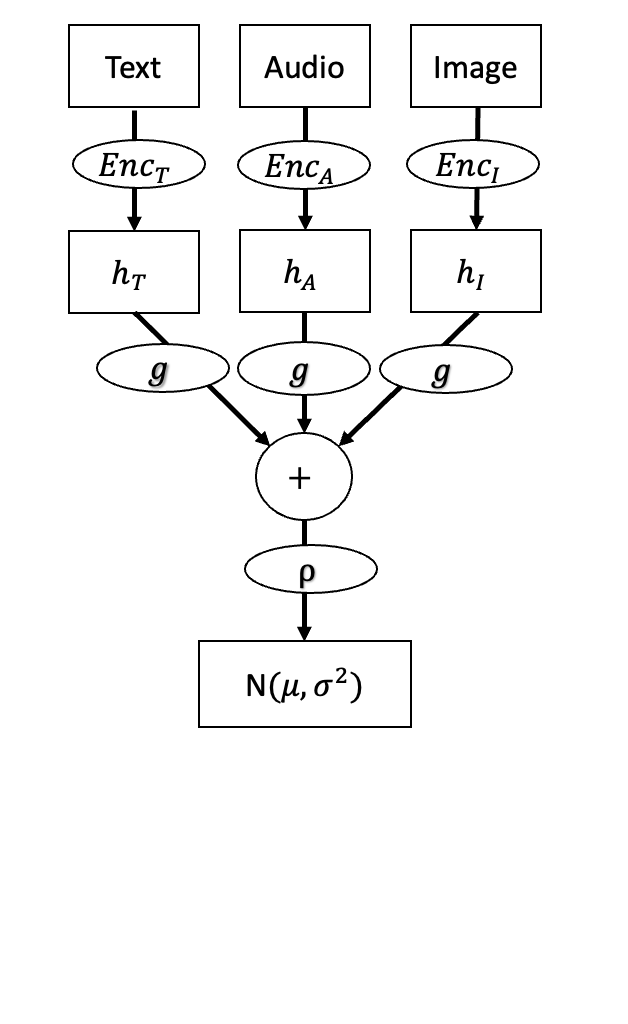}
    \label{fig:sum}
    }
    \subfloat[Self-Attention]{%
      \includegraphics[width=0.24\textwidth]{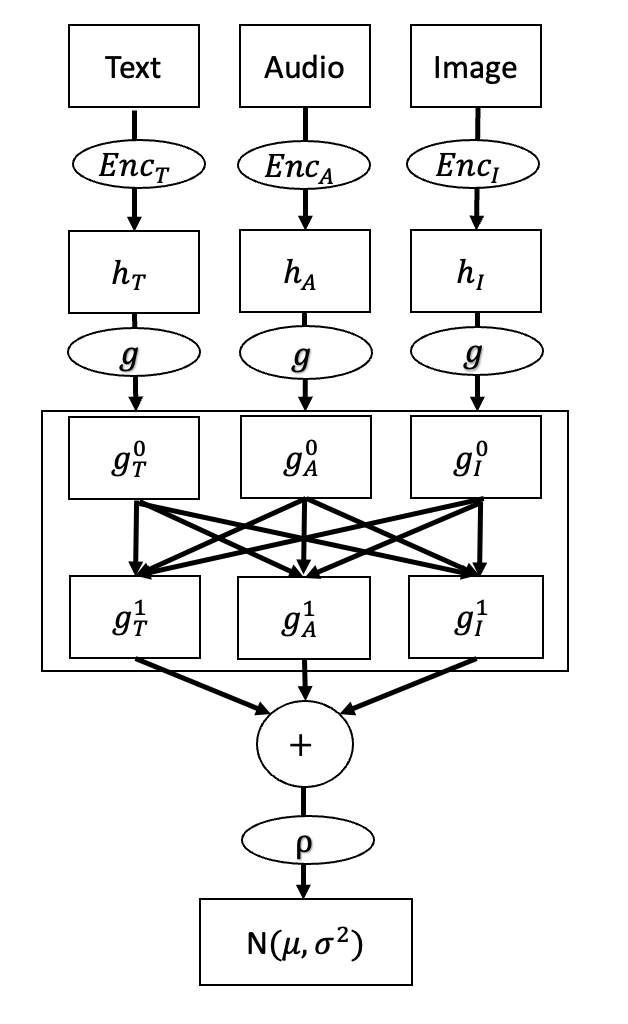}
      \label{fig:selfattention}

    } 
     \\
    \caption{Illustration of multi-modal aggregation schemes. All encoding schemes first apply modality-specific encoders to each individual modality. A PoE model \eqref{fig:poe} aggregates the outputs from the modality-specific encoders into a single Gaussian distribution that results from a multiplication of the corresponding uni-modal Gaussian densities. An MoE model \eqref{fig:moe} assumes an equally weighted Gaussian mixture distribution comprised of the uni-modal Gaussian densities. Our new aggregation schemes allow for learning permutation-invariant fusion models: A Sum-Pooling or Deep Set model \eqref{fig:sum} applies the same function $g$ to the encoded features $h_s$, $s \in \{T,A,I\}$, before summing them up and using a non-linear projection $\rho$ to the parameters of a Gaussian distribution. A Self-Attention model \eqref{fig:selfattention} differs from the Sum-Pooling approach by applying self-attention layers or transformer layers before summing up the features, thereby accounting for pairwise interactions between the encoded modalities. Our newly introduced schemes allow for encoding only a modality subset by using standard masking operations.}
    \label{fig:fusion_illustration}
\end{figure}

\paragraph{Contributions.}
This paper contributes (i) a new variational objective as an approximation of a lower bound on the multi-modal log-likelihood (LLH). We avoid a limitation of mixture-based bounds \eqref{eq:mixture_bound}, which may not provide tight lower bounds on the joint LLH if there is considerable modality-specific variation \citep{daunhawer2022limitations}, even for flexible encoding distributions. 
The novel variational objective contains a lower bound of the marginal LLH $\log p_\theta(x_\mcs)$ and a term approximating the conditional $\log p_\theta(x_{\sm \mcs}|x_\mcs)$ for any choice of $\mcs \in \mcp(\mcm)$, provided that we can learn a flexible multi-modal encoding distribution. This paper then contributes (ii) new multi-modal aggregation schemes that yield more expressive multi-modal encoding distributions when compared to MoEs or PoEs. These schemes are motivated by the flexibility of permutation-invariant (PI) architectures such as DeepSets \citep{zaheer2017deep} or attention models \citep{vaswani2017attention, lee2019set}. We illustrate that these innovations (iii) are beneficial when learning identifiable models, aided by using flexible prior and encoding distributions consisting of mixtures, and (iv) yield higher LLH in experiments.

\paragraph{Further related work.}
Canonical Correlation Analysis \citep{hotelling1936relations, bach2005probabilistic} is a classical approach for multi-modal data that aims to find projections of two modalities by maximally correlating and has been extended to include more than two modalities \citep{archambeau2008sparse, tenenhaus2011regularized} or to allow for non-linear transformations \citep{akaho2001kernel, hardoon2004canonical, wang2015deep, karami2021deep}. 
Probabilistic CCA can also be seen as multi-battery factor analysis (MBFA) \citep{browne1980factor, klami2013bayesian}, wherein a shared latent variable models the variation common to all modalities with modality-specific latent variables capturing the remaining variation. Likewise, latent factor regression or classification models \citep{stock2002forecasting} assume that observed features and response are driven jointly by a latent variable.
\citet{vedantam2018generative} considered a tiple-ELBO for two modalities, while \citet{sutter2021generalized} introduced a generalized variational bound that involves a summation over all modality subsets. 
A series of work has developed multi-modal VAEs based on shared and private latent variables \citep{wang2016deep, lee2021private, lyu2022finite, lyu2021understanding, vasco2022leveraging, palumbo2023mmvae+}.
\citet{tsai2019multimodal} proposed a hybrid generative-discriminative objective and minimized an approximation of the  Wasserstein distance between the generated and observed multi-modal data. \citet{joy2021learning} consider a semi-supervised setup of two modalities that requires no explicit multi-modal aggregation function.
Extending the Info-Max principle \citep{linsker1988self}, maximizing mutual information $\I_q(g_1(X_1),g(X_2))\leq \I_q((X_1, X_2),(Z_1, Z_2))$ based on representations $Z_s=g_s(X_s)$ for modality-specific encoders $g_s$ from two modalities has been a motivation for approaches based on (symmetrized) contrastive objectives \citep{tian2020contrastive, zhang2022contrastive, daunhawer2023identifiability} such as InfoNCE \citep{oord2018representation, poole2019variational, wang2020understanding} as a variational lower bound on the mutual information between $Z_1$ and $Z_2$. Recent work
\citep{bounoua2023multi, bao2023one} considered score-based diffusion models on auto-encoded private latent variables.

\section{A tighter variational objective with arbitrary modality masking}
For $\mcs \subset \mcm$ and $\beta>0$, we define
\begin{equation}
    \mcl_{\mcs}(x_{\mcs},\theta,\phi,\beta)=\int q_{\phi}(z|x_{\mcs}) \left[ \log p_{\theta}(x_{\mcs}|z) \right] \rmd z - \beta \KL(q_{\phi}(z|x_{\mcs})|p_{\theta}(z)).
    \label{eq:marginal_bound}
\end{equation} 

This is simply a standard variational lower bound \citep{jordan1999introduction, blei2017variational} restricted to the subset $\mcs$ for $\beta=1$, and therefore $\mcl_{\mcs}(x_{\mcs},\theta,\phi,1)\leq \log p_{\theta}(x_{\mcs})$. One can express the variational bound in information-theoretic \citep{alemi2018fixing} terms as
\begin{equation*}
    \int p_d(x_\mcs) \mcl(x_\mcs) \rmd x_\mcs =- D_\mcs - \beta R_\mcs 
\end{equation*}
for the rate 
$$R_\mcs =\int p_d(x_\mcs)  \KL(q_{\phi}(z|x_\mcs)|p_{\theta}(z)) \rmd x_\mcs$$ measuring the information content that is encoded from the observed modalities in $\mcs$ by $q_\phi$ into the latent representation, and the distortion 
$$D_\mcs=-\int p_d(x_\mcs) q_{\phi}(z|x_\mcs) \log p_{\theta}(x_\mcs|z) \rmd z \rmd x_\mcs$$ given as the negative reconstruction log-likelihood of the modalities in $\mcs$. While the latent variable $Z_\mcs$ that is encoded via $q_{\phi}(z|x_\mcs)$ from $X_\mcs$ can be tuned via the choice of $\beta>0$ to tradeoff compression and reconstruction of all modalities in $\mcs$ jointly, it does not explicitly optimize for cross-modal prediction of modalities not in $\mcs$. Indeed, the mixture-based variational bound differs from the above decomposition exactly by an additional cross-modal prediction or cross-distortion term 
$$D^c_{\sm \mcs}=-\int p_d(x) q_{\phi}(z_\mcs|x_{\mcs}) \log p_{\theta}(x_{\sm \mcs}|z_\mcs) \rmd z_\mcs \rmd x,$$ 
thereby explicitly optimizing for both self-reconstruction of a modality subset and cross-modal prediction within a single objective:
\begin{equation}
    \int p_d(\rmd x)\mcl^{\text{Mix}}_{\mcs}(x) = -D_\mcs - D^c_{\sm \mcs} - \beta R_{\mcs}. \label{eq:mixuture_info_decomp}
\end{equation}

Instead of adding an explicit cross-modal prediction term, we consider an additional variational objective with a second latent variable $Z_\mcm$ that encodes all observed modalities $X=X_\mcm$ and tries to reconstruct the remaining modality subset. Unlike the latent variable $Z_\mcs$ in \eqref{eq:mixture_bound} and \eqref{eq:marginal_bound} that can only encode incomplete information using the modalities $\mcs$, this second latent variable $Z_\mcm$ can encode modality-specific information from all observed modalities, thereby avoiding the averaging prediction in the mixture-based bound, see Figure \ref{fig:recon_masked}.

Ideally, we may like to consider an additional variational objective that lower bounds the conditional log-likelihood $\log p_\theta(x_{\sm \mcs}|x_\mcs)$ so that maximizing the sum of both bounds maximizes a lower bound of the multi-modal log-likelihood $\log p_\theta(x_\mcs, x_{\sm \mcs})= \log p_\theta(x_\mcs) + \log p_\theta(x_{\sm \mcs}|x_\mcs)$. To motivate such a conditional variational objective, note that maximizing the variational bound \eqref{eq:marginal_bound} with infinite capacity encoders yields $q_\phi(z|x_\mcs)=p_\theta(z|x_\mcs)$. This suggests replacing the intractable posterior $p_\theta(z|x_\mcs)$ with the encoder $q_\phi(z|x_\mcs)$ for the probabilistic model when conditioned on $x_\mcs$. A variational objective under this replacement then becomes
\begin{equation}
    \mcl_{\sm \mcs}(x,\theta,\phi, \beta)=\int q_{\phi}(z|x) \left[ \log p_{\theta}(x_{\sm \mcs}|z) \right] \rmd z - \beta \KL(q_{\phi}(z|x)|q_{\phi}(z|x_{\mcs})).
    \label{eq:conditional_bound}
\end{equation} 
However, the above only approximates a lower bound on the conditional log-likelihood $\log p_\theta(x_{\sm \mcs}|x_\mcs)$, and $\mcl_{\sm \mcs}$ is a lower bound only under idealized conditions. We will make 
these approximations more precise in Section \ref{subsec:matching}, where we illustrate how these bounds yield to a matching of different distributions in the latent or data space, while Section \ref{subsec:info} provides an information-theoretic interpretation of these variational objectives. The introduction of a second latent variable leads to two KL regularization terms, see Figure \ref{fig:reg_masked}, that do not satisfy a triangle inequality $ \KL(q_{\phi}(z|x)|q_{\phi}(z|x_{\mcs})) + \KL(q_{\phi}(z|x_{\mcs})|p_{\theta}(z)) \ngeq \KL(q_{\phi}(z|x) | p_\theta(z))$.

In summary, for some fixed density $\rho$ on $\mcp(\mcm)$, we suggest to maximize the overall bound 
$$\mcl(x, \theta, \phi, \beta)=\int \rho(\mcs) \left[\mcl_{\mcs}(x_{\mcs},\theta,\phi,\beta) + \mcl_{\sm \mcs}(x,\theta,\phi,\beta) \right] \rmd \mcs, \label{eq:masked_bound}$$
with respect to $\theta$ and $\phi$, which is a generalization of the bound suggested in \citet{wu2019multimodal} to an arbitrary number of modalities. This bound can be optimized using standard Monte Carlo techniques, for example, by computing unbiased pathwise gradients \citep{kingma2014adam, rezende2014stochastic, titsias2014doubly} using the reparameterization trick. For variational families such as Gaussian mixtures\footnote{For MoE aggregation schemes, \citet{shi2019variational} considered a stratified ELBO estimator as well as a tighter bound based on importance sampling, see also \citet{morningstar2021automatic}, that we do not pursue here for consistency with other aggregation schemes that can likewise be optimized based on importance sampling ideas.}, one can employ implicit reparameterization \citep{figurnov2018implicit}. It is straightforward to adapt variance reduction techniques such as ignoring the scoring term of the multi-modal encoding densities for pathwise gradients \citep{roeder2017sticking}, see Algorithm \ref{alg:implementation} in Appendix \ref{app:stl_grad} for pseudo-code. Nevertheless, a scalable approach requires an encoding technique that allows to condition on any masked modalities with a computational complexity that does not increase exponentially in $M$. We will analyse scalable model architectures in Section \ref{sec:pi}.

\subsection{Multi-modal distribution matching} \label{subsec:matching}
Likelihood-based learning approaches aim to match the model distribution $p_\theta (x)$ to the true data distribution $p_d(x)$. Variational approaches achieve this by matching in the latent space the encoding distribution to the true posterior as well as maximizing a tight lower bound on $\log p_{\theta}(x)$, see \citet{rosca2018distribution}.
These types of analyses have proved useful for uni-modal VAEs as they can provide some insights as to why VAEs may lead to worse generative sample quality compared to other generative models such as GANs \citep{goodfellow2014generative} or may fail to learn useful latent representations \citep{zhao2019infovae, dieng2019avoiding}. We show similar results for the multi-modal variational objectives. This suggests that limitations from uni-modal VAEs also affect multi-modal VAEs, but also that previous attempts to address these shortcomings in uni-modal VAEs may benefit multi-modal VAEs. In particular, mismatches between the prior and the aggregated prior for uni-modal VAEs that result in poor unconditional generation have a natural counterpart for cross-modal generations with multi-modal VAEs that may potentially be reduced using more flexible conditional prior distributions, see Remark \ref{remark:approximate}, or via adding additional mutual information regularising terms \citep{zhao2019infovae}, see Remark \ref{remark:holes}. Given these results, it is neither surprising that multi-modal diffusion models such as \citet{bounoua2023multi, bao2023one} yield improved sample quality, nor that sample quality can be improved by augmenting multi-modal VAEs with diffusion models \citep{pandey2022diffusevae, palumbo2024deep}.  

We consider the densities 
$$p_{\theta}(z,x)=p_{\theta}(z)p_{\theta}(x_{ \mcs}|z)p_{\theta}(x_{\sm \mcs}|z)$$ and 
$$q_{\phi}(z_\mcs,x)=p_d(x_{\mcs}) q_{\phi}(z_\mcs|x_{\mcs}).$$ The latter is the encoding path comprising the encoding density $q_{\phi}$ conditioned on $x_\mcs$ and the empirical density $p_d$. We write
$$q_{\phi,\mcs}^{\text{agg}}(z)=\int p_d(x_\mcs) q_\phi(z|x_\mcs) \rmd x_\mcs $$ for the aggregated prior \citep{makhzani2016adversarial, hoffman2016elbo, tomczak2017vae} restricted on modalities from $\mcs$ and $q^\star(x_\mcs|z) = q_\phi(x_\mcs,z)/q_{\phi}^{\text{agg}}(z)$ and likewise consider its conditional version,
 $$q_{\phi, \sm \mcs}^{\text{agg}}(z|x_\mcs) = \int p_d(x_{\sm \mcs}|x_\mcs) q_\phi(z|x) \rmd x_{\sm \mcs}$$ for an aggregated encoder conditioned on  $x_\mcs$. We provide a multi-modal ELBO surgery, summarized in Proposition \ref{lemma:variational_bound_matching} below. 
It implies   
that maximizing $\int p_d(x_\mcs) \mcl_\mcs(x_\mcs,\theta,\phi) \rmd x_\mcs$ drives 
\begin{enumerate}[topsep=0pt,itemsep=-1ex,partopsep=1ex,parsep=1ex]
\item the joint inference distribution $q_{\phi}(z,x_\mcs)=p_d(x_\mcs)q_\phi(z|x_\mcs)$ of the $\mcs$ submodalities to the joint generative distribution $p_{\theta}(z,x_\mcs)=p_\theta(z)p_\theta(x_\mcs|z)$ and 
\item the generative marginal $p_{\theta}(x_\mcs)$ to its empirical counterpart $p_d(x_\mcs)$. 
\end{enumerate}
Analogously, maximizing $\int p_d(x_{\sm \mcs}|x_\mcs) \mcl_{\sm \mcs}(x,\theta,\phi) \rmd x_{\sm \mcs}$ drives, for fixed $x_\mcs$,
\begin{enumerate}[topsep=0pt,itemsep=-1ex,partopsep=1ex,parsep=1ex]
\item the distribution $p_d(x_{\sm \mcs}|x_\mcs)q_\phi(z|x)$ to the distribution $p_\theta(x_{\sm \mcs}|z)q_\phi(z|x_\mcs)$ and 
\item the conditional $p_{\theta}(x_{\sm \mcs}|x_\mcs)$ to its empirical counterpart $p_d(x_{\sm \mcs}|x_\mcs)$. 
\end{enumerate}Furthermore, it shows that maximizing $\mcl_{\sm \mcs}(x,\theta,\phi)$ minimizes a Bayes-consistency matching term $\KL(q_{\phi, \sm \mcs}^{\text{agg}}(z|x_\mcs) | q_{\phi}(z|x_{ \mcs}))$ for the multi-modal encoders where a mismatch can yield poor cross-generation, as an analog of the prior not matching the aggregated posterior leading to poor unconditional generation, see Remark \ref{remark:holes}.

\begin{proposition}[Marginal and conditional distribution matching]\label{lemma:variational_bound_matching}
For any $\mcs \in \mathcal{P}(\mcm)$, we have 
\begin{align}
&\int p_d(x_{\mcs}) \mcl_{\mcs}(x_{\mcs}, \theta, \phi) \rmd x_{\mcs} + \mch (p_d(x_{\mcs})) \nonumber\\
 =&   - \KL(q_{\phi}(z,x_{\mcs})| p_{\theta}(z,x_{\mcs})) \tag{$\msz \msx_{\text{marginal}}$} \\
 =& - \KL (p_d(x_{\mcs}) | p_{\theta}(x_{\mcs})) - \int p_d(x_{\mcs}) \KL(q_{\phi}(z|x_{\mcs})|p_{\theta}(z|x_{\mcs})) \rmd x_{\mcs} \tag{$\msx_{\text{marginal}}$} \label{eq:x_marginal_matching} \\
 =& - \KL (q_{\phi,\mcs}^{\text{agg}}(z) | p_{\theta}(z)) - \int q_{\phi,\mcs}^{\text{agg}}(z)  \KL(q^\star(x_\mcs|z)|p_{\theta}(x_{\mcs}|z)) \rmd x_{\mcs} \tag{$\msz_{\text{marginal}}$} , \label{eq:z_marginal_matching}
\end{align}
where $q^\star(x_\mcs|z) = q_\phi(x_\mcs,z)/q_{\phi}^{\text{agg}}(z)$.
Moreover, for fixed $x_{\mcs}$,
\begin{align}
&\int p_d(x_{\sm \mcs}|x_{\mcs}) \mcl_{\sm \mcs}(x, \theta, \phi) \rmd x_{\sm \mcs} + \mch (p_d(x_{\sm \mcs} |x_{\mcs})) \nonumber \\
=& - \KL \left( q_{\phi}(z|x) p_d(x_{\sm \mcs}|x_{\mcs}) \big| p_{\theta}(x_{\sm \mcs}|z) q_{\phi}(z|x_{\mcs}) \right)  \tag{$\msz \msx_{\text{conditional}}$} \label{eq:zx_cond_matching} \\
=&  -  \KL(p_d(x_{\sm \mcs}|x_{\mcs}) | p_{\theta}(x_{\sm \mcs}|x_{\mcs})) \tag{$\msx_{\text{conditional}}$} \label{eq:x_cond_matching}\\
\phantom{=} & - \int p_d(x_{\sm \mcs}|x_{\mcs}) \left(  \KL(q_{\phi}(z|x)|p_{\theta}(z|x)) - \int q_{\phi}(z|x) \log \frac{q_{\phi}(z|x_{\mcs})}{p_{\theta}(z|x_{\mcs})} \rmd z \right) \rmd x_{\sm \mcs} \nonumber \\
=&  - \KL(q_{\phi, \sm \mcs}^{\text{agg}}(z|x_\mcs) | q_{\phi}(z|x_{ \mcs})) \tag{$\msz_{\text{conditional}}$} \label{eq:z_cond_matching} 
- \int q_{\phi, \sm \mcs}^{\text{agg}}(z|x_\mcs)  \left(  \KL(q^\star(x_{\sm \mcs}|z,x_\mcs) | p_{\theta}(x_{\sm \mcs}|z)) \right) \rmd z \nonumber,
\end{align}
where  $q^\star(x_{\sm \mcs}|z,x_\mcs) = q_{\phi}(z,x_{\sm \mcs}|x_\mcs)/ q^{\text{agg}}_{\phi,\sm \mcs}(z|x_\mcs) = p_d(x_{\sm \mcs} | x_\mcs) q_{\phi}(z|x)/ q^{\text{agg}}_{\phi,\sm \mcs}(z|x_\mcs)$.
\end{proposition}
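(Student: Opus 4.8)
The plan is to treat each half of the statement as a multi-modal ``ELBO surgery'': I add the (conditional) data entropy to the expected bound, recognize the result as a single negative joint Kullback--Leibler divergence --- the $\msz\msx$ forms --- and then read off the remaining two expressions by applying the chain rule for the KL divergence to two different factorizations of the two joint densities involved, once along the data marginal and once along the latent marginal. All three equalities are exact algebraic identities, so no inequality or idealized assumption enters the proof itself.

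For the marginal block I set $q_{\phi}(z,x_\mcs)=p_d(x_\mcs)q_{\phi}(z|x_\mcs)$ and $p_{\theta}(z,x_\mcs)=p_{\theta}(z)p_{\theta}(x_\mcs|z)$. Expanding $-\KL(q_{\phi}(z,x_\mcs)|p_{\theta}(z,x_\mcs))$ and splitting the logarithm of the ratio, the $\log p_d(x_\mcs)$ contribution integrates to $\mch(p_d(x_\mcs))$ (the $z$-integral of $q_{\phi}(z|x_\mcs)$ being one), while the three remaining terms reassemble into $\int p_d(x_\mcs)\mcl_\mcs(x_\mcs,\theta,\phi)\rmd x_\mcs$; this yields $(\msz\msx_{\text{marginal}})$. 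Factorizing the \emph{same} joint KL along the $x_\mcs$-marginal, $p_{\theta}(z,x_\mcs)=p_{\theta}(x_\mcs)p_{\theta}(z|x_\mcs)$, gives \eqref{eq:x_marginal_matching}; factorizing instead along the $z$-marginal, $q_{\phi}(z,x_\mcs)=q_{\phi,\mcs}^{\text{agg}}(z)\,q^\star(x_\mcs|z)$, gives \eqref{eq:z_marginal_matching}.

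The conditional block is analogous but carries two wrinkles. First I check that, for fixed $x_\mcs$, both $\tilde q(z,x_{\sm \mcs}):=p_d(x_{\sm \mcs}|x_\mcs)q_{\phi}(z|x)$ and $\tilde p(z,x_{\sm \mcs}):=p_{\theta}(x_{\sm \mcs}|z)q_{\phi}(z|x_\mcs)$ are genuine densities in $(z,x_{\sm \mcs})$ --- the nontrivial point being $\int p_{\theta}(x_{\sm \mcs}|z)\rmd x_{\sm \mcs}=1$ --- so that the KL and its chain rule are legitimate; the direct expansion of $-\KL(\tilde q|\tilde p)$ then produces \eqref{eq:zx_cond_matching}, with the conditional entropy $\mch(p_d(x_{\sm \mcs}|x_\mcs))$ in place of the data entropy. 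The $z$-factorization is immediate: the $z$-marginals of $\tilde q,\tilde p$ are $q_{\phi,\sm \mcs}^{\text{agg}}(z|x_\mcs)$ and $q_{\phi}(z|x_\mcs)$, and their $z$-conditionals are $q^\star(x_{\sm \mcs}|z,x_\mcs)$ and $p_{\theta}(x_{\sm \mcs}|z)$, which assembles \eqref{eq:z_cond_matching}.

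I expect the only non-mechanical step to be \eqref{eq:x_cond_matching}. Since $\tilde p$ mixes the encoder $q_{\phi}(z|x_\mcs)$ with the decoder $p_{\theta}(x_{\sm \mcs}|z)$, one cannot read off a clean model conditional by inspection along the $x_{\sm \mcs}$-marginal. The crucial ingredient is the generative conditional-independence identity $p_{\theta}(x_{\sm \mcs}|z)=p_{\theta}(x_{\sm \mcs}|x_\mcs)\,p_{\theta}(z|x)/p_{\theta}(z|x_\mcs)$, which follows from $p_{\theta}(z,x)=p_{\theta}(z)p_{\theta}(x_\mcs|z)p_{\theta}(x_{\sm \mcs}|z)$ together with Bayes' rule. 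Substituting it into the log-ratio of $\tilde q$ to $\tilde p$ makes the three factors $p_d(x_{\sm \mcs}|x_\mcs)/p_{\theta}(x_{\sm \mcs}|x_\mcs)$, $q_{\phi}(z|x)/p_{\theta}(z|x)$ and $q_{\phi}(z|x_\mcs)/p_{\theta}(z|x_\mcs)$ appear; integrating against $\tilde q$, using $\int q_{\phi}(z|x)\rmd z=1$ on the first factor, then delivers exactly the three terms of \eqref{eq:x_cond_matching}.
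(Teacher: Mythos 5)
Your proposal is correct and follows essentially the same route as the paper's proof: add the (conditional) entropy, identify the result as a negative joint KL divergence, and obtain the remaining identities from the two chain-rule factorizations of that joint KL, with the $\msx_{\text{conditional}}$ case resting on the same conditional-independence/Bayes identity $p_{\theta}(x_{\sm \mcs}|z)\,p_{\theta}(z|x_{\mcs}) = p_{\theta}(x_{\sm \mcs}|x_{\mcs})\,p_{\theta}(z|x)$ that the paper invokes. The only cosmetic difference is that you derive the marginal block explicitly, whereas the paper cites it as a known uni-modal ELBO-surgery result.
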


If $q_\phi(z|x_\mcs)$ approximates $p_\theta(z|x_\mcs)$ exactly, Proposition \ref{lemma:variational_bound_matching} implies that $\mcl_{\sm \mcs}(x,\theta,\phi)$ is a lower bound of $\log p_{\theta}(x_{\sm \mcs}|x_{\mcs})$. 
More precisely, we obtain the following log-likelihood approximation.

\begin{corollary}[Multi-modal log-likelihood approximation] \label{cor:tight}
For any modality mask $\mcs$, we have
\begin{align*}
&\int p_d(x) \left[ \mcl_\mcs(x_\mcs, \theta, \phi, 1) + \mcl_{\sm \mcs}(x, \theta, \phi, 1)  \right] \rmd x - \int p_d(x) \left[\log p_{\theta}(x) \right] \rmd x \\
=& -\int p_d(x_\mcs)  \left[\KL(q_\phi(z|x_\mcs)|p_{\theta}(z|x_\mcs)) \right]  \rmd x - \int p_d(x)  \left[\KL(q_\phi(z|x)|p_{\theta}(z|x)) \right]  \rmd x \\& + \int p_d(x)  q_\phi(z|x )\left[\log \frac{q_\phi(z|x_\mcs)}{p_{\theta}(z|x_\mcs)} \right]  \rmd z \rmd x .
\end{align*}
\end{corollary}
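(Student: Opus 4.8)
The plan is to assemble the claimed identity directly from the two middle rows of Proposition \ref{lemma:variational_bound_matching}, namely the marginal decomposition \eqref{eq:x_marginal_matching} and the conditional decomposition \eqref{eq:x_cond_matching}, and then to convert the resulting entropy-plus-divergence terms into expected log-likelihoods. The only genuinely structural input beyond Proposition \ref{lemma:variational_bound_matching} is the chain-rule factorization $p_{\theta}(x)=p_{\theta}(x_{\mcs})\,p_{\theta}(x_{\sm \mcs}|x_{\mcs})$ of the generative model, which is what lets the two separate log-likelihood contributions recombine into a single $\log p_{\theta}(x)$.

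First, I would handle the marginal term. Starting from \eqref{eq:x_marginal_matching}, I move $\mch(p_d(x_{\mcs}))$ to the right-hand side and use the elementary identity $-\mch(p_d(x_{\mcs}))-\KL(p_d(x_{\mcs})|p_{\theta}(x_{\mcs}))=\int p_d(x_{\mcs})\log p_{\theta}(x_{\mcs})\,\rmd x_{\mcs}$, which follows by writing both the entropy and the divergence as integrals against $p_d(x_{\mcs})$ and cancelling the $\log p_d(x_{\mcs})$ pieces. This yields
\begin{equation*}
\int p_d(x_{\mcs})\,\mcl_{\mcs}(x_{\mcs},\theta,\phi,1)\,\rmd x_{\mcs}=\int p_d(x_{\mcs})\log p_{\theta}(x_{\mcs})\,\rmd x_{\mcs}-\int p_d(x_{\mcs})\,\KL(q_{\phi}(z|x_{\mcs})|p_{\theta}(z|x_{\mcs}))\,\rmd x_{\mcs}.
\end{equation*}

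Second, I would treat the conditional term analogously. For fixed $x_{\mcs}$, moving $\mch(p_d(x_{\sm \mcs}|x_{\mcs}))$ to the right in \eqref{eq:x_cond_matching} and applying the same entropy/divergence cancellation at the conditional level expresses $\int p_d(x_{\sm \mcs}|x_{\mcs})\,\mcl_{\sm \mcs}(x,\theta,\phi,1)\,\rmd x_{\sm \mcs}$ in terms of $\int p_d(x_{\sm \mcs}|x_{\mcs})\log p_{\theta}(x_{\sm \mcs}|x_{\mcs})\,\rmd x_{\sm \mcs}$ together with the two correction integrals carrying $\KL(q_{\phi}(z|x)|p_{\theta}(z|x))$ and $\int q_{\phi}(z|x)\log\frac{q_{\phi}(z|x_{\mcs})}{p_{\theta}(z|x_{\mcs})}\,\rmd z$. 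I would then integrate this over $x_{\mcs}$ against $p_d(x_{\mcs})$ and invoke $p_d(x)=p_d(x_{\mcs})\,p_d(x_{\sm \mcs}|x_{\mcs})$ to rewrite each nested double integral as a single integral against $p_d(x)$, noting that the signs after distributing the minus sign in \eqref{eq:x_cond_matching} produce exactly the last two terms of the statement.

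Finally, I would add the two integrated identities. The two log-likelihood pieces combine as $\int p_d(x)\left[\log p_{\theta}(x_{\mcs})+\log p_{\theta}(x_{\sm \mcs}|x_{\mcs})\right]\rmd x=\int p_d(x)\log p_{\theta}(x)\,\rmd x$ by the generative factorization, and subtracting this term from both sides leaves precisely the three posterior-matching terms claimed. I do not anticipate a real obstacle here, since Proposition \ref{lemma:variational_bound_matching} already does the analytic work; the only thing to watch is bookkeeping, i.e.\ keeping the marginal entropy $\mch(p_d(x_{\mcs}))$ and the conditional entropy $\mch(p_d(x_{\sm \mcs}|x_{\mcs}))$ on the correct sides so that both data-versus-model divergences convert cleanly into expected log-likelihoods, and applying the passage from the nested integral $\int p_d(x_{\mcs})\int p_d(x_{\sm \mcs}|x_{\mcs})(\cdots)\,\rmd x_{\sm \mcs}\,\rmd x_{\mcs}$ to $\int p_d(x)(\cdots)\,\rmd x$ consistently across every term.
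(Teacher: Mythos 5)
Your proposal is correct and follows exactly the route the paper takes: its proof of Corollary \ref{cor:tight} is the one-line observation that it follows from \eqref{eq:x_marginal_matching} and \eqref{eq:x_cond_matching}, and your argument simply spells out the entropy-plus-KL-to-log-likelihood conversion, the integration against $p_d(x_\mcs)$, and the chain-rule recombination $\log p_{\theta}(x_\mcs)+\log p_{\theta}(x_{\sm\mcs}|x_\mcs)=\log p_{\theta}(x)$ that the paper leaves implicit. The bookkeeping is accurate, so nothing needs to change.
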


\begin{proof}
This follows from \eqref{eq:x_marginal_matching} and \eqref{eq:x_cond_matching}.
\end{proof}

Our approach recovers meta-learning with (latent) Neural processes \citep{garnelo2018neural} when one optimizes only $\mcl_{\sm \mcs}$ with $\mcs$ determined by context-target splits, cf. Appendix \ref{app:neural_process}. Our analysis implies that $\mcl_\mcs+\mcl_{\sm \mcs}$ is an approximation of a lower bound on the multi-modal log-likelihood that becomes tight for infinite-capacity encoders so that $q_\phi(z|x_\mcs)=p_{\theta}(z|x_\mcs)$ and $q_\phi(z|x)=p_{\theta}(z|x)$, see Remarks \ref{remark:approximate} and \ref{remark:mixture_lower_bound} for details.

\begin{remark}[Log-Likelihood approximation and Empirical Bayes]\label{remark:approximate}\normalfont
The term $$\int p_d(x)  q_\phi(z|x )\left[\log \frac{q_\phi(z|x_\mcs)}{p_{\theta}(z|x_\mcs)} \right]  \rmd z \rmd x$$ arising in Corollary \ref{cor:tight} and in \eqref{eq:x_cond_matching} is not necessarily negative. Analogous to other variational approaches for learning conditional distributions such as latent Neural processes, our bound becomes an approximation of a lower bound. Note that $\mcl_\mcs$ is maximized when $q_\phi(z|x_\mcs)=p_{\theta}(z|x_\mcs)$, see \eqref{eq:x_marginal_matching}, which implies a lower bound in Corollary \ref{cor:tight} of
$$\int p_d(x) \left[ \mcl_\mcs(x_\mcs, \theta, \phi, 1) + \mcl_{\sm \mcs}(x, \theta, \phi, 1)  \right] \rmd x = \int p_d(x) \left[\log p_{\theta}(x)-\KL(q_\phi(z|x)|p_{\theta}(z|x)) \right] \rmd x.$$
We can re-write the conditional expectation of $\mcl_{\sm \mcs}$ for any fixed $x_\mathcal{S}$ as
\begin{align*}  \int p_d(x_{\setminus \mathcal{S}}|x_{\mathcal{S}})  \mcl_{\sm \mcs}(x,\theta,\phi, 1) \rmd x_{\setminus \mathcal{S}} =&\int p_d(x_{\setminus \mathcal{S}}|x_{\mathcal{S}})  q_{\phi}(z|x)  \log p_{\theta}(x_{\sm \mcs}|z) \rmd z \rmd x_{\setminus \mathcal{S}} +  p_d(x_{\setminus \mathcal{S}}|x_{\mathcal{S}}) \mathcal{H}(q_\phi(z|x)) \rmd x_{\setminus \mathcal{S}} \\
&+ \int q_{\phi, \sm \mcs}^{\text{agg}}(z|x_\mcs) \log q_\phi(z|x_\mcs) \rmd z.
\end{align*}
Whenever $q_\phi(z|x_\mcs)$ can be learned independently from $q_\phi(z|x)$, the above is maximized for
$$q_\phi(z|x_\mcs)= \int p_d(x_{\sm \mcs}|x_\mcs) q_\phi(z|x) \rmd x_{\sm \mcs} = q_{\phi, \sm \mcs}^{\text{agg}}(z|x_\mcs).$$ 
From a different perspective, we can consider an Empirical Bayes viewpoint \citep{ robbins1992empirical,wang2019comment} wherein one chooses the hyperparameters of the (conditional) prior so that it maximizes an approximation of the conditional log-likelihood $\log p_\theta(x_{\sm \mcs}|x_\mcs)$. The conditional prior $p^\star_{\vartheta}(z|x_{\mcs})$ in the corresponding conditional ELBO term 
\begin{equation}
    \mcl^\star_{\sm \mcs}(x,\theta,\phi, \vartheta, \beta)=\int q_{\phi}(z|x) \left[ \log p_{\theta}(x_{\sm \mcs}|z) \right] \rmd z - \beta \KL(q_{\phi}(z|x)|p^\star_{\vartheta}(z|x_{\mcs})).
\end{equation} 
can thus be seen as a learned prior having the parameter $\vartheta=\vartheta(\mcd)$ that is learned by maximizing the above variational approximation of $\log p_\theta(x_{\sm \mcs}|x_\mcs)$ over $x\sim p_d$ for $\beta=1$, and as such depends on the empirical multi-modal dataset $\mcd$.  While the aggregated prior $q_{\phi,\sm \mcs}^{\text{agg}}(z|x_\mcs)$ is the optimal learned prior when maximizing $\mcl_{\sm \mcs}$, this choice can lead to overfitting. Moreover, computation of the aggregated prior, or sparse approximations thereof, such as variational mixture of posteriors prior (VampPrior) in \citet{tomczak2017vae}, are challenging in the conditional setup. Previous constructions in this direction \citep{joy2021learning} for learning priors for bi-modal data only considered unconditional versions, wherein pseudo-samples are not dependent on some condition $x_\mcs$. While our permutation-invariant architectures introduced below may be used for flexibly parameterizing conditional prior distributions $p^\star_{\vartheta}(z|x_{\mcs})$ as a function of $x_\mcs$ with a model that is different from the encoding distributions $q_{\phi}(z|x_\mcs)$, we contend ourselves with choosing the same model for both the conditional prior and the encoding distribution, $p^\star_{\vartheta}(z|x_{\mcs}) = q_\phi(z|x_\mcs)$. Note that the encoding distribution then features both bounds, which encourages learning encoding distributions that perform well as conditional priors and as encoding distributions. In the ideal scenario where both the generative and inference models have the flexibility to satisfy $p_d(x_{\sm \mcs}|x_\mcs)=p_\theta(x_{\sm \mcs}|x_\mcs)$, $q_\phi(z|x_\mcs)=p_\theta(z|x_\mcs)$ and $q_\phi(z|x_\mcs)=p_\theta(z|x_\mcs)$, then the optimal conditional prior distribution is 
$$q_{\phi, \sm \mcs}^{\text{agg}}(z|x_\mcs)= \int p_\theta(x_{\sm \mcs}|x_\mcs) p_\theta(z|x) \rmd x_{\sm \mcs} = \int p_\theta(x_{\sm \mcs}|x_\mcs) \frac{p_\theta(z|x_\mcs) p_\theta(x_{\sm \mcs}|x_\mcs)}{p_\theta(x_{\sm \mcs}|x_\mcs)} \rmd x_{\sm \mcs} =  p_\theta(z|x_\mcs)= q_\phi(z|x_\mcs). $$

\end{remark}

\begin{remark}[Prior-hole problem and Bayes or conditional consistency] \normalfont \label{remark:holes}
    In the uni-modal setting, the mismatch between the prior and the aggregated prior can be large and can lead to poor unconditional generative performance because this would lead to high-probability regions under the prior that have not been trained due to their small mass under the aggregated prior \citep{hoffman2016elbo, rosca2018distribution}. Equation \eqref{eq:z_marginal_matching} extends this to the multi-modal case, and we expect that unconditional generation can be poor if this mismatch is large. Moreover, \eqref{eq:z_cond_matching} extends this conditioned on some modality subset, and we expect that cross-generation for $x_{\sm \mcs}$ conditional on $x_\mcs$ can be poor if the mismatch between $q_{\phi, \sm \mcs}^{\text{agg}}(z|x_\mcs)$ and $ q_{\phi}(z|x_{ \mcs})$ is large for $x_\mcs \sim p_d$, because high-probability regions under $q_{\phi}(z|x_\mcs)$ will not have been trained - via optimizing $\mcl_{\sm \mcs}(x)$ - to model $x_{\sm \mcs}$ conditional on $x_\mcs$, due to their small mass under $q_{\phi, \sm \mcs}^{\text{agg}}(z|x_\mcs)$. The mismatch will vanish when the encoders are consistent and correspond to a single Bayesian model where they approximate the true posterior distributions. A potential approach to reduce this mismatch may be to include as a regulariser the divergence between them that can be optimized by likelihood-free techniques, such as the Maximum-Mean Discrepancy \citep{gretton2006kernel}, as in \citet{zhao2019infovae} for uni-modal or unconditional models. For the mixture-based bound, the same distribution mismatch affects unconditional generation, while both the training and generative sampling distribution is $q_\phi(z|x_\mcs)$ for cross-generation.
\end{remark}

\begin{remark}[Variational gap for mixture-based bounds] \label{remark:mixture_lower_bound} \normalfont
Corollary \ref{cor:tight} shows that the variational objective can become a tight bound in the limiting case where the encoding distributions approximate the true posterior distributions. A similar result does not hold for the mixture-based multi-modal bound. 
Moreover, our bound can be tight for an arbitrary number of modalities in the limiting case of infinite-capacity encoders. In contrast, \citet{daunhawer2022limitations} show that for mixture-based bounds, this variational gap increases with each additional modality if the new modality is 'sufficiently diverse', even for infinite-capacity encoders.
\end{remark}

\begin{remark}[Optimization, multi-task learning and the choice of $\rho$] \normalfont
For simplicity, we have chosen to sample $\mcs \sim \rho$ in our experiments via the hierarchical construction $\gamma \sim \mcu(0,1)$, $m_j\sim \text{Bern}(\gamma)$ iid for all $j \in [M]$ and setting $\mcs = \{ s \in [M] \colon m_j=1\}$.
The distribution $\rho$ for masking the modalities can be adjusted to accommodate various weights for different modality subsets. Indeed, \eqref{eq:masked_bound} can be seen as a linear scalarization of a multi-task learning problem \citep{fliege2000steepest, sener2018multi}. We aim to optimize a loss vector $(\mcl_\mcs + \mcl_{\sm \mcs})_{\mcs \subset \mcm}$, where the gradients for each $\mcs \subset \mcm$ can point in different directions, making it challenging to minimize the loss for all modalities simultaneously. Consequently, \citet{javaloy2022mitigating} used multi-task learning techniques (e.g., as suggested in \citet{chen2018gradnorm, yu2020gradient}) for adjusting the gradients in mixture-based VAEs. Such improved optimization routines are orthogonal to our approach. Similarly, we do not analyze optimization issues such as initializations and training dynamics that have been found challenging for multi-modal learning \citep{wang2020makes,huang2022modality}.
\end{remark}

\subsection{Information-theoretic perspective} \label{subsec:info}
Beyond generative modeling, $\beta$-VAEs \citep{higgins2017beta} have been popular for representation learning and data reconstruction. \citet{alemi2018fixing} suggest learning a latent representation that achieves certain mutual information with the data based on upper and lower variational bounds of the mutual information. A Legendre transformation thereof recovers the $\beta$-VAE objective and allows a trade-off between information content or rate versus reconstruction quality or distortion. We show that the proposed variational objective gives rise to an analogous perspective for multiple modalities. Recall that the mutual information on the inference path\footnote{We include the conditioning modalities as an index for the latent variable $Z$ when the conditioning set is unclear.} is given by
$$  \I_{q_\phi}(X_\mcs,Z_\mcs) = \int q_\phi(x_\mcs,z_\mcs) \log \frac{q_{\phi}(x_\mcs,z_\mcs)}{p_d(x_\mcs) q_{\phi,\mcs}^{\text{agg}}(z_\mcs)} \rmd z_\mcs \rmd x_\mcs,$$ can be bounded by standard \citep{barber2004algorithm, alemi2016deep, alemi2018fixing} lower and upper bounds:
\begin{equation}
\mch_\mcs - D_\mcs \leq \mch_\mcs - D_\mcs + \Delta_1 = \I_{q_\phi}(X_\mcs,Z_\mcs) = R_\mcs - 
\Delta_2 \leq R_\mcs , \label{eq:mutual_information_bounds}
\end{equation}
with $\Delta_1, \Delta_2 \geq 0$ and $\mch_\mcs \leq R_\mcs + D_\mcs$. For details, see Appendix \ref{app:info}. Consequently, by tuning $\beta$, we can vary upper and lower bounds of $\I_{q_\phi}(X_\mcs,Z_\mcs)$ to tradeoff between compressing and reconstructing $X_\mcs$.

To arrive at a similar interpretation for the conditional bound $\mcl_{\sm \mcs}$ that involves the conditional mutual information
$$  \I_{q_\phi}(X_{\sm \mcs},Z_\mcm| X_\mcs) = \int 
p_d(x_\mcs) \KL(p_d(x_{\sm \mcs},z_\mcm|x_\mcs)) |  p_d(x_{\sm \mcs}|x_\mcs) q_{\phi, \sm \mcs}^{\text{agg}}(z_\mcm|x_\mcs)) \rmd x_\mcs $$
recalling that $q_{\phi, \sm \mcs}^{\text{agg}}(z_\mcm|x_\mcs) = \int p_d(x_{\sm \mcs}|x_\mcs) q_\phi(z_\mcm|x) \rmd x_{\sm \mcs}$, we set 
$$R_{\sm \mcs} =\int p_d(x) \KL(q_{\phi}(z|x)|q_{\phi}(z|x_\mcs))  \rmd x$$
for a conditional or cross rate. 
Similarly, set 
$$D_{\sm \mcs}=-\int p_d(x) q_{\phi}(z|x) \log p_{\theta}(x_{\sm \mcs}|z) \rmd z \rmd x.$$ One obtains the following bounds, see Appendix \ref{app:info}.

\begin{lemma}[Variational bounds on the conditional mutual information]\label{lemma:conditional_mutual_information}
It holds that 
$$-\int \mcl_{\sm \mcs}(x, \theta, \phi,\beta)p_d(\rmd x) = D_{\sm \mcs} + \beta R_{\sm \mcs}$$ and for $\Delta_{\sm \mcs,1}, \Delta_{\sm \mcs, 2} \geq 0$,
$$\mch_{\sm \mcs}-  D_{\sm \mcs}+ \Delta_{\sm \mcs,1} = \I_{q_\phi}(X_{\sm \mcs},Z_\mcm|X_\mcs) = R_{\sm \mcs} - \Delta_{\sm \mcs, 2}.$$
\end{lemma}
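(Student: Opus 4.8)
The plan is to prove the two assertions separately, both by direct manipulation of the defining integrals, paralleling the marginal mutual-information bounds in \eqref{eq:mutual_information_bounds}. Throughout I read $\mch_{\sm \mcs}$ as the conditional data entropy $\mch(p_d(X_{\sm \mcs}|X_\mcs))$, the $x_\mcs$-conditioned analogue of $\mch_\mcs$.

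The first identity is purely a bookkeeping step. Substituting the definition of $\mcl_{\sm \mcs}$ from \eqref{eq:conditional_bound} into $-\int \mcl_{\sm \mcs}(x,\theta,\phi,\beta)\, p_d(\rmd x)$ and splitting the result into its reconstruction part and its KL part reproduces exactly the cross-distortion $D_{\sm \mcs}$ and $\beta$ times the cross-rate $R_{\sm \mcs}$ by their definitions; no inequalities are involved.

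For the variational bounds, I would first rewrite the conditional mutual information in a form where the factor $p_d(x_{\sm \mcs}|x_\mcs)$ cancels. Writing the inference-path joint as $q_\phi(x_{\sm \mcs},z|x_\mcs)=p_d(x_{\sm \mcs}|x_\mcs)\, q_\phi(z|x)$, the density ratio inside $\I_{q_\phi}(X_{\sm \mcs},Z_\mcm|X_\mcs)$ collapses to $q_\phi(z|x)/q_{\phi,\sm \mcs}^{\text{agg}}(z|x_\mcs)$, so that
$$\I_{q_\phi}(X_{\sm \mcs},Z_\mcm|X_\mcs) = \int p_d(x)\, q_\phi(z|x)\, \log \frac{q_\phi(z|x)}{q_{\phi,\sm \mcs}^{\text{agg}}(z|x_\mcs)}\, \rmd z\, \rmd x.$$
The upper bound then follows by subtracting this from $R_{\sm \mcs}$: the difference equals $\int p_d(x)\, q_\phi(z|x)\, \log\frac{q_{\phi,\sm \mcs}^{\text{agg}}(z|x_\mcs)}{q_\phi(z|x_\mcs)}\, \rmd z\, \rmd x$, and integrating out $x_{\sm \mcs}$ at fixed $x_\mcs$ against $p_d(x_{\sm \mcs}|x_\mcs)$ replaces $q_\phi(z|x)$ by its aggregate, exhibiting $\Delta_{\sm \mcs,2}=\int p_d(x_\mcs)\, \KL(q_{\phi,\sm \mcs}^{\text{agg}}(z|x_\mcs)\,|\,q_\phi(z|x_\mcs))\, \rmd x_\mcs \geq 0$ by Gibbs' inequality, giving $\I_{q_\phi}=R_{\sm \mcs}-\Delta_{\sm \mcs,2}$.

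For the lower bound I would use the conditional-entropy decomposition $\I_{q_\phi}(X_{\sm \mcs},Z_\mcm|X_\mcs)=\mch_{\sm \mcs}-\mch(X_{\sm \mcs}|Z_\mcm,X_\mcs)$, where the second term is the entropy under the exact inference-path posterior $q^\star(x_{\sm \mcs}|z,x_\mcs)$ introduced in Proposition \ref{lemma:variational_bound_matching}. Forming $\I_{q_\phi}-(\mch_{\sm \mcs}-D_{\sm \mcs})$ reduces to $\int q_\phi(x,z)\, \log\frac{q^\star(x_{\sm \mcs}|z,x_\mcs)}{p_\theta(x_{\sm \mcs}|z)}\, \rmd z\, \rmd x$, and factoring $q_\phi(x_{\sm \mcs},z|x_\mcs)=q_{\phi,\sm \mcs}^{\text{agg}}(z|x_\mcs)\, q^\star(x_{\sm \mcs}|z,x_\mcs)$ recognizes this as $\Delta_{\sm \mcs,1}=\int p_d(x_\mcs)\int q_{\phi,\sm \mcs}^{\text{agg}}(z|x_\mcs)\, \KL(q^\star(x_{\sm \mcs}|z,x_\mcs)\,|\,p_\theta(x_{\sm \mcs}|z))\, \rmd z\, \rmd x_\mcs \geq 0$, precisely the cross-distortion matching term already appearing in \eqref{eq:z_cond_matching}. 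This yields $\I_{q_\phi}=\mch_{\sm \mcs}-D_{\sm \mcs}+\Delta_{\sm \mcs,1}$. I expect the main obstacle to be purely notational: keeping the conditioning on $x_\mcs$ consistent throughout, and correctly identifying the aggregated conditional encoder $q_{\phi,\sm \mcs}^{\text{agg}}$ and the exact posterior $q^\star$ so that the two leftover terms collapse to nonnegative KL divergences. The nonnegativity itself is only Gibbs' inequality; the delicate point is that these are the $x_\mcs$-conditioned analogues of the marginal bounds, so one must aggregate by integrating $x_{\sm \mcs}$ against $p_d(\cdot|x_\mcs)$ at fixed $x_\mcs$ rather than against the full $p_d$.
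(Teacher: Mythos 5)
Your proposal is correct and takes essentially the same route as the paper's proof: both rest on the inference-path factorization $p_d(x_{\sm \mcs}|x_\mcs)\,q_\phi(z|x)=q^{\text{agg}}_{\phi,\sm \mcs}(z|x_\mcs)\,q^\star(x_{\sm \mcs}|z,x_\mcs)$, identify the first identity as definitional, and exhibit $\Delta_{\sm \mcs,1}$ and $\Delta_{\sm \mcs,2}$ as exactly the same nonnegative KL divergences. The only cosmetic difference is that for the lower bound you invoke the conditional-entropy chain rule where the paper adds and subtracts $\log p_{\theta}(x_{\sm \mcs}|z)$ inside the integrand, which is the same algebra.
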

Consequently, by tuning $\beta$, we can vary upper and lower bounds of $\I_{q_\phi}(X_{\sm \mcs},Z_\mcm|X_\mcs)$ to tradeoff between compressing relative to ${q_\phi}(\cdot|x_\mcs)$ and reconstructing $X_{\sm \mcs}$. Using the chain rules for entropy, we obtain that the suggested bound can be seen as a relaxation of bounds on marginal and conditional mutual information.
\begin{corollary}[Lagrangian relaxation]\label{cor:lagrange}
It holds that 
$$ \mch-  D_{\mcs} -  D_{\sm \mcs} \leq  \I_{q_\phi}(X_\mcs,Z_\mcs)+\I_{q_\phi}(X_{\sm \mcs},Z_\mcm|X_\mcs) \leq R_{\mcs} + R_{\sm \mcs} $$
and maximizing $\mcl$ for fixed $\beta=\frac{\partial (D_\mcs+D_{\sm \mcs})}{\partial (R_\mcs+R_{\sm \mcs})}$ minimizes the rates $R_{\mcs} + R_{\sm \mcs}$ and distortions $D_\mcs+D_{\sm \mcs}$.
\end{corollary}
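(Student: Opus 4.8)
The plan is to obtain the two-sided inequality by summing the marginal rate--distortion sandwich \eqref{eq:mutual_information_bounds} with its conditional counterpart from Lemma~\ref{lemma:conditional_mutual_information}, and then to collapse the resulting entropy terms via the chain rule; the Lagrangian statement I would treat separately as the standard rate--distortion correspondence of \citet{alemi2018fixing} transported to the present decomposition.

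First I would add the marginal bound $\mch_\mcs - D_\mcs \leq \I_{q_\phi}(X_\mcs,Z_\mcs) \leq R_\mcs$ to the conditional bound $\mch_{\sm \mcs} - D_{\sm \mcs} \leq \I_{q_\phi}(X_{\sm \mcs},Z_\mcm|X_\mcs) \leq R_{\sm \mcs}$. Because the four slacks $\Delta_1,\Delta_2,\Delta_{\sm\mcs,1},\Delta_{\sm\mcs,2}$ are all nonnegative, termwise addition of the two chains of (in)equalities is immediate and yields
$$(\mch_\mcs + \mch_{\sm \mcs}) - (D_\mcs + D_{\sm \mcs}) \leq \I_{q_\phi}(X_\mcs,Z_\mcs)+\I_{q_\phi}(X_{\sm \mcs},Z_\mcm|X_\mcs) \leq R_\mcs + R_{\sm \mcs}.$$
I would then invoke the chain rule for differential entropy, $\mch(p_d(X)) = \mch(p_d(X_\mcs)) + \mch(p_d(X_{\sm\mcs}|X_\mcs))$, i.e.\ $\mch = \mch_\mcs + \mch_{\sm\mcs}$, to rewrite the left-hand side as $\mch - D_\mcs - D_{\sm\mcs}$, which gives the displayed inequality of the corollary.

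For the second assertion I would combine the two distortion--rate identities already established: the decomposition preceding \eqref{eq:mutual_information_bounds} gives $-\int p_d(x_\mcs)\mcl_\mcs \rmd x_\mcs = D_\mcs + \beta R_\mcs$, while Lemma~\ref{lemma:conditional_mutual_information} gives $-\int p_d(x)\mcl_{\sm\mcs} \rmd x = D_{\sm\mcs} + \beta R_{\sm\mcs}$. Summing and integrating against $\rho$ yields $-\int p_d(x)\mcl \rmd x = (D_\mcs + D_{\sm\mcs}) + \beta (R_\mcs + R_{\sm\mcs})$, which is exactly the Lagrangian of the constrained problem of minimizing the total distortion $D_\mcs + D_{\sm\mcs}$ subject to a budget on the total rate $R_\mcs + R_{\sm\mcs}$, with $\beta$ playing the role of the multiplier. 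Appealing to the rate--distortion analysis of \citet{alemi2018fixing}, a maximizer of $\mcl$ at a fixed $\beta$ is a boundary (Pareto-optimal) point of the feasible $(R,D)$ region, and the stationarity condition identifies $\beta$ with the slope $\partial(D_\mcs+D_{\sm\mcs})/\partial(R_\mcs+R_{\sm\mcs})$ of the optimal frontier there; hence at this self-consistent $\beta$ the solution cannot decrease total rate and total distortion simultaneously, which is the claimed minimization.

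The termwise addition and the entropy chain rule are routine, so the only real subtlety is the Lagrangian claim: one must argue that a stationary point of $\mcl$ at the prescribed $\beta$ lands on the boundary of the achievable rate--distortion region rather than in its interior. I expect this to rest on the monotonicity and convexity of the optimal distortion as a function of rate, exactly as in the $\beta$-VAE rate--distortion picture, and I would lean on that established correspondence rather than re-derive it from scratch.
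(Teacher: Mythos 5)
Your proposal is correct and follows essentially the same route as the paper: the paper's (terse) justification is precisely to add the sandwich \eqref{eq:mutual_information_bounds} to the conditional bounds of Lemma~\ref{lemma:conditional_mutual_information} and collapse $\mch_\mcs + \mch_{\sm \mcs} = \mch$ by the entropy chain rule, which is what you do. For the Lagrangian claim, your use of the identities $-\int p_d(x_\mcs)\mcl_\mcs \,\rmd x_\mcs = D_\mcs + \beta R_\mcs$ and $-\int p_d(x)\mcl_{\sm \mcs}\,\rmd x = D_{\sm \mcs} + \beta R_{\sm \mcs}$, followed by the rate--distortion frontier argument of \citet{alemi2018fixing}, is also exactly the correspondence the paper appeals to.
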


\begin{remark}[Mixture-based variational bound]\label{remark:mixture_encoding} \normalfont 
We show in Appendix \ref{app:info}, see also \citet{daunhawer2022limitations}, that 
$$\mch_\mcm - D_\mcs - D_\mcs^c \leq \mch_\mcm - D_\mcs - D_\mcs^c + \Delta_1' = \I_{q_\phi}(X_\mcm,Z_\mcs),$$
where $\Delta_1'= \int q_{\phi}^{\text{agg}}(z) \KL(q^\star(x|z)| p_{\theta}(x|z)) \rmd z>0$. Consequently, $\mch_\mcm - D_\mcs - D_\mcs^c $ is a variational lower bound, while $R_\mcs$ is a variational upper bound on $\I_{q_\phi}(X_\mcm,Z_\mcs)$, which establishes \eqref{eq:mixture_info}. Maximizing the mixture-based bound thus corresponds to encoding a single latent variable $Z_\mcs$ that maximizes the reconstruction of all modalities while at the same time being maximally compressive relative to the prior.
\end{remark}

\begin{remark}[Optimal variational distributions]\normalfont \label{remark:optimal_variational}
Consider the annealed likelihood $\tilde{p}_{\beta,\theta}(x_\mcs|z) \propto p_{\theta}(x_\mcs|z)^{1/\beta}$ as well as the adjusted posterior $\tilde{p}_{\beta,\theta}(z|x_\mcs)\propto \tilde{p}_{\beta,\theta}(x_\mcs|z) p_\theta(z)$. The minimum of the bound $\int p_d(\rmd x)\mcl_\mcs(x)$ is attained at any $x_\mcs$ for the variational density
\begin{equation}
    q^\star(z|x_\mcs) \propto \exp \left( \frac{1}{\beta} \left[ \log p_{\theta}(x_\mcs|z) + \beta \log p_\theta(z) \right] \right) \propto \tilde{p}_{\beta,\theta}(z|x_\mcs),\label{eq:optimal_multimodal_density} 
\end{equation}
see also \citet{huang2020evaluating} and Remark \ref{remark:entropy}. Similarly, if \eqref{eq:optimal_multimodal_density} holds, then it is readily seen that the minimum of the bound $\int p_d(\rmd x)\mcl_{\sm \mcs}(x)$ is attained at any $x$ for the variational density $q^\star(z|x) = \tilde{p}_{\beta,\theta}(z|x)$.
In contrast, as shown in Appendix \ref{app:optimal_distribution}, the optimal variational density for the mixture-based \eqref{eq:mixture_bound} multi-modal bound is attained at
$$q^\star(z|x_\mcs) \propto \tilde{p}_{\beta,\theta}(z|x_\mcs) \exp \left( \int p_d(x_{\sm \mcs}|x_\mcs) \log \tilde{p}_{\beta,\theta}(x_{\sm \mcs}|z)  \rmd x_{\sm \mcs} \right).$$
The optimal variational density for the mixture-based bound thus tilts the posterior distribution to points that achieve higher cross-modal predictions. 
\end{remark}

\section{Permutation-invariant modality encoding}
\label{sec:pi}

Optimizing the above multi-modal bounds requires learning variational densities with different conditioning sets. We write $h_{s,\varphi} \colon \msx_s \mapsto \rset^{D_E}$ for some modality-specific feature function. 
We recall the following multi-modal encoding functions suggested in previous work where usually $h_{s, \varphi}(x_s)=\begin{bmatrix} \mu_{s,\varphi}(x_s)^\top,\text{vec}(\Sigma_{s,\varphi}(x_s))^\top\end{bmatrix}^\top$ with $\mu_{s,\varphi}$ and $\Sigma_{s,\varphi}$ being the mean, respectively the (often diagonal) covariance, of a uni-modal encoder of modality $s$. Accommodating more complex variational families, such as mixture distributions for the uni-modal encoding distributions, can be more challenging for these approaches.
\begin{itemize}
\item MoE: $q_{\varphi}^{\text{MoE}}(z|x_{\mcs})=\frac{1}{|\mcs|} \sum_{s\in \mcs}  q_{\mcn}(z|\mu_{s,\varphi}(x_s),\Sigma_{s,\varphi}(x_s)),$
where $q_{\mcn}(z|\mu,\Sigma)$ is a Gaussian density with mean $\mu$ and covariance $\Sigma$.
\item PoE: 
$q_{\varphi}^{\text{PoE}}(z|x_{\mcs})=\frac{1}{\mcz} p_{\theta}(z)\prod_{s \in \mcs} q_{\mcn}(z|\mu_{s,\varphi}(x_s),\Sigma_{s,\varphi}(x_s))$, 
for some $\mcz \in \rset$.
For Gaussian priors $p_{\theta}(z)=q_{\mcn}(z|\mu_{\theta},\Sigma_{\theta})$ with mean $\mu_\theta$ and covariance $\Sigma_{\theta}$, the multi-modal  distribution $q_{\varphi}^{\text{PoE}}(z|x_{\mcs})$ is Gaussian with mean $$(\mu_{\theta}\Sigma_{\theta}+\sum_{s \in \mcs} \mu_{s,\varphi}(x_s) \Sigma_{s,\varphi}(x_s))(\Sigma^{-1}_{1,\theta}+\sum_{s\in \mcs} \Sigma_{s,\varphi}(x_s)^{-1})^{-1}$$ and covariance $$ (\Sigma^{-1}_{1,\theta}+\sum_{s\in \mcs}  \Sigma_{s,\varphi}(x_s)^{-1})^{-1}.$$
 \item Mixture of Product of Experts (MoPoE), see \cite{sutter2021generalized},
$$q_{\phi}^{\text{MoPoE}}(z|x_\mcm)= \frac{1}{2^M} \sum_{x_\mcs \in \mathcal{P}(x_{\mcm})} q^{\text{PoE}}_\phi(z|x_\mcs).$$
\end{itemize}

\subsection{Learnable permutation-invariant aggregation schemes} 
We aim to learn a more flexible aggregation scheme under the constraint that the encoding distribution is invariant \citep{bloem2020probabilistic} with respect to the ordering of encoded features of each modality. Put differently, for all $(H_s)_{s \in \mcs} \in \rset^{|\mcs|  \times D_E}$ and all permutations $\pi \in \mathbb{S}_\mathcal{S}$ of $\mcs$, we assume that the conditional distribution is $\mathbb{S}_{\mathcal{S}}$-invariant, i.e. $q'_{\vartheta}(z|h)=q'_{\vartheta}(z| \pi \cdot h)$
for all $z\in \rset^D$, where $\pi$ acts on $H=(H_s)_{s \in \mcs}$ via $\pi \cdot H = (H_{\pi(s)})_{s\in \mcs}$. We set $q_{\phi}(z|x_{\mcs})=q'_{\vartheta}(z|{h_{s,\varphi}(x_s)}_{s \in \mcs})$, $\phi=(\varphi,\vartheta)$ and remark that the encoding distribution is not invariant with respect to the modalities, but becomes only invariant after applying modality-specific encoder functions $h_{s,\varphi}$.
Observe that such a constraint is satisfied by the aggregation schemes above for $h_{s,\varphi}$ being the uni-modal encoders.

A variety of invariant (or equivariant) functions along with their approximation properties have been considered previously, see for instance \citet{santoro2017simple, zaheer2017deep,qi2017pointnet,lee2019set, segol2019universal, murphy2019janossy, maron2019universality, sannai2019universal, yun2019transformers, bruno2021mini, wagstaff2022universal, zhang2022set, li2022deep, bartunov2022equilibrium}, and applied in different contexts such as meta-learning \citep{edwards2016towards, garnelo2018neural, kim2018attentive, hewitt2018variational, giannone2022scha}, reinforcement learning \citep{tang2021sensory, zhang2022relational} or generative modeling of (uni-modal) sets \citep{li2018point, li2020exchangeable, kim2021setvae, bilovs2021scalable, li2021partially}. We can use such constructions to parameterize more flexible encoding distributions. Indeed, the results from \citet{bloem2020probabilistic} imply that for an exchangable sequence $H_\mcs=(H_s)_{s \in \mcs} \in \rset^{|\mcs| \times D_E}$ and random variable $Z$, the distribution $q'(z|h_{\mcs})$ is $\mathbb{S}_{\mathcal{S}}$-invariant if and only if there is a measurable function\footnote{The function $f^\star$ generally depends on the cardinality of $\mcs$. Finite-length exchangeable sequences imply a de Finetti latent variable representation only up to approximation errors \citep{diaconis1980finite}.} $f^\star \colon [0,1] \times \mathcal{M}(\rset^{D_E}) \to \rset^D$ such that
$$ (H_{\mcs}, Z) \overset{\text{a.s.}}{=} (H_{\mcs}, f^\star (\Xi, \mathbb{M}_{H_{\mcs}})) \text{, where } \Xi \sim \mathcal{U}[0,1] \text{ and } \Xi \ind H_{\mcs}$$
with $\mathbb{M}_{H_{\mcs}}(\cdot)=\sum_{s\in \mcs} \delta_{H_s}(\cdot)$ being the empirical measure of $h_{\mcs}$, which retains the values of $h_\mcs$, but discards their order. For variational densities from a location-scale family such as a Gaussian or Laplace distribution, we find it more practical to consider a different reparameterization in the form $Z=\mu(h_\mcs)+\sigma(h_\mcs) \odot \Xi$,
where $\Xi$ is a sample from a parameter-free density $p$ such as a standard Gaussian and Laplace distribution, while $\begin{bmatrix} \mu(h_\mcs),\log \sigma(h_\mcs)
\end{bmatrix}=f(h_\mcs)$ for a PI function $f\colon \rset^{|\mcs| \times D_E} \to \rset^{2D}$. Likewise, for mixture distributions thereof, assume that for a PI function $f$,
$$\begin{bmatrix} \mu_1(h_\mcs),\log \sigma_1(h_\mcs), \ldots,  \mu_K(h_\mcs),\log \sigma_K(h_\mcs), \log \omega(h_\mcs)
\end{bmatrix}=f(h_\mcs) \in \rset^{2DK+K} $$  and $Z=\mu_L(h_\mcs)+\sigma_L(h_\mcs) \odot \Xi$ with $L\sim \text{Cat}(\omega(h_\mcs))$ denoting the sampled mixture component out of $K$ mixtures.
For simplicity, we consider here only two examples of PI functions $f$ that have representations with parameter $\vartheta$ in the form 
$$f_\vartheta(h_\mcs)=\rho_\vartheta \left(\sum_{s\in \mcs} g_\vartheta(h_\mcs)_s\right) \label{eq:invariant_rep}
$$ for a function $\rho_\vartheta \colon \rset^{D_P} \to \rset^{D_O}$ and permutation-equivariant function $g_\vartheta \colon \rset^{N \times D_E} \to \rset^{N \times D_P}$. 


\begin{example}[Sum Pooling Encoders] \label{ex:sum_pooling}
\normalfont
The Deep Set \citep{zaheer2017deep} construction
$f_\vartheta (h_\mcs)=\rho_\vartheta \left(\sum_{s \in \mcs } \chi_\vartheta(h_s)\right)$ 
applies the same neural network $\chi_\vartheta \colon \rset^{D_E} \to \rset^{D_P}$ to each encoded feature $h_s$. We assume that $\chi_\vartheta$ is a feed-forward neural network and remark that pre-activation ResNets \citep{he2016identity} have been advocated for deeper $\chi_\vartheta$. For exponential family models, the optimal natural parameters of the posterior solve an optimization problem where the dependence on the generative parameters from the different modalities decomposes as a sum, see Appendix \ref{app:exponential_family}.
\end{example}





\begin{example}[Set Transformer Encoders] \label{ex:transformer}
\normalfont
Let $\text{MTB}_{\vartheta}$ be a multi-head pre-layer-norm transformer block \citep{wang2019learning, xiong2020layer}, see Appendix \ref{app:transformer} for precise definitions. For some neural network $\chi_\vartheta \colon \rset^{D_E} \to \rset^{D_P}$, set $g^0_\mcs=\chi_\vartheta(h_\mcs)$ and for $k\in \{1, \ldots, L\}$, set $g_\mcs^k=\text{MTB}_{\vartheta}(g_\mcs^{k-1})$. We then consider $f_\vartheta(h_\mcs)=\rho_\vartheta \left(\sum_{s \in \mcs} g^L_s\right)$.
This can be seen as a Set Transformer \citep{lee2019set, zhang2022relational} model without any inducing points as for most applications, a computational complexity that scales quadratically in the number of modalities can be acceptable. In our experiments, we use layer normalization \citep{ba2016layer} within the transformer model, although, for example, set normalization \citep{zhang2022relational} could be used alternatively. 
\end{example}

Note that the PoE's aggregation mechanism involves taking inverses, which can only be approximated by the learned aggregation models. The considered permutation-invariant models can thus only recover a PoE scheme under universal approximation assumptions.

\begin{remark}[Mixture-of-Product-of-Experts or MoPoEs]\normalfont
\citet{sutter2021generalized} introduced a MoPoE aggregation scheme that extends MoE or PoE schemes by considering a mixture distribution of all $2^M$ modality subsets, where each mixture component consists of a PoE model, i.e.,
$$ q_{\phi}^{\text{MoPoE}}(z|x_\mcm)= \frac{1}{2^M} \sum_{x_\mcs \in \mathcal{P}(x_{\mcm})} q^{\text{PoE}}_\phi(z|x_\mcs).$$
This can also be seen as another PI model. While it does not require learning separate encoding models for all modality subsets, it, however, becomes computationally expensive to evaluate for large $M$. Our mixture models using components with a SumPooling or SelfAttention aggregation can be seen as an alternative that allows one to choose the number of mixture components $K$ to be smaller than $2^M$, with non-uniform weights, while the individual mixture components are not constrained to have a PoE form.
\end{remark}

\begin{remark}[Pooling expert opinions]\normalfont
Combining expert distributions has a long tradition in decision theory and Bayesian inference; see \citet{genest1986combining} for early works, with popular schemes being linear pooling (i.e., MoE) or log-linear pooling (i.e., PoE with tempered densities). These are optimal schemes for minimizing different objectives, namely a weighted (forward or reverse) KL-divergence between the pooled distribution and the individual experts \citep{abbas2009kullback}. Log-linear pooling operators are externally Bayesian, allowing for consistent Bayesian belief updates when each expert updates her belief with the same likelihood function \citep{genest1986characterization}. 
\end{remark}

\subsection{Permutation-equivariance and private latent variables}

In principle, the general permutation invariant aggregation schemes that have been introduced could also be used for learning multi-modal models with private latent variables. For example, suppose that the generative model factorizes as 
\begin{equation}
    p_{\theta}(z,x)= p(z) \prod_{s\in \mcm} p_{\theta}(x_s|z',\tilde{z}_s)
    \label{eq:private_model}
\end{equation}
for $z=(z', \tilde{z}_1, \ldots, \tilde{z}_M) \in \msz$, for shared latent variables $Z'$ and private latent variable $\tilde{Z}^s$ for each $s \in \mcm$. Note that for $s \neq t \in [M]$, 
\begin{equation} X_s \ind \tilde{Z}_t ~ | ~ Z', \tilde{Z}_s.\label{eq:cond_independence}
\end{equation}
Consequently,
\begin{equation} p_{\theta}(z',\tilde{z}_{\mcs}, \tilde{z}_{\sm \mcs}|x_\mcs) = p_{\theta}(z',\tilde{z}_{\mcs}, |x_\mcs) p_{\theta}( \tilde{z}_{\sm \mcs}|z',\tilde{z}_{\mcs}, x_\mcs) = p_{\theta}(z',\tilde{z}_{\mcs}, |x_\mcs) p_{\theta}( \tilde{z}_{\sm \mcs}|z',\tilde{z}_{\mcs}). \label{eq:private_factor}
\end{equation}
An encoding distribution $q_{\phi}(z|x_\mcs)$ that approximates $p_{\theta}(z|x_\mcs)$ should thus be unaffected by the inputs $x_\mcs$ when encoding $\tilde{z}_s$ for $s \notin \mcs$, provided that, a priori, all private and shared latent variables are independent. Observe that for $f_{\vartheta}$ with the representation \eqref{eq:invariant_rep}
where $\rho_{\vartheta}$ has aggregated inputs $y$, and that parameterizes the encoding distribution of $z=(z',\tilde{z}_\mcs, \tilde{z}_{\sm \mcs})$, the gradients of its $i$-th dimension with respect to the modality values $x_s$ is
$$ \frac{\partial}{\partial x_s} \left[ f_{\vartheta}(h_\mcs(x_{\mcs}))_i \right] = \frac{\partial \rho_{\vartheta,i}}{\partial y}  \left(\sum_{t\in \mcs} g_{\vartheta}(h_\mcs(x_\mcs)_t) \right) \frac{\partial}{\partial x_s} \left( \sum_{t \in \mcs} g_{\vartheta}(h_\mcs(x_\mcs))_t \right).$$
In the case of a SumPooling aggregation, the gradient simplifies to
$$ \frac{\partial \rho_{\vartheta,i}}{\partial y}  \left(\sum_{t\in \mcs} \chi_{\vartheta}(h_t(x_t)) \right) \frac{\partial \chi_{\vartheta}}{\partial h} \left(h_s(x_s)\right) \frac{\partial h_s(x_s)}{\partial x_s}.$$
Suppose that the $i$-th component of $\rho_{\vartheta}$ maps to the mean or log-standard deviation of some component of $\tilde{Z}_s$ for some $s\in \mcm \sm \mcs$. Notice that only the first factor depends on $i$ so that for this gradient to be zero, $\rho_{\vartheta, i}$ has to be locally constant around $y=\sum_{s\in \mcs} \chi_{\vartheta}(h_s(x_s))$ if some other components have a non-zero gradient with respect to $X_s$. It it thus very likely that inputs $X_s$ for $s \in \mcs$ can impact the distribution of the private latent variables $\tilde{z}_{\sm \mcs}$.

However, the specific generative model also lends itself to an alternative parameterization that guarantees that cross-modal likelihoods from $X_{\sm \mcs}$ do not affect the encoding distribution of $\tilde{Z}_\mcs$ under our new variational objective. The assumption of private latent variables suggests an additional permutation-equivariance into the encoding distribution that approximates the posterior in \eqref{eq:private_factor}, in the sense that for any permutation $\pi \in \mathbb{S}_\mcs$, it holds that 
$$ q'_{\phi}(\tilde{z}_{\mcs}|\pi \cdot h_{\varphi}(x_{\mcs}),z') =  q'_{\phi}(\pi \cdot \tilde{z}_{\mcs}|h_{\varphi}(x_{\mcs}),z'),$$
assuming that all private latent variables are of the same dimension $D$.\footnote{The effective dimension can vary across modalities in practice if the decoders are set to mask redundant latent dimensions.}
Indeed, suppose we have modality-specific feature functions $h_{\varphi,s}$ such that $\left\{H_s=h_{\varphi,s}(X_s) \right\}_{s \in \mcs}$ is exchangeable. Clearly, \eqref{eq:cond_independence} implies for any $s \neq t$ that $$ h_{\varphi,s}(X_s) \ind \tilde{Z}_t ~ | ~ Z', \tilde{Z}_s.$$ The results from \citet{bloem2020probabilistic} then imply, for fixed $|\mcs|$, the existence of a function $f^\star$ such that for all $s\in \mcs$, almost surely,
\begin{equation}
    (H_{\mcs}, \tilde{Z}_s) = (H_{\mcs}, f^\star (\Xi_s, Z', H_s, \mathbb{M}_{H_{\mcs}})) \text{, where } \Xi_s \sim \mathcal{U}[0,1] \text{ iid and } \Xi_s \ind H_{\mcs}.
\label{eq:equivariant_reparam}
\end{equation} 
This fact suggests an alternative route to approximate the posterior distribution in \eqref{eq:private_factor}: First, $p_{\theta}(\tilde{z}_{\sm \mcs} | z', \tilde{z}_\mcs)$ can often be computed analytically based on the learned or fixed prior distribution. Second, a permutation-invariant scheme can be used to approximate $p_{\theta}(z'|x_{\mcs})$. Finally, a permutation-equivariant scheme can be employed to approximate $p_{\theta}(\tilde{z}_\mcs|x_\mcs, z')$ with a reparameterization in the form of $\eqref{eq:equivariant_reparam}$. The variational objective that explicitly uses private latent variables is detailed in Appendix \ref{app:equivariance}. Three examples of such permutation-equivariant schemes are given below with pseudocode for optimizing the variational objective given in Algorithm \ref{alg:implementation_private}. Note that the assumption $q_\phi(\tilde{z}_\mcs|z', \tilde{z}_{\sm \mcs}, x_\mcs)=q_\phi(\tilde{z}_\mcs|z', x_\mcs)$ is an inductive bias that generally decreases the variational objective as it imposes a restriction on the encoding distribution that only approximates the posterior where this independence assumption holds. However, this independence assumption allows us to respect the modality-specific nature of the private latent variables during encoding. In particular, for some permutation-invariant encoder $q_{\phi}(z'|x_\mcs)$ for the private latent variables and permutation-equivariant encoder $q_{\phi}(\tilde{z}_\mcs|z', x_\mcs)$ for the private latent variables of the observed modalities, we can encode via
$$q_{\phi}(z',\tilde{z}_\mcm|x_\mcs)=q_{\phi}(z'|x_\mcs)p_{\theta}(\tilde{z}_{ \setminus \mathcal{S}}|z') q_{\phi}(\tilde{z}_{\mathcal{S}}|z', x_{\mathcal{S}})$$
so that the modality-specific information of $x_{\mcs}$ as encoded via $\tilde{z}_{\mathcal{S}}$ is not impacted by the realisation $\tilde{Z}_{\setminus \mathcal{S}}$ of modality-specific variation from the other modalities.

\begin{example}[Permutation-equivariant PoE] \normalfont
Similar to previous work \citet{wang2016deep, lee2021private, sutter2020multimodal}, we consider an encoding density of the form
$$q_{\phi}(z',\tilde{z}_\mcm|x_\mcs)=q_{\varphi}^{\text{PoE}}(z'|x_\mcs) \prod_{s \in \mcs} q_{\mcn}(\tilde{z}_s|\tilde{\mu}_{s,\varphi}(x_s),\tilde{\Sigma}_{s,\varphi}(x_s)) \prod_{s \in \mcm \sm \mcs} p_{\theta}(\tilde{z}_s),$$
where 
$$q_{\varphi}^{\text{PoE}}(z'|x_\mcs)=\frac{1}{\mcz} p_{\theta}(z')\prod_{s \in \mcs} q_{\mcn}(z'|\mu'_{s,\varphi}(x_s),\Sigma'_{s,\varphi}(x_s))$$
is a (permutation-invariant) PoE aggregation, and we assumed that the prior density factorizes over the shared and different private variables. For each modality $s$, we encode different features $h'_{s,\varphi}=(\mu'_{s,\varphi}, \Sigma'_{s,\varphi})$ and $\tilde{h}_{s,\varphi}=(\tilde{\mu}_{s,\varphi}, \tilde{\Sigma}_{s,\varphi})$ for the shared, respectively, private, latent variables. We followed previous works \citep{tsai2019learning,lee2021private, sutter2020multimodal} in that the encodings and prior distributions for the modality-specific latent variables are independent of the shared latent variables. However, this assumption can be relaxed, as long as the distributions remain Gaussian. 

\end{example}

\begin{example}[Permutation-equivariant Sum-Pooling] \normalfont
We consider an encoding density that is written as
$$q_{\phi}(z',\tilde{z}_\mcm|x_\mcs)=q_{\phi}^{\text{SumP}}(z'|x_\mcs) q_{\phi}^{\text{Equiv-SumP}}(\tilde{z}_\mcs|z', x_\mcs) \prod_{s \in \mcm \sm \mcs} p_{\theta}(\tilde{z}_s|z').$$
Here, we use a (permutation-invariant) Sum-Pooling aggregation scheme for constructing the shared latent variable $Z'=\mu'(h_\mcs) + \sigma'(h_\mcs) \odot \Xi' \sim q^{\text{SumP}}_{\phi}(z'|x_\mcs)$, where $\Xi' \sim p$ and $f_\vartheta \colon \rset^{|\mcs| \times D_E} \to \rset^{D}$ given as in Example \eqref{ex:sum_pooling} with $\begin{bmatrix} \mu'(h),\log \sigma'(h)\end{bmatrix}=f_{\vartheta}(h)$. To sample $\Tilde{Z}_\mcs \sim  q_{\phi}^{\text{Equiv-SumP}}(\tilde{z}_\mcs|z', x_\mcs)$, consider functions $\chi_{j,\vartheta} \colon \rset^{D_E} \to \rset^{D_P}$, $j \in [3]$, and $\rho_{\vartheta} \colon \rset^{D_P} \to \rset^{D_O}$, e.g., fully-connected neural networks. We define $f^{\text{Equiv-SumP}}_{\vartheta} \colon \msz \times \rset^{|\mcs| \times D_E} \to \rset^{|\mcs| \times D_O}$ via
$$f^{\text{Equiv-SumP}}_{\vartheta}(z',h_\mcs)_s=\rho_{\vartheta} \left( \left[ \sum_{t \in \mcs} \chi_{0,\vartheta}(h_t) \right] + \chi_{1,\vartheta}(z')+ \chi_{2,\vartheta}(h_s)\right).$$
With $\begin{bmatrix} \tilde{\mu}(h_\mcs)^\top, \log \tilde{\sigma}(h_\mcs)^\top \end{bmatrix} ^\top = f^{\text{Equiv-SumP}}_{\vartheta}(z',h_\mcs)$, we then set $\tilde{Z}_s=\tilde{\mu}(h_\mcs)_s + \tilde{\sigma}(h_\mcs)_s \odot \tilde{\Xi}_s$ for $\tilde{\Xi}_s \sim p$ iid, $h_s=h_{\varphi,s}(x_s)$ for modality-specific feature functions $h_{\varphi,s} \colon \msx_s \to \rset^{D_E}$.

\end{example}

\begin{example}[Permutation-equivariant Self-Attention]\normalfont
Similar to a Sum-Pooling approach, we consider an encoding density that is written as
$$q_{\phi}(z',\tilde{z}_\mcm|x_\mcs)=q_{\phi}^{\text{
SA}}(z'|x_\mcs) q_{\phi}^{\text{Equiv-SA}}(\tilde{z}_\mcs|z', x_\mcs) \prod_{s \in \mcm \sm \mcs} p_{\theta}(\tilde{z}_s|z').$$
Here, the shared latent variable $Z'$ is sampled via the permutation-invariant aggregation above by summing the elements of a permutation-equivariant transformer model of depth $L'$. For encoding the private latent variables, we follow the example above but set
$$\begin{bmatrix} \tilde{\mu}(h_\mcs)^\top, \log \tilde{\sigma}(h_\mcs)^\top \end{bmatrix} ^\top = f^{\text{Equiv-SA}}_{\vartheta}(z',h_\mcs)_s=g^L_{\mcs},$$
with $g^k_\mcs=\text{MTB}_\vartheta (g^{k-1}_\mcs)$ an $g^0 =\left(\chi_{1,\vartheta}(h_s) + \chi_{2,\vartheta}(z')\right)_{s \in \mcs}$.

\end{example}


\section{Identifiability and model extensions}
\subsection{Identifiability}

Identifiability of parameters and latent variables in latent structure models is a classic problem  \citep{koopmans1950identification, kruskal1976more, allman2009identifiability}, that has been studied increasingly for non-linear latent variable models, e.g., for ICA \citep{hyvarinen2016unsupervised, halva2020hidden, halva2021disentangling}, VAEs \citep{khemakhem2020variational, zhou2020learning, wang2021posterior, moran2021identifiable, lu2022invariant, kim2023covariate}, EBMs \citep{khemakhem2020ice}, flow-based  \citep{sorrenson2020disentanglement} or mixture models \citep{kivvaidentifiability2022}.

Non-linear generative models are generally unidentifiable without imposing some structure \citep{hyvarinen1999nonlinear, xi2022indeterminacy}. Yet, identifiability up to some ambiguity can be achieved in some conditional models based on observed auxiliary variables and injective decoder functions wherein the prior density is conditional on auxiliary variables. Observations from different modalities can act as auxiliary variables to obtain identifiability of conditional distributions given some modality subset 
under analogous assumptions.

\begin{example}[Auxiliary variable as a modality] \normalfont
    In the iVAE model \citep{khemakhem2020variational}, the latent variable distribution $p_{\theta}(z|x_1)$ is independently modulated via an auxiliary variable $X_1=U$. Instead of interpreting this distribution as a (conditional) prior density, we view it as a posterior density given the first modality $X_1$. \citet{khemakhem2020variational} estimate a model for another modality $X_2$ by lower bounding $\log p_{\theta}(x_2|x_1)$ via $\mcl_{\sm \left\{1 \right\}}$ under the assumption that $q_{\phi}(z|x_1)$ is given by the prior density $p_{\theta}(z|x_1)$. Similarly, \citet{mita2021identifiable} optimize $\log p_{\theta}(x_1,x_2)$ by a double VAE bound that reduces to $\mcl$ for a masking distribution $\rho(s_1,s_2)=(\delta_1\otimes \delta_0)(s_1,s_2)$ that always masks the modality $X_2$ and choosing to parameterize separate encoding functions for different conditioning sets. Our bound thus generalizes these procedures to multiple modalities in a scalable way. 
\end{example}

We are interested in identifiability, conditional on having observed some non-empty modality subset $\mcs \subset \mcm$. 
For illustration, we translate an identifiability result from the uni-modal iVAE setting in \citet{lu2022invariant}, which does not require the conditional independence assumption from \citet{khemakhem2020variational}. We assume that the encoding distribution $q_\phi(z|x_\mcs)$ approximates the true posterior $p_{\theta}(z | x_\mcs)$ and belongs to a strongly exponential family, i.e., 
\begin{equation}
   p_{\theta}(z|x_\mcs)= q_{\phi}(z|x_\mcs)=p^{\text{EF}}_{V_{\phi,\mcs},\lambda_{\phi,\mcs}}(z|x_\mcs), \label{eq:exact_approximation}
\end{equation} 
with
$$p^{\text{EF}}_{V_\mcs, \lambda_\mcs}(z|x_\mcs)=\mu(z) \exp \left[ \langle V_\mcs(z), \lambda(x_\mcs)  \rangle - \log \Gamma_{\mcs}(\lambda_{\mcs}(x_\mcs)) \right], $$
where $\mu$ is a base measure, $V_\mcs \colon \msz \to \rset^k$ is the sufficient statistics, $\lambda_\mcs(x_\mcs) \in \rset^k$ the natural parameters and $\Gamma_\mcs$ a normalizing term. Furthermore, one can only reduce the exponential component to the base measure on sets having measure zero. In this section, we assume that 
\begin{equation}
    p_{\theta}(x_s|z)=p_{s,\epsilon}(x_s-f_{\theta,s}(z)) \label{eq:noise_observation_model}
\end{equation} for some fixed noise distribution $p_{s,\epsilon}$ with a Lebesgue density, which excludes observation models for discrete modalities.
Let $\Theta_\mcs$ be the domain of the parameters $\theta_\mcs=(f_{\sm \mcs}, V_\mcs, \lambda_\mcs )$ with $f_{\sm \mcs} \colon \msz \ni z \mapsto (f_s(z))_{s \in \mcm \sm \mcs} \in \times_{s\in \mcm \sm \mcs} \msx_s = \msx_{\sm \mcs}$.
Assuming \eqref{eq:exact_approximation}, note that $$p_{\theta_\mcs} (x_{\sm \mcs} |x_\mcs ) = \int p_{V_\mcs, \lambda_\mcs}(z|x_\mcs) p_{\sm \mcs ,\epsilon}(x_{\sm \mcs} - f_{\sm \mcs}(z)) \rmd z,$$
with $p_{\sm \mcs, \epsilon}=\otimes_{s\in \mcm \sm \mcs} p_{s,\epsilon}$. We define an equivalence relation on $\Theta_\mcs$ by $(f_{\sm \mcs},V_\mcs,\lambda_\mcs) \sim_{A_\mcs} (\tilde{f}_{\sm \mcs}, \tilde{V}_\mcs,\tilde{\lambda}_\mcs)$ iff there exist invertible $A_\mcs \in \rset^{k \times k}$ and $c_\mcs\in \rset^k$ such that $$V_\mcs(f_{\sm \mcs}^{-1}(x_{\sm \mcs})) = A_\mcs \tilde{V}_\mcs(\tilde{f}_{\sm \mcs}^{-1}(x_{\sm \mcs})) + c_\mcs$$ for all $x_{\sm \mcs} \in \msx_{\sm \mcs}.$

\begin{proposition}[Weak identifiability] \label{prop:identifiable}
    Consider the data generation mechanism $p_{\theta}(z,x)=p_\theta(z) \prod_{s\in \mcm} p_{\theta}(x_s|z)$ where the observation model satisfies \eqref{eq:noise_observation_model} for an injective $f_{\sm \mcs}$. Suppose further that $p_{\theta}(z|x_{\mcs})$ is strongly exponential and \eqref{eq:exact_approximation} holds. Assume that the set $\{ x_{\sm \mcs} \in \msx_{\sm \mcs} | \varphi_{\sm \mcs, \epsilon}(x_{\sm \mcs})=0\}$ has measure zero, where $\varphi_{\sm \mcs, \epsilon}$ is the characteristic function of the density $p_{\sm \mcs, \epsilon}$. Furthermore, suppose that there exist $k+1$ points $x^0_\mcs, \ldots, x^{k}_\mcs \in \msx_\mcs$ such that 
    $$ L = \begin{bmatrix} \lambda_\mcs(x_\mcs^1) - \lambda_\mcs(x_\mcs^0), \ldots, \lambda_\mcs(x_\mcs^k) - \lambda_\mcs(x_\mcs^0) \end{bmatrix}   \in \rset^{k \times k}$$ is invertible. Then $p_{\theta_\mcs}(x_{\sm \mcs}|x_\mcs) = p_{\tilde{\theta}_\mcs}(x_{\sm \mcs}|x_\mcs)$ for all $x\in \msx$ implies $\theta \sim_{A_\mcs} \tilde{\theta}$.
\end{proposition}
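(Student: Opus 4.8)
The plan is to mirror the identifiability argument for conditional (i)VAEs \citep{khemakhem2020variational, lu2022invariant}, treating $x_\mcs$ as the auxiliary/conditioning variable and $x_{\sm \mcs}$ as the observed output whose conditional law is being matched. Throughout I fix the mask $\mcs$ and abbreviate $f=f_{\sm \mcs}$, $V=V_\mcs$, $\lambda=\lambda_\mcs$, $\Gamma = \Gamma_\mcs$, together with their tilded counterparts. First I would rewrite the matched conditional as a convolution. Using \eqref{eq:noise_observation_model} and \eqref{eq:exact_approximation},
$$p_{\theta_\mcs} (x_{\sm \mcs} |x_\mcs ) = \int p_{V_\mcs, \lambda_\mcs}(z|x_\mcs)\, p_{\sm \mcs ,\epsilon}(x_{\sm \mcs} - f(z))\, \rmd z,$$
and I push the posterior forward through the injective map $f$. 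Writing $\hat p(\cdot\mid x_\mcs)$ for the density of the image measure $f_\# p_{V,\lambda}(\cdot\mid x_\mcs)$ on $f(\msz)\subset \msx_{\sm \mcs}$ (extended by zero), the integral becomes the convolution $\bigl(\hat p(\cdot\mid x_\mcs)* p_{\sm \mcs,\epsilon}\bigr)(x_{\sm \mcs})$, so the hypothesis $p_{\theta_\mcs}=p_{\tilde\theta_\mcs}$ reads as an equality of two convolutions sharing the same noise kernel $p_{\sm \mcs,\epsilon}$.

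Next I would strip the noise by taking characteristic functions, which turns the convolutions into products, giving $\varphi_{\hat p}(\cdot\mid x_\mcs)\,\varphi_{\sm \mcs,\epsilon} = \varphi_{\hat{\tilde p}}(\cdot\mid x_\mcs)\,\varphi_{\sm \mcs,\epsilon}$. Because the zero set of $\varphi_{\sm \mcs,\epsilon}$ has measure zero by assumption, I can divide through and conclude, by continuity and uniqueness of the Fourier transform, that $\hat p(\cdot\mid x_\mcs)=\hat{\tilde p}(\cdot\mid x_\mcs)$ for every $x_\mcs$ and almost every $x_{\sm \mcs}$. Thus the posterior $p_{V,\lambda}(z\mid x_\mcs)$ pushed through $f$ agrees with $\tilde p_{\tilde V,\tilde\lambda}$ pushed through $\tilde f$.

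Then I substitute the strongly-exponential form and take logarithms of the equality of pushed-forward densities. Writing $z=f^{-1}(x_{\sm \mcs})$ and accounting for the change-of-variables Jacobian, I obtain an identity valid for all $x_\mcs$ and a.e.\ $x_{\sm \mcs}$, in which the base-measure, Jacobian, $\log\Gamma$, and inner-product terms appear on each side. Evaluating this identity at the $k+1$ points $x^0_\mcs,\dots,x^k_\mcs$ and subtracting the $x^0_\mcs$ equation from each of the others, the base-measure and Jacobian terms cancel (they depend only on $x_{\sm \mcs}$), and only constants from the $\log\Gamma$ differences survive, yielding
$$L^\top V(f^{-1}(x_{\sm \mcs})) = \tilde L^\top \tilde V(\tilde f^{-1}(x_{\sm \mcs})) + b$$
for a constant vector $b$, where $L$ is the invertible matrix from the hypothesis and $\tilde L$ its tilded analogue. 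Inverting $L$ gives the affine relation $V(f^{-1}(x_{\sm \mcs})) = A_\mcs\,\tilde V(\tilde f^{-1}(x_{\sm \mcs})) + c_\mcs$ with $A_\mcs=(L^\top)^{-1}\tilde L^\top$ and $c_\mcs=(L^\top)^{-1}b$, which is exactly $\theta \sim_{A_\mcs}\tilde\theta$; invertibility of $A_\mcs$ follows from the strong-exponential assumption, which prevents $\tilde V$ from being confined to a measure-zero set and forces $\tilde L$ to be invertible.

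The hard part will be the de-convolution step: justifying that equality of the two noisy conditionals forces equality of the pushed-forward posteriors. This is where the measure-zero assumption on the zero set of $\varphi_{\sm \mcs,\epsilon}$ and the injectivity of $f_{\sm \mcs}$ are essential, and some care is needed because the push-forward densities are supported on the images $f(\msz)$ and $\tilde f(\msz)$, so the Fourier inversion and the subsequent a.e.\ equality must be handled on the appropriate domain rather than on all of $\msx_{\sm \mcs}$. A secondary subtlety is ensuring $A_\mcs$ is genuinely invertible, which again relies on the strongly-exponential hypothesis guaranteeing that the sufficient statistics span a full-dimensional set.
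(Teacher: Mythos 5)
Your proposal is correct and follows essentially the same route as the paper: the paper's proof simply defers to Theorem 4 of \citet{lu2022invariant} (treating $x_\mcs$ as the auxiliary conditioning variable and $x_{\sm\mcs}$ as the observation), and the convolution / Fourier-deconvolution / log-linearization-at-$(k+1)$-points argument you lay out is precisely the proof of that cited theorem. One small correction to your final step: the spanning property furnished by strong exponentiality should be applied to the un-tilded statistics $V_\mcs$ (this is the family assumed strongly exponential), yielding points $z^0,\ldots,z^k$ with $\{V_\mcs(z^i)-V_\mcs(z^0)\}_{i=1}^k$ linearly independent; evaluating your affine relation at $x^i_{\sm\mcs}=f_{\sm\mcs}(z^i)$ then forces $\tilde L$, and hence $A_\mcs=(L^\top)^{-1}\tilde L^\top$, to have full rank, whereas arguing via $\tilde V_\mcs$ as you suggest would require strong exponentiality of the tilded family, which is not among the hypotheses.
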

This result follows from Theorem 4 in \citet{lu2022invariant}. Note that $p_{\theta_\mcs}(x_{\sm \mcs}|x_\mcs) = p_{\tilde{\theta}_\mcs}(x_{\sm \mcs}|x_\mcs)$ for all $x\in \msx$ implies with the regularity assumption on $\varphi_{\sm \mcs, \epsilon}$  that the transformed variables $Z=f_{\sm \mcs}^{-1}(X_{\sm \mcs})$ and $\tilde{Z}=\tilde{f}_{\sm \mcs}^{-1}(X_{\sm \mcs})$ have the same density function conditional on $X_{\mcs}$.


\begin{remark}[Conditional identifiability]\normalfont
The identifiability result above is about conditional models and does not contradict the un-identifiability of VAEs: When $\mcs=\emptyset$ and we view $x=x_\mcm$ as one modality, then the parameters of $p_{\theta_{\emptyset}}(x)$ characterized by the parameters $V_{\emptyset}$ and $\lambda_{\emptyset}$ of the prior $p_{\theta_{\emptyset}}(z|x_\emptyset)$ and the encoders $f_\mcm$ will not be identifiable as the invertibility condition will not be satisfied.
\end{remark}

\begin{remark}[Private latent variables]\normalfont
For models with private latent variables, we might not expect that conditioning on $X_\mcs$ helps to identify $\tilde{Z}_{\sm \mcs}$ as $$p_{\theta}(z',\tilde{z}_\mcs, \tilde{z}_{\sm \mcs}|x_\mcs)= p_\theta(z',\tilde{z}_\mcs|x_\mcs)p_\theta(\tilde{z}_{\sm \mcs}|z',\tilde{z}_{\sm \mcs}).$$ Indeed, Proposition \ref{prop:identifiable} will not apply in such models as $f_{\sm \mcs}$ will not be injective.
\end{remark}

\begin{remark}[Data supported on low-dimensional manifolds]\normalfont
Note that \eqref{eq:noise_observation_model} and \eqref{eq:exact_approximation} imply that each modality has a Lebesgue density under the generative model. This assumption may not hold for some modalities, such as imaging data that can be supported (closely) on a lower-dimensional manifold \citep{roweis2000nonlinear}, causing issues in likelihood-based methods such as VAEs \citep{dai2018diagnosing, loaiza2022diagnosing}. Moreover, different conditioning sets or modalities may result in different dimensions of the underlying manifold for conditional data \citep{zheng2022learning}. Some two-step approaches \citep{dai2018diagnosing, zheng2022learning} first estimate the dimension $r$ of the ground-truth manifold as a function of the encoder variance relative to the variance under the (conditional) prior for each latent dimension $i$, $i \in [D]$, with $r\leq D$. It would, therefore, be interesting to analyze in future work if more flexible aggregation schemes that do not impose strong biases on the variance components of the encoder can better learn the manifold dimensions in conditional or multi-modal models following an analogous two-step approach.
\end{remark}

\noindent Recall that the identifiability considered here concerns parameters of the multi-modal posterior distribution and the conditional generative distribution. It is thus preliminary to estimation and only concerns the generative model and not the inference approach. However, both the multi-modal posterior distribution and the conditional generative distribution are intractable. In practice, we thus replace them with approximations. We believe that our inference approach is beneficial for this type of identifiability when making these variational approximations because (a) unlike some other variational bounds, the posterior is the optimal variational distribution with $\mathcal{L}_{\setminus \mcs}(x)$ being an approximation of a lower bound on $\log p_{\theta}(x_{\sm \mcs}|x_\mcs)$, see Remark \ref{remark:optimal_variational}, and (b) the trainable aggregation schemes can be more flexible for approximating the optimal encoding distribution.

\subsection{Mixture models}

An alternative to the choice of uni-modal prior densities $p_{\theta}$ has been to use Gaussian mixture priors \citep{johnson2016structured,jiang2017variational, dilokthanakul2016deep} or more flexible mixture models \citep{falck2021multi}. Following previous work, we include a latent cluster indicator variable $c \in [K]$ that indicates the mixture component out of $K$ possible mixtures with augmented prior $p_{\theta}(c,z)=p_{\theta}(c) p_{\theta}(z|c)$. The classic example is $p_{\theta}(c)$ being a categorical distribution and $p_{\theta}(z|c)$ a Gaussian with mean $\mu_c$ and covariance matrix $\Sigma_c$.  Similar to \citet{falck2021multi} that use an optimal variational factor in a mean-field model, we use an optimal factor of the cluster indicator in a structured variational density $q_{\phi}(c,z|x_{\mcs}) = q_{\phi}(z|x_{\mcs}) q_{\phi}(c|z,x_{\mcs})$ with $q_{\phi}(c|z,x_\mcs)=p_{\theta}(c|z)$. 
Appendix \ref{app:mixture} details how one can optimize an augmented multi-modal bound. Concurrent work \citep{palumbo2024deep} considered a similar optimal variational factor for a discrete mixture model under a MoE aggregation.

\subsection{Missing modalities}
In practical applications, modalities can be missing for different data points. We describe this missingness pattern by missingness mask variables $m_s\in \{0,1\}$ where $m_s=1$ indicates that observe modality $s$, while $m_s=0$ means it is missing. The joint generative model with missing modalities will be of the form
$p_{\theta}(z,x,m)=p_{\theta}(z) \prod_{s\in \mcm} p_{\theta}(x_s|z) p_{\theta}(m|x)$
for some distribution $p_{\theta}(m|x)$ over the mask variables $m=(m_s)_{s\in \mcm}$. For $\mcs \subset \mcm$, we denote by $x_\mcs^o=\{x_s \colon m_s=1, s\in \mcs\}$ and $x_\mcs^m=\{x_s \colon m_s=0, s\in \mcs\}$ the set of observed, respectively missing, modalities. The full likelihood of the observed and missingness masks becomes then
$ p_{\theta}(x_\mcs^o,m)=\int p_{\theta}(z) \prod_{s\in \mcs }p_{\theta}(x_s|z) p_{\theta}(m|x) \rmd x_s^m \rmd z$.
If $p_{\theta}(m|x)$ does not depend on the observations, that is, observations are missing completely at random \citep{rubin1976inference}, then the missingness mechanisms $p_{\theta}(m|x)$ for inference approaches maximizing $p_{\theta}(x^o,m)$ can be ignored. Consequently, one can instead concentrate on maximizing $\log p_{\theta}(x^o)$ only, based on the joint generative model $p_{\theta}(z,x^o)=p_{\theta}(z)\prod_{\left\{s \in \mcm \colon m_s=1\right\}} p_{\theta}(x_s|z)$. In particular, one can employ the variational objectives above by considering only the observed modalities. Since masking operations are readily supported for the considered permutation-invariant models, appropriate imputation strategies \citep{nazabal2020handling,ma2019eddi} for the encoded features of the missing modalities are not necessarily required. Settings allowing for not (completely) at random missingness have been considered in the uni-modal case, for instance, in \citet{ipsen2021not,ghalebikesabi2021deep, gong2021variational}, and we leave multi-modal extensions thereof for future work for a given aggregation approach.

\section{Experiments}

We conduct a series of numerical experiments to illustrate the effects of different variational objectives and aggregation schemes.
Recall that the full reconstruction log-likelihood is the negative full distortion $-D_\mcm$ based on all modalities, while the full rate $R_\mcm$ is the averaged KL between the encoding distribution of all modalities and the prior. Note that mixture-based bounds maximize directly the cross-modal log-likelihood $-D^c_{\sm \mcs}$, see \eqref{eq:mixuture_info_decomp}, and do not contain a cross-rate term $R_{\sm \mcs}$, i.e. the KL between the encoding distribution for all modalities relative to a modality subset, as a regulariser, in contrast to our objective (Lemma \ref{lemma:conditional_mutual_information} and Corollary \ref{cor:lagrange}). The log-likelihood should be higher if a generative model is able to capture modality-specific information for models trained with $\beta =1$. For arbitrary $\beta$, we can take a rate-distortion perspective and look at how different generative models self-reconstruct all modalities, i.e., the full reconstruction term $-D_{\mcm}$, relative to the KL-divergence between the multi-modal encoding distribution and the prior, i.e. $R_{\mcm}$. This corresponds to a rate-distortion analysis of a VAE that merges all modalities into a single modality. A high full-reconstruction term is thus indicative of the encoder and decoder being able to reconstruct all modalities precisely so that they do not produce an average prediction. Note that neither our objective nor the mixture-based bound optimize for the full-reconstruction term directly.

\subsection{Linear multi-modal VAEs}
\label{subsec:linear}

The relationship between uni-modal VAEs and probabilistic PCA \citep{tipping1999probabilistic} has been studied in previous work \citep{dai2018connections, lucas2019don, rolinek2019variational, huang2020evaluating, mathieu2019disentangling}. We analyze how different multi-modal fusion schemes and multi-modal variational objectives affect (a) the learned generative model in terms of its true marginal log-likelihood (LLH) and (b) the latent representations in terms of information-theoretic quantities and identifiability. To evaluate the (weak) identifiability of the method, we follow \citet{khemakhem2020variational, khemakhem2020ice} to compute the mean correlation coefficient (MCC) between the true latent variables $Z$ and samples from the variational distribution $q_\phi(\cdot |x_\mcm)$ after an affine transformation using CCA.

\paragraph{Generative model.} Suppose that a latent variable $Z$ taking values in $\rset^D$ is sampled from a standard Gaussian prior $p_{\theta}(z)=\mcn(0,\I)$ generates $M$ data modalities $X_s \in \rset^{D_s}$, $D \leq D_s$, based on a linear decoding model $p_{\theta}(x_s|z) = \mcn(W_s z + b_s, \sigma^2 \I)$
for a factor loading matrix $W_s\in \rset^{D_s\times D}$, bias $b_s\in \rset^{D_s}$ and observation scale $\sigma>0$.  Note that the annealed likelihood function $\tilde{p}_{\beta,\theta}(x_s|z)= \mcn(W_s z + b_s, \beta \sigma^2 \I)$ corresponds to a scaling of the observation noise, so that we consider only the choice $\sigma=1$, set $\sigma_{\beta}=\sigma\beta^{1/2}$ and vary $\beta>0$. It is obvious that for any $\mcs \subset \mcm$, it holds that $\tilde{p}_{\beta,\theta}(x_{\mcs}|z)=\mcn(W_{\mcs}z+b_{\mcs}, \sigma^2_{\beta}\I_{\mcs})$, where $W_{\mcs}$ and $b_{\mcs}$ are given by concatenating row-wise the emission or bias matrices for modalities in $\mcs$, while $\sigma^2_\beta \I_{\mcs}$ is the diagonal matrix of the variances of the corresponding observations. By standard properties of Gaussian distributions, it follows that $\tilde{p}_{\beta,\theta}(x_{\mcs})=\mcn(b_{\mcs}, C_\mcs)$ where $C_\mcs=W_{\mcs} W_{\mcs}^\top + \sigma_{\beta}^2 \I_{\mcs}$ is the data covariance matrix. Furthermore, with $K_{\mcs}=W_{\mcs}^\top W_{\mcs} + \sigma_{\beta}^2 \I_{d}$, the adjusted posterior is $\tilde{p}_{\beta,\theta}(z|x_{\mcs})=\mcn(K_{\mcs}^{-1}W_{\mcs}^\top (x_{\mcs}-b_{\mcs}), \sigma_\beta^2 \I_{d} K_{\mcs}^{-1})$. If we sample orthogonal rows of $W$, the posterior covariance becomes diagonal so that it can - in principle - be well approximated by an encoding distribution with a diagonal covariance matrix. Indeed, the inverse of the posterior covariance matrix is only a function of the generative parameters of the modalities within $\mcs$ and can be written as the sum $\sigma^2_\beta \I +  W_\mcs^\top W_\mcs = \sigma^2_\beta \I + \sum_{s \in \mcs} W_s^\top W_s$, while the posterior mean function is $x_\mcs \mapsto (\sigma^2_\beta \I + \sum_{s \in \mcs} W_s^\top W_s)^{-1}\sum_{s \in \mcs}W_s(x_s-b_s)$.  

\paragraph{Illustrative example.} We consider a bi-modal setup comprising a less noisy and more noisy modality. Concretely, for a latent variable $Z=(Z_1,Z_2,Z_3) \in \rset^3$, assume that the observed modalities can be represented as
\begin{align*}
    X_1=&Z_0+Z_1+U_1 \\
    X_2=&Z_0+10Z_2+U_2,
\end{align*}
for a standard Gaussian prior $Z \sim \mcn(0,\I)$ and independent noise variables $U_1,U_2 \sim \mcn(0,1)$. Note that the second modality is more noisy compared to the first one. The results in Table \ref{table:gaussian_toy} for the obtained log-likelihood values show first that learnable aggregation models yield higher log-likelihoods\footnote{We found that a PoE model can have numerical issues here.}, and second that our bound yields higher log-likelihood values compared to mixture-based bounds for any given fixed aggregation model. We also compute various information theoretic quantities, confirming that our bound leads to higher full reconstructions at higher full rates and lower cross predictions at lower cross rates compared to mixture-based bounds. More flexible aggregation schemes increase the full and cross predictions for any given bound while not necessarily increasing the full or cross rates, i.e., they can result in an improved point within a rate-distortion curve for some configurations.

\begin{table}[!t]
\caption{Gaussian model with a noisy and less noisy modality. Relative difference of the true MLE vs the (analytical) LLH from the learned model in the first two columns, followed by multi-modal information theoretic quantities.}
\label{table:gaussian_toy}
\centering
\resizebox{\linewidth}{!}{
\begin{tabular}{@{}lcccccccccc@{}}
\toprule
& \multicolumn{2}{c}{Relative LLH gap} & \multicolumn{2}{c}{Full Reconstruction} & \multicolumn{2}{c}{Full Rates} & \multicolumn{2}{c}{Cross Prediction}& \multicolumn{2}{c}{Cross Rates}\\
\cmidrule(lr){2-3}\cmidrule(lr){4-5}\cmidrule(lr){6-7}\cmidrule(lr){8-9}\cmidrule(lr){10-11}  
Aggregation  & our obj.             & mixture bound & our obj.                      & mixture bound  & our obj.       & mixture bound & our ob.                       & mixture bound  & our obj.        & mixture bound \\
\midrule
PoE           & 1.29             & 7.11    & $-2.30 \cdot 10^{35} $              & $-2.2 \cdot 10^{35}$ & $2.1 \cdot 10^{35}$   & $2.0\cdot 10^{35}$ & $-2.4\cdot 10^{34}$                  & $-1.9 \cdot 10^{35}$ & $1.4\cdot 10^{35} $   & $1.7 \cdot 10^{35}$\\
MoE           & 0.11             & 0.6     & -32.07                    & -30.09    & \B 1.02       & 2.84     & -33.27                     & -28.52    & 2.37        & 19.33    \\
SumPooling    &  $ \mathbf{3.6 \cdot 10^{-5}}$         & 0.06    & \B -2.84                     & -3.23     & 2.88       &  2.82     & -52.58                     & \B -27.26    & \B 1.42        & 27.35    \\
SelfAttention &  $\mathbf{3.4 \cdot 10^{-5}} $        & 0.06    & \B -2.85                     & -3.23     & 2.87       &  2.82     & -52.59                     & \B -27.25    & \B 1.42        & 27.41   
\\
\bottomrule
\end{tabular}
}
\end{table}

\paragraph{Simulation study.} We consider $M=5$ modalities following multi-variate Gaussian laws. We consider generative models, where all latent variables are shared across all modalities, as well as generative models, where only parts of the latent variables are shared across all modalities, while the remaining latent variables are modality-specific. The setting of private latent variables can be incorporated by imposing sparsity structures on the decoding matrices and allows us to analyze scenarios with considerable modality-specific variation described through private latent variables. We provide more details about the data generation mechanisms in Appendix \ref{app:linear_model}. For illustration, we use multi-modal encoders with shared latent variables using invariant aggregations in the first case and multi-modal encoders that utilize additional equivariant aggregations for the private latent variables in the second case. Results in Table \ref{table:gaussian} suggest that more flexible aggregation schemes improve the LLH and the identifiability for both variational objectives. Furthermore, our new bound yields higher LLH for a given aggregation scheme.

\begin{table}
\caption{Gaussian model with five modalities: Relative difference of true LLH to the learned LLH. MCC to true latent. The generative model for the invariant aggregation schemes uses dense decoders, whereas the ground truth model for the permutation-equivariant encoders uses sparse decoders to account for private latent variables. We report mean values with standard deviations in parentheses over five independent runs.}
\label{table:gaussian}
\centering
\resizebox{\linewidth}{!}{
\begin{tabular}{@{}lcccccccc@{}}
\toprule
& \multicolumn{4}{c}{Invariant aggregation} & \multicolumn{4}{c}{Equivariant aggregation} \\
\cmidrule(lr){2-5}\cmidrule(lr){6-9}
& \multicolumn{2}{c}{Proposed objective} & \multicolumn{2}{c}{Mixture bound} & \multicolumn{2}{c}{Proposed objective} & \multicolumn{2}{c}{Mixture bound}\\
\cmidrule(lr){2-3}\cmidrule(lr){4-5} \cmidrule(lr){6-7}\cmidrule(lr){8-9}
Aggregation &  LLH Gap &  MCC  &  LLH Gap & MCC  \\ \midrule
PoE           & 0.03 (0.058)  & 0.75 (0.20)  & 0.04 (0.074)   & 0.77 (0.21)  & \B 0.00 (0.000)  & \B 0.91 (0.016) & 0.01 (0.001)  & 0.88 (0.011)  \\
MoE           & 0.01 (0.005)  & 0.82 (0.04) & 0.02 (0.006) & 0.67 (0.03)  & & & &\\
SumPooling    &  \B 0.00 (0.000)            &  \B 0.84 (0.00)    &  0.00 (0.002)    &  \B 0.84 (0.02) & \B 0.00 (0.000) & 0.85 (0.004)  & \B 0.00 (0.000)            & 0.82 (0.003) \\
SelfAttention &  \B 0.00 (0.003)        &  \B 0.84 (0.00)   & 0.02 (0.007) &  0.83 (0.00) & \B 0.00 (0.000)  & 0.83 (0.006) & \B 0.00 (0.000)       & 0.83 (0.003)  \\
\bottomrule
\end{tabular}
}
\end{table}

\subsection{Non-linear identifiable models}
\label{subsec:identifiable}
\paragraph{Auxiliary labels as modalities.}

\begin{figure}[htb]
    \centering
    \subfloat[Data $X$]{%
      \includegraphics[width=0.18\textwidth]{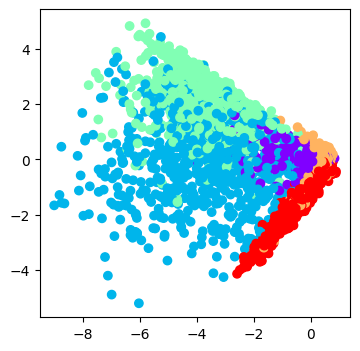}
    }
    \subfloat[Proposed objective +SumPool]{%
      \includegraphics[width=0.18\textwidth]{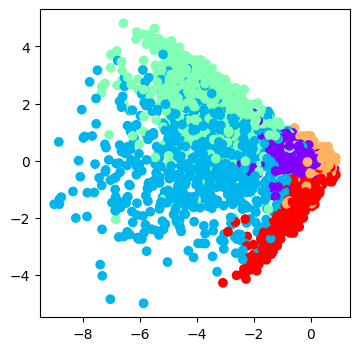}
    } 
    \subfloat[Proposed objective +PoE]{%
      \includegraphics[width=0.18\textwidth]{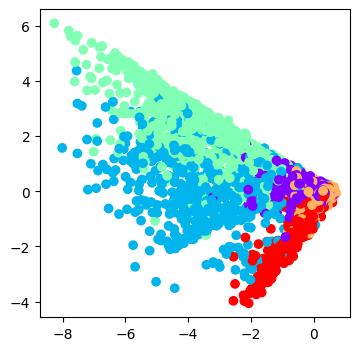}
    }
    \subfloat[Mixture bound +SumPool]{%
      \includegraphics[width=0.18\textwidth]{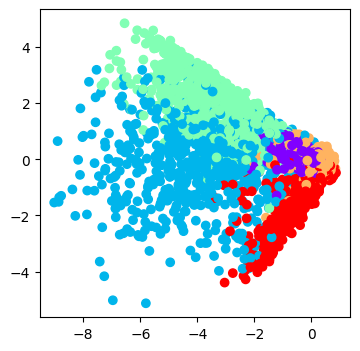}
    } 
    \subfloat[Mixture bound +PoE]{%
      \includegraphics[width=0.18\textwidth]{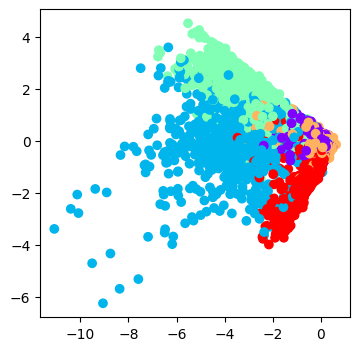}
    } \\
        \subfloat[True $Z$]{%
      \includegraphics[width=0.18\textwidth]{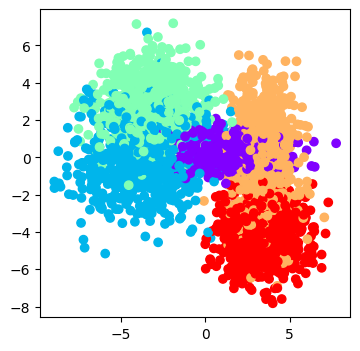}
    }
    \subfloat[Proposed objective +SumPool]{%
      \includegraphics[width=0.18\textwidth]{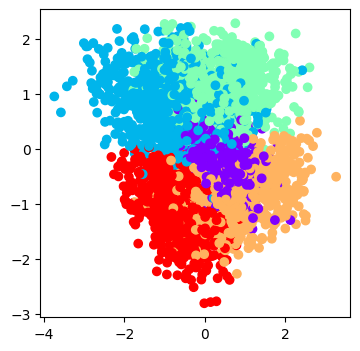}
    } 
    \subfloat[Proposed objective +PoE]{%
      \includegraphics[width=0.18\textwidth]{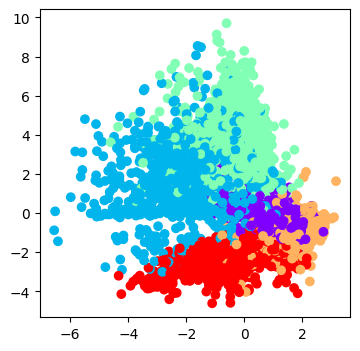}
    }
    \subfloat[Mixture bound +SumPool]{%
      \includegraphics[width=0.18\textwidth]{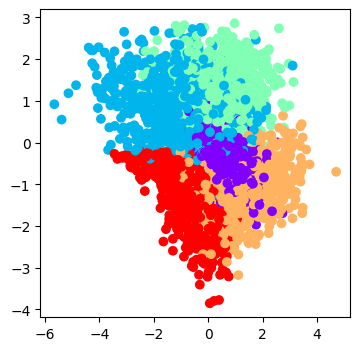}
    } 
    \subfloat[Mixture bound +PoE]{%
      \includegraphics[width=0.18\textwidth]{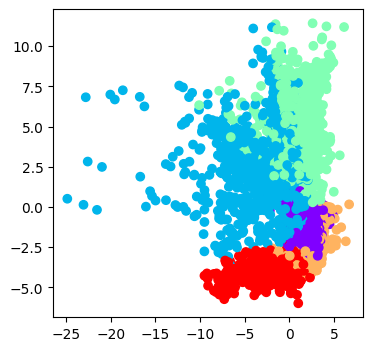}
    } \\
    \caption{Continuous data modality in (a) and reconstructions using different bounds and fusion models in (b)-(e). The true latent variables are shown in (f), with the inferred latent variables in (g)-(j) with a linear transformation indeterminacy. Labels are color-coded.}
    \label{fig:toy_ivae_example}
\end{figure}


We construct artificial data following \citet{khemakhem2020variational}, with the latent variables $Z \in \rset^{D}$ being conditionally Gaussian having means and variances that depend on an observed index value $X_2 \in [K]$. More precisely, $p_{\theta}(z|x_2)=\mcn(\mu_{x_2}, \Sigma_{x_2})$, where $\mu_c\sim \otimes~ \mcu(-5,5)$ and $\Sigma_c=\text{diag}(\Lambda_c)$, $\Lambda_c \sim \otimes~\mcu(0.5,3)$ iid for $c \in [K]$. The marginal distribution over the labels is uniform $\mcu ([K])$ so that the prior density $p_{\theta}(z)=\int_{[K]} p_{\theta}(z|x_2) p_{\theta}(x_2) \rmd x_2$ becomes a Gaussian mixture. We choose an injective decoding function $f_1 \colon \rset^D \to \rset^{D_{1}}$, $D \leq D_{1}$, as a composition of MLPs with LeakyReLUs and full rank weight matrices having monotonically increasing row dimensions \citep{khemakhem2020ice}, with iid randomly sampled entries. We assume $X_1|Z \sim \mcn(f_1(Z),  \sigma^2 \I)$ and set $\sigma=0.1$, $D=D_1=2$. $f_1$ has a single hidden layer of size $D_1=2$. One realization of bi-modal data $X$, the true latent variable $Z$, as well as inferred latent variables and reconstructed data for a selection of different bounds and aggregation schemes, are shown in Figure \ref{fig:toy_ivae_example}, with more examples given in Figures \ref{fig:toy_ivae_masked} and \ref{fig:toy_ivae_mixture}. We find that learning the aggregation model through a SumPooling model improves the data reconstruction and better recovers the ground truth latents, up to rotations, in contrast to a PoE model. Simulating five different such datasets, the results in Table \ref{table:iVAE_toy} indicate first that our bound obtains better log-likelihood estimates for different fusion schemes. Second, it demonstrates the advantages of our new fusion schemes that achieve better log-likelihoods for both bounds. Third, it shows the benefit of using aggregation schemes that have the capacity to accommodate prior distributions different from a single Gaussian. Also, MoE schemes lead to low MCC values, while PoE schemes have high MCC values. 

\begin{table}[!ht]
\caption{Non-linear identifiable model with one real-valued modality and an auxiliary label acting as a second modality: The first four rows use a fixed standard Gaussian prior, while the last four rows use a Gaussian mixture prior with $5$ components. Mean and standard deviation over 4 repetitions. Log-likelihoods are estimated using importance sampling with $64$ particles.
}
\label{table:iVAE_toy}
\centering
\resizebox{\linewidth}{!}{
\begin{tabular}{@{}lcccccc@{}}
\toprule
& \multicolumn{3}{c}{Proposed objective} & \multicolumn{3}{c}{Mixture bound}\\
\cmidrule(lr){2-4}\cmidrule(lr){5-7}
Aggregation &  LLH ($\beta=1$) & MCC ($\beta=1$) &  MCC ($\beta=0.1$)  &  LLH ($\beta=1$) & MCC ($\beta=1$) &  MCC ($\beta=0.1$) \\ \midrule
PoE                 & -43.4 (10.74) & 0.98 (0.006) & 0.99 (0.003) & -318 (361.2) & 0.97 (0.012) & 0.98 (0.007) \\
MoE                  & -20.5 (6.18)  & 0.94 (0.013) & 0.93 (0.022) & -57.9 (6.23)    & 0.93 (0.017) & 0.93 (0.025) \\
SumPooling          & -17.9 (3.92)  & 0.99 (0.004) & 0.99 (0.002) & -18.9 (4.09)    & 0.99 (0.005) & 0.99 (0.008) \\
SelfAttention        & -18.2 (4.17)  & 0.99 (0.004) & 0.99 (0.003) & -18.6 (3.73)    & 0.99 (0.004) & 0.99 (0.007) \\ 
\midrule
SumPooling          & \B -15.4 (2.12)  & \B 1.00 (0.001)    & 0.99 (0.004) & -18.6 (2.36)    & 0.98 (0.008) & 0.99 (0.006) \\
SelfAttention       & \B -15.2 (2.05)  & \B 1.00 (0.001)    & \B 1.00 (0.004)    & -18.6 (2.27)    & 0.98 (0.014) & 0.98 (0.006) \\
SumPoolingMixture    & \B -15.1 (2.15)  & \B 1.00 (0.001)    & 0.99 (0.012) & -18.2 (2.80)     & 0.98 (0.010)  & 0.99 (0.005) \\
SelfAttentionMixture & \B -15.3 (2.35)  & 0.99 (0.005) & 0.99 (0.004) & -18.4 (2.63)    & 0.99 (0.007) & 0.99 (0.007) \\
\bottomrule
\end{tabular}
}
\end{table}

\paragraph{Multiple modalities.} 

\begin{table}[!t]
\caption{Partially observed ($\eta=0.5$) and fully observed ($\eta=0)$ non-linear identifiable model with $5$ modalities: The first four rows use a fixed standard Gaussian prior, while the last four rows use a Gaussian mixture prior.}
\label{table:iVAE_mm_05}
\centering
\resizebox{\linewidth}{!}{
\begin{tabular}{@{}lcccccccc@{}}
\toprule
& \multicolumn{4}{c}{Partially observed} & \multicolumn{4}{c}{Fully observed}\\
\cmidrule(lr){2-5}\cmidrule(lr){6-9}
& \multicolumn{2}{c}{Proposed objective} & \multicolumn{2}{c}{Mixture bound} & \multicolumn{2}{c}{Proposed objective} & \multicolumn{2}{c}{Mixture bound}\\
\cmidrule(lr){2-3}\cmidrule(lr){4-5} \cmidrule(lr){6-7}\cmidrule(lr){8-9}
Aggregation &  LLH  &  MCC  &  LLH &  MCC &  LLH  &  MCC  &  LLH &  MCC  \\ \midrule
PoE                  & -250.9 (5.19)  & 0.94 (0.015) & -288.4 (8.53) &  0.93 (0.018) & -473.6 (9.04)  & 0.98 (0.005) & -497.7 (11.26)             & 0.97 (0.008) \\
MoE                  & -250.1 (4.77)   & 0.92 (0.022) & -286.2 (7.63)  & 0.90 (0.019)  & -477.9 (8.50)   & 0.91 (0.014) & -494.6 (9.20)              & 0.92 (0.004)  \\
SumPooling           & -249.6 (4.85) & 0.95 (0.016) & -275.6 (7.35) & 0.92 (0.031) & -471.4 (8.29)  & \B 0.99 (0.004) & -480.5 (8.84)            & 0.98 (0.005) \\
SelfAttention        & -249.7 (4.83) & 0.95 (0.014) & -275.5 (7.45)  & 0.93 (0.022)  & -471.4 (8.97)  & \B 0.99 (0.002) & -482.8 (10.51) & 0.98 (0.004) \\
\midrule
SumPooling           & -247.3 (4.23) & 0.95 (0.009) & -269.6 (7.42) & 0.94 (0.018) & \B -465.4 (8.16)  & 0.98 (0.002) & -475.1 (7.54)            & 0.98 (0.003) \\
SelfAttention        & -247.5 (4.22) & 0.95 (0.013) & -269.9 (6.06) & 0.93 (0.022)  & -469.3 (4.76)  & 0.98 (0.003) & -474.7 (8.20)              & 0.98 (0.002) \\
SumPoolingMixture    & \B -244.8 (4.44) & 0.95 (0.011) & -271.9 (6.54)   & 0.93 (0.021) & \B -464.5 (8.16)  & \B 0.99 (0.003) & -474.2 (7.61)            & 0.98 (0.004)  \\
SelfAttentionMixture & -245.4 (4.55)  & \B 0.96 (0.010)  & -270.3 (5.96)  & 0.94 (0.016) & \B -464.4 (8.50)     & \B 0.99 (0.003) & -473.6 (8.24)        & 0.98 (0.002)\\
\bottomrule
\end{tabular}
}
\end{table}

Considering the same generative model for $Z$ with a Gaussian mixture prior, suppose now that instead of observing the auxiliary label, we observe multiple modalities $X_s\in \rset^{D_s}$, $X_s|Z \sim \mcn (f_s(Z), \sigma^2 \I)$, for injective MLPs $f_s$ constructed as above, with $D=10$, $D_s=25$, $\sigma = 0.5$ and $K=M=5$. We consider a semi-supervised setting where modalities are missing completely at random, as in \citet{zhang2019cpm}, with a missing rate $\eta$ as the sample average of $\frac{1}{|\mcm|}\sum_{s \in \mcm} (1-M_s)$. 
Table \ref{table:iVAE_mm_05} shows that using the new variational objective improves the LLH and the identifiability of the latent representation. Furthermore, using learnable aggregation schemes benefits both variational objectives.

\subsection{MNIST-SVHN-Text} \label{sec:svhn}

\begin{table}[tb]
\caption{Test LLH estimates for the joint data (M+S+T) and marginal data (importance sampling with 512 particles). The first part of the table is based on the same generative model with shared latent variable $Z\in \rset^{40}$, while the second part of the table is based on a restrictive generative model with a shared latent variable $Z'\in \rset^{10}$ and modality-specific latent variables $\tilde{Z}_s \in \rset^{10}$.}
\label{table:llh_svhn_shared}
\centering
\resizebox{\linewidth}{!}{
\begin{tabular}{@{}lcccccccc@{}}
\toprule
& \multicolumn{4}{c}{Proposed objective} & \multicolumn{4}{c}{Mixture bound}\\
\cmidrule(lr){2-5}\cmidrule(lr){6-9}
Aggregation &  M+S+T  & M &  S  &  T & M+S+T  & M &  S  &  T   \\ \midrule
PoE+        & 6872 (9.62)   & \B 2599 (5.6)   & 4317 (1.1)   & -9 (0.2)   & 5900 (10)   & 2449 (10.4) & 3443 (11.7) & -19 (0.4)    \\
PoE           & 6775 (54.9)  & 2585 (18.7)  & 4250 (8.1)   & -10 (2.2)  & 5813 (1.2)  & 2432 (11.6) & 3390 (17.5) & -19 (0.1)   \\
MoE+          & 5428 (73.5)  & 2391 (104) & 3378 (92.9)  & -74 (88.7) & 5420 (60.1) & 2364 (33.5) & 3350 (58.1) & -112 (133.4) \\
MoE           & 5597 (26.7)  & 2449 (7.6)   & 3557 (26.4)  & -11 (0.1)  & 5485 (4.6)  & 2343 (1.8)  & 3415 (5.0)    & -17 (0.4)    \\
SumPooling  & \B 7056 (124) & 2478 (9.3)   & \B 4640 (114) & \B -6 (0.0)     & 6130 (4.4)  & 2470 (10.3) & 3660 (1.5)  & -16 (1.6)    \\
SelfAttention & \B 7011 (57.9)  & 2508 (18.2)  & \B 4555 (38.1)  & -7 (0.5)   & 6127 (26.1) & 2510 (12.7) & 3621 (8.5)  & -13 (0.2)    \\
\midrule 
PoE+        & 6549 (33.2) & 2509 (7.8) & 4095 (37.2) & -7 (0.2) & 5869 (29.6) & 2465 (4.3)  & 3431 (8.3)  & -19 (1.7) \\
SumPooling   & 6337 (24.0)   & 2483 (9.8) & 3965 (16.9) &\B  -6 (0.2) & 5930 (23.8) & 2468 (16.8) & 3491 (18.3) & -7 (0.1)  \\
SelfAttention & 6662 (20.0)   & 2516 (8.8) & 4247 (31.2) & \B -6 (0.4) & 6716 (21.8) & 2430 (26.9) & 4282 (49.7) & -27 (1.1)\\
\bottomrule
\end{tabular}
}
\end{table}

\noindent Following previous work \citep{sutter2020multimodal, sutter2021generalized,javaloy2022mitigating}, we consider a tri-modal dataset based on augmenting the MNIST-SVHN dataset \citep{shi2019variational} with a text-based modality. Herein, SVHN consists of relatively noisy images, whilst MNIST and text are clearer modalities. Multi-modal VAEs have been shown to exhibit differing performances relative to their multi-modal coherence, latent classification accuracy or test LLH, see Appendix \ref{app:evaluation} for definitions. Previous works often differ in their hyperparameters, from neural network architectures, latent space dimensions, priors and likelihood families, likelihood weightings, decoder variances, etc. We have chosen the same hyperparameters for all models, thereby providing a clearer disentanglement of how either the variational objective or the aggregation scheme affects different multi-modal evaluation measures. In particular, we consider multi-modal generative models with (i) shared latent variables and (ii) private and shared latent variables. We also consider PoE or MoE schemes (denoted PoE+, resp., MoE+) with additional neural network layers in their modality-specific encoding functions so that the number of parameters matches or exceeds those of the introduced PI models, see Appendix \ref{app:svhn_enc_arch} for details. 
For models without private latent variables, estimates of the test LLHs in Table \ref{table:llh_svhn_shared} suggest that our bound improves the LLH across different aggregation schemes for all modalities and different $\beta$s (Table \ref{table:llh_svhn_shared_beta}), with similar results for PE schemes, except for a Self-Attention model. More flexible fusion schemes yield higher LLHs for both bounds. Qualitative results for the reconstructed modalities are given in Figures \ref{fig:generation_shared} with shared latent variables, in Figure \ref{fig:generation_shared_betas} for different $\beta$-hyperparameters and in Figure \ref{fig:generation_private} for models with private latent variables. Cross-generation of the SVHN modality is challenging for the mixture-based bound with all aggregation schemes. In contrast, our bound, particularly when combined with learnable aggregation schemes, leads to more realistic samples of the cross-generated SVHN modality. No variational objective or aggregation scheme performs best across all modalities by the generative coherence measures (see Table  \ref{table:cond_coh_svhn_shared_unimodal} for uni-modal inputs, Table \ref{table:cond_coh_svhn_shared_bimodal} for bi-modal ones and Tables \ref{table:cond_coh_svhn_private_unimodal}- \ref{table:cond_coh_svhn_shared_beta_bimodal} for models with private latent variables and different $\beta$s), along with reported results from external baselines (MVAE, MMVAE, MoPoE, MMJSD, MVTCAE). Overall, our objective is slightly more coherent for cross-generating SVHN or Text, but less coherent for MNIST.
The mixture-based bound tends to improve the unsupervised latent classification accuracy across different fusion approaches and modalities, see Table \ref{table:latent_acc}. 
To provide complementary insights into the trade-offs for the different objectives and fusion schemes, we consider a multi-modal rate-distortion evaluation in Figure \ref{fig:svhn_shared_rate_distortion}. Ignoring MoE where reconstructions are similar, our bound improves the full reconstruction with higher full rates and across various fusion schemes. The mixture-based bound yields improved cross-predictions for all aggregation models, with increased cross-rate terms. Flexible PI architectures for our bound improve the full reconstruction, even at lower full rates.


\begin{figure}[!ht]
    \centering
    \subfloat[Proposed objective]{%
      \includegraphics[width=0.48\textwidth]{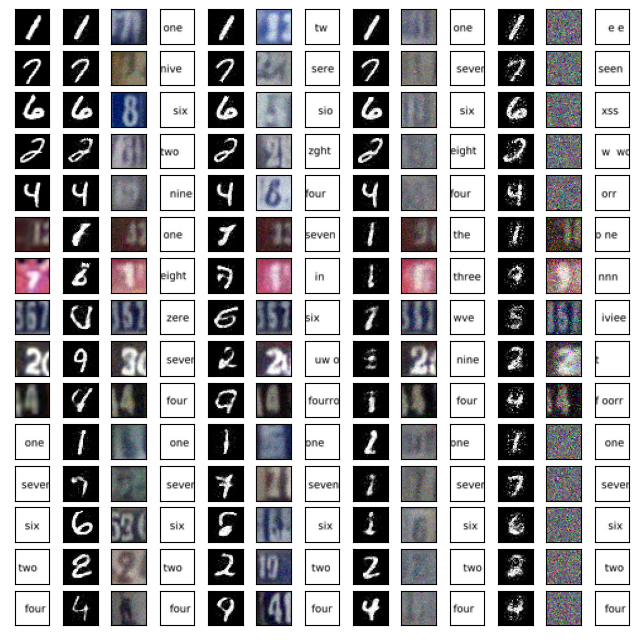}
    }
    \subfloat[Mixture-based bound]{%
      \includegraphics[width=0.48\textwidth]{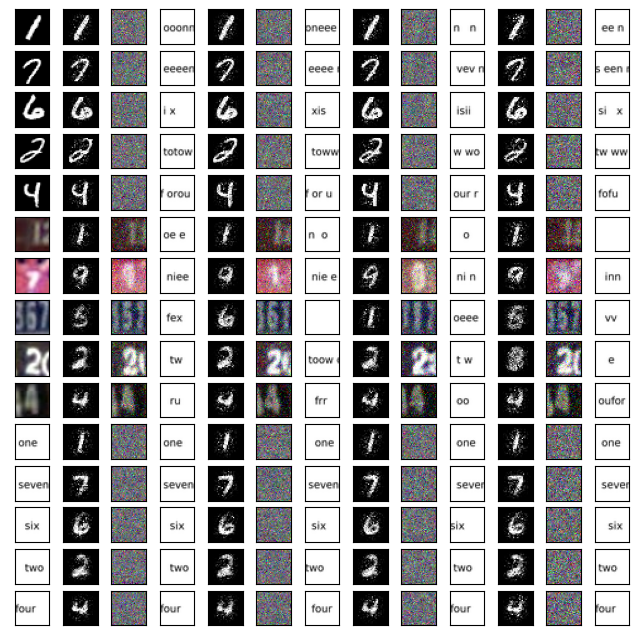}
    }
    \caption{Conditional generation for different aggregation schemes and bounds and shared latent variables. The first column is the conditioned modality. The next three columns are the generated modalities using a SumPooling aggregation, followed by the three columns for a SelfAttention aggregation, followed by PoE+, and lastly MoE+.
}
    \label{fig:generation_shared}
\end{figure}

\begin{table}[htb]
\caption{Conditional coherence with shared latent variables and uni-modal inputs. The letters on the second line represent the generated modality based on the input modalities on the line below it.}
\label{table:cond_coh_svhn_shared_unimodal}
\centering
\resizebox{\linewidth}{!}{
\begin{tabular}{@{}lcccccccccccccccccc@{}}
\toprule
\multicolumn{9}{c}{Proposed objective} & \multicolumn{9}{c}{Mixture bound}\\
\cmidrule(lr){2-10} \cmidrule(lr){11-19}
&\multicolumn{3}{c}{M} & \multicolumn{3}{c}{S} & \multicolumn{3}{c}{T} & \multicolumn{3}{c}{M} & \multicolumn{3}{c}{S} & \multicolumn{3}{c}{T}\\ 
\cmidrule(lr){2-4} \cmidrule(lr){5-7} \cmidrule(lr){8-10} \cmidrule(lr){11-13} \cmidrule(lr){14-16} \cmidrule(lr){17-19}
Aggregation & M    & S    & T    & M    & S    & T    & M    & S    & T    & M    & S    & T    & M    & S    & T    & M    & S    & T    \\
\midrule 
PoE           & \B 0.97 & 0.22 & 0.56 & \B  0.29 & 0.60 & 0.36 & 0.78 & 0.43 & \B 1.00 & 0.96 & 0.83 &\B  0.99 & 0.11 & 0.57 & 0.10 & 0.44 & 0.39 & \B 1.00\\
PoE+         & \B 0.97 & 0.15 & 0.63 & 0.24 & 0.63 &\B  0.42 & \B 0.79 & 0.35 & \B 1.00 & 0.96 & 0.83 & \B 0.99 & 0.11 & 0.59 & 0.11 & 0.45 & 0.39 & \B 1.00 \\
MoE           & 0.96 & 0.80 & \B 0.99 & 0.11 & 0.59 & 0.11 & 0.44 & 0.37 & \B 1.00 & 0.94 & 0.81 & 0.97 & 0.10 & 0.54 & 0.10 & 0.45 & 0.39 & \B 1.00 \\
MoE+          & 0.93 & 0.77 & 0.95 & 0.11 & 0.54 & 0.10 & 0.44 & 0.37 & 0.98 & 0.94 & 0.80 & 0.98 & 0.10 & 0.53 & 0.10 & 0.45 & 0.39 & \B 1.00 \\
SumPooling    & \B 0.97 & 0.48 & 0.87 & 0.25 & \B 0.72 & 0.36 & 0.73 &\B  0.48 & \B 1.00 & \B 0.97 & \B  0.86 & \B 0.99 & 0.10 & 0.63 & 0.10 & 0.45 & 0.40 & \B 1.00 \\
SelfAttention & \B 0.97 & 0.44 & 0.79 & 0.20 & 0.71 & 0.36 & 0.61 & 0.43 & \B 1.00 & \B 0.97 & \B 0.86 & \B 0.99 & 0.10 & 0.63 & 0.11 & 0.45 & 0.40 & \B 1.00 \\
\midrule
\midrule
\multicolumn{18}{c}{Results from \citet{sutter2021generalized}, \citet{sutter2020multimodal} and \citet{hwang2021multi}}\\
\midrule
MVAE & NA & 0.24 & 0.20 & 0.43 & NA & 0.30 & 0.28 & 0.17 & NA & &&&&&&&&\\
MMVAE & NA & 0.75 & \B 0.99 & 0.31 & NA & 0.30  & 0.96 & 0.76 & NA & &&&&&&&&\\
MoPoE & NA & 0.74 & \B 0.99 & 0.36 & NA & 0.34 & 0.96 & 0.76 & NA & &&&&&&&&\\
MMJSD & NA & \B 0.82 & \B 0.99 & 0.37 & NA & \B 0.36 & \B 0.97 & \B 0.83 & NA & &&&&&&&&\\
MVTCAE (w/o T) & NA & 0.60 & NA & \B 0.82 & NA & NA & NA & NA & NA & &&&&&&&&\\
\bottomrule
\end{tabular}
}
\end{table}

\begin{figure}[!htb]
    \centering
    \subfloat[Full Reconstruction $-D_\mcm$]{%
      \includegraphics[width=0.35\textwidth]{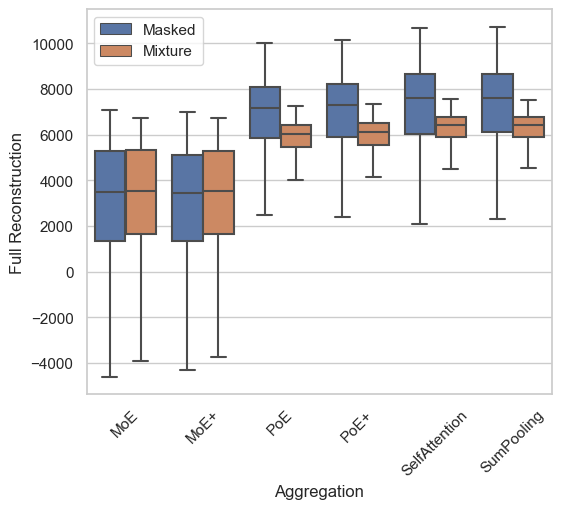}
    }
    \subfloat[Cross Prediction $-D^c_{\sm \mcs}$]{%
      \includegraphics[width=0.35\textwidth]{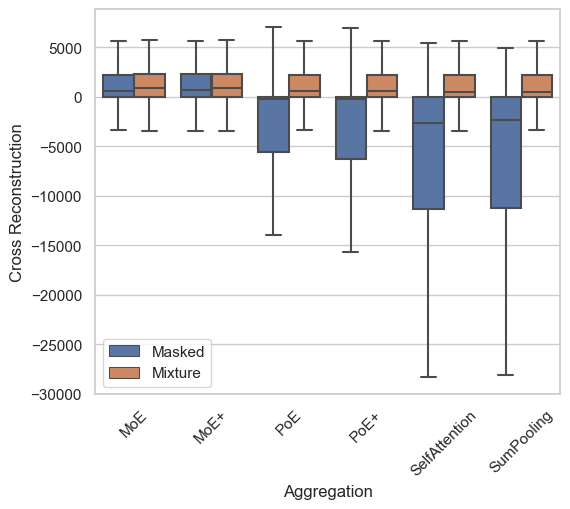}
    }\\
    \subfloat[Full Rates $R_\mcm$]{%
    \includegraphics[width=0.35\textwidth]{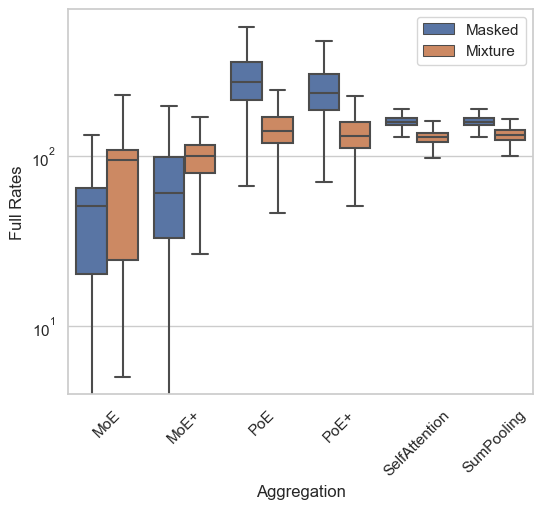}  
    }
    \subfloat[Cross Rates $R_{\sm \mcs}$]{%
      \includegraphics[width=0.35\textwidth]{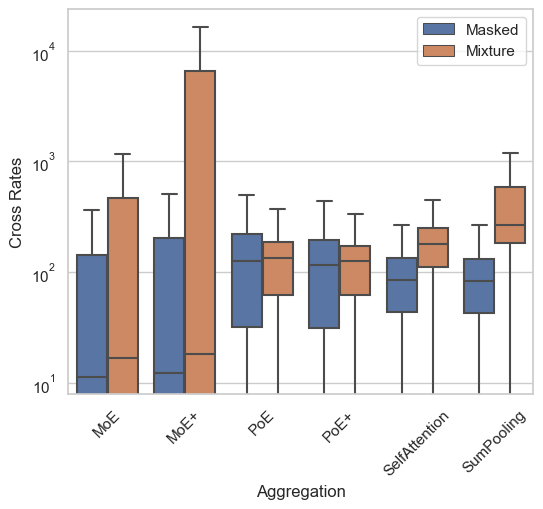}
    } 
\\
\caption{Rate and distortion terms for  MNIST-SVHN-Text with shared latent variables ($\beta=1$) for our proposed objective ('Masked') and the 'Mixture' based bound.}    \label{fig:svhn_shared_rate_distortion}
 \end{figure}

\subsection{Summary of experimental results}
We presented a series of numerical experiments that illustrate the benefits of learning more flexible aggregation models and that optimizing our variational objective leads to higher log-likelihood values. Overall, we find that for a given choice of aggregation scheme, our objective achieves a higher log-likelihood across the different experiments. Likewise, fixing the variational objective, we observe that Sum-Pooling or Self-Attention encoders achieve higher multi-modal log-likelihoods compared to MoE or PoE schemes. Moreover, we demonstrate that our variational objective results in models that differ in their information theoretic quantities compared to those models trained with a mixture-based bound. In particular, our variational objective achieves higher full-reconstruction terms with higher full rates across different data sets, aggregation schemes, and beta values. Conversely, the mixture-based bound improves the cross-prediction while having higher cross-rate terms.

\section{Conclusion}

\paragraph{Limitations.}
A drawback of our bound is that computing a gradient step is more expensive as it requires drawing samples from two encoding distributions. Similarly, learning aggregation functions are more computationally expensive compared to fixed schemes. Mixture-based bounds might be preferred if one is interested primarily in cross-modal reconstructions.

\paragraph{Outlook.}
Using modality-specific encoders to learn features and aggregating them with a PI function is clearly not the only choice for building multi-modal encoding distributions. However, it allows us to utilize modality-specific architectures for the encoding functions. Alternatively, our bounds could also be used, e.g., when multi-modal transformer architectures \citep{xu2022multimodal} encode multiple modalities with modality-specific tokenization and embeddings onto a shared latent space.
Our approach applies to general prior densities if we can compute its cross-entropy relative to the multi-modal encoding distributions. 
An example would be to apply it with more flexible prior distributions, e.g., as specified via score-based diffusion models \citep{vahdat2021score}. Likewise, diffusion models could be utilized to specify PI conditional prior distribution in the conditional bound by utilizing permutation-equivariant score models \citep{dutordoir2023neural, yim2023se, mathieu2023geometric}.



\subsection*{Acknowledgments}
This work is supported by funding from the Wellcome Leap 1kD Program and by the RIE2025 Human Potential Programme Prenatal/Early Childhood Grant (H22P0M0002), administered by A*STAR. The computational work for this article was partially performed on resources of the National Supercomputing Centre, Singapore (\url{https://www.nscc.sg}).

\bibliography{bibliography, bib2}

\newpage
\appendix

\section{Multi-modal distribution matching}
\label{app:proofs}

\begin{proof}[Proof of Proposition \ref{lemma:variational_bound_matching}]
The equations for $\mcl_{\mcs}(x_{\mcs})$ are well known for uni-modal VAEs, see for example \citet{zhao2019infovae}. To derive similar representations for the conditional bound, note that the first equation \eqref{eq:zx_cond_matching} for matching the joint distribution of the latent and the missing modalities conditional on a modality subset follows from the definition of $\mcl_{\sm \mcs}$,
\begin{align*}
&\int p_d(x_{\sm \mcs}|x_{\mcs}) \mcl_{\sm \mcs}(x, \theta, \phi) \rmd x_{\sm \mcs} \\
=&\int p_d(x_{\sm \mcs}|x_{\mcs})  \int q_{\phi}(z|x) \left[ \log p_{\theta}(x_{\sm \mcs}|z)   - \log q_{\phi}(z|x) + \log q_{\phi}(z|x_{\mcs})) \right] \rmd z \rmd x_{\sm \mcs} \\
=& \int p_d(x_{\sm \mcs}|x_{\mcs}) \log p_d(x_{\sm \mcs}|x_{\mcs}) \rmd x_{\sm \mcs} + \int p_d(x_{\sm \mcs}|x_{\mcs}) \int q_{\phi}(z|x)  \left[ \log \frac{p_{\theta}(x_{\sm \mcs}|z) q_{\phi}(z|x_{\mcs}))}{ q_{\phi}(z|x) p_d(x_{\sm \mcs}|x_{\mcs})}     \right] \rmd z \rmd x_{\sm \mcs} \\
=& -\mch (p_d(x_{\sm \mcs} |x_{\mcs})) - \KL \left( q_{\phi}(z|x) p_d(x_{\sm \mcs}|x_{\mcs}) \big| p_{\theta}(x_{\sm \mcs}|z) q_{\phi}(z|x_{\mcs}) \right).
\end{align*}
To obtain the second representation \eqref{eq:x_cond_matching} for matching the conditional distributions in the data space, observe that $p_{\theta}(x_{\sm \mcs}|x_{\mcs},z)=p_{\theta}(x_{\sm \mcs}|z)$ and consequently,

\begin{align*}
&-\int p_d(x_{\sm \mcs}|x_{\mcs}) \mcl_{\sm \mcs}(x, \theta, \phi) \rmd x_{\sm \mcs} - \mch (p_d(x_{\sm \mcs} |x_{\mcs})) \\
=&   \int p_d(x_{\sm \mcs} |x_{\mcs}) q_{\phi}(z|x) \log \frac{p_d(x_{\sm \mcs}|x_{\mcs}) q_{\phi}(z|x)}{  p_{\theta}(x_{\sm \mcs}|z)q_{\phi}(z|x_{\mcs}) }\rmd z  \rmd x_{\sm \mcs} \\
=&    \int p_d(x_{\sm \mcs}|x_{\mcs}) q_{\phi}(z|x) \log \frac{p_d(x_{\sm \mcs}|x_{\mcs}) q_{\phi}(z|x)p_{\theta}(z|x_{\mcs})}{  p_{\theta}(x_{\sm \mcs}|z)p_{\theta}(z|x_{\mcs}) q_{\phi}(z|x_{\mcs})} \rmd z  \rmd x_{\sm \mcs} \\
=&\int p_d(x_{\sm \mcs}|x_{\mcs}) q_{\phi}(z|x) \log \frac{p_d(x_{\sm \mcs}|x_{\mcs}) q_{\phi}(z|x)p_{\theta}(z|x_{\mcs})}{  p_{\theta}(x_{\sm \mcs}|z, x_{\mcs})p_{\theta}(z|x_{\mcs}) q_{\phi}(z|x_{\mcs})} \rmd z  \rmd x_{\sm \mcs} \\
=&\int p_d(x_{\sm \mcs}|x_{\mcs}) q_{\phi}(z|x) \log \frac{p_d(x_{\sm \mcs}|x_{\mcs}) q_{\phi}(z|x)p_{\theta}(z|x_{\mcs})}{  p_{\theta}(x_{\sm \mcs}| x_{\mcs})p_{\theta}(z|x_{\mcs}, x_{\sm \mcs}) q_{\phi}(z|x_{\mcs})} \rmd z  \rmd x_{\sm \mcs} \\
=& \KL(p_d(x_{\sm \mcs}|x_{\mcs}) | p_{\theta}(x_{\sm \mcs}|x_{\mcs})) + 
\int p_d(x_{\sm \mcs}|x_{\mcs}) \int q_{\phi}(z|x)  \left[ \log \frac{q_{\phi}(z|x)}{p_{\theta}(z|x)} + \log \frac{p_{\theta}(z|x_{\mcs})}{q_{\phi}(z|x_{\mcs})} \right] \rmd z  \rmd x_{\sm \mcs} .
\end{align*}
Lastly, the representation \eqref{eq:z_cond_matching} for matching the distributions in the latent space given a modality subset follows by recalling that
$$ p_d(x_{\sm \mcs}|x_\mcs) q_{\phi}(z|x) =  q^{\text{agg}}_{\phi,\sm \mcs}(z|x_\mcs) q^\star(x_{\sm \mcs}|z,x_\mcs)$$
and consequently, 

\begin{align*}
&-\int p_d(x_{\sm \mcs}|x_{\mcs}) \mcl_{\sm \mcs}(x, \theta, \phi) \rmd x_{\sm \mcs} - \mch (p_d(x_{\sm \mcs} |x_{\mcs})) \\
=&   \int p_d(x_{\sm \mcs} |x_{\mcs}) q_{\phi}(z|x) \log \frac{p_d(x_{\sm \mcs}|x_{\mcs}) q_{\phi}(z|x)}{  p_{\theta}(x_{\sm \mcs}|z)q_{\phi}(z|x_{\mcs}) }\rmd z  \rmd x_{\sm \mcs} \\
=& \int  q^{\text{agg}}_{\phi,\sm \mcs}(z|x_\mcs) q^\star(x_{\sm \mcs}|z,x_\mcs) \log \frac{ q^{\text{agg}}_{\phi,\sm \mcs}(z|x_\mcs) q^\star(x_{\sm \mcs}|z,x_\mcs) }{  p_{\theta}(x_{\sm \mcs}|z)q_{\phi}(z|x_{\mcs}) }\rmd z  \rmd x_{\sm \mcs} \\
=& \KL(q_{\phi, \sm \mcs}^{\text{agg}}(z|x_\mcs) | q_{\phi}(z|x_{ \mcs})) - \int q_{\phi, \sm \mcs}^{\text{agg}}(z|x_\mcs)  \left(  \KL(q^\star(x_{\sm \mcs}|z,x_\mcs) | p_{\theta}(x_{\sm \mcs}|z)) \right) \rmd z.
\end{align*}
\end{proof}


\section{Meta-learning and Neural processes}
\label{app:neural_process}

\paragraph{Meta-learning.} We consider a standard meta-learning setup but use slightly non-standard notations to remain consistent with notations used in other parts of this work.  
We consider a compact input or covariate space $\mca$ and output space $\mcx$. Let $\mcd= \cup_{M=1}^\infty (\mca \times \mcx)^M$ be the collection of all input-output pairs. In meta-learning, we are given a meta-dataset, i.e., a collection of elements from $\mcd$. Each individual data set $D=(a,x)=D_c \cup D_t \in \mcd $ is called a task and split into a context set $D_c=(a_c,x_c)$, and target set $D_t=(a_t,x_t)$. We aim to predict the target set from the context set. Consider, therefore, the prediction map 
$$ \pi \colon D_c=(a_c,x_c) \mapsto p(x_t|a_t,D_c)=p(x_t,x_c|a_t,a_c)/p(x_c|a_c),$$
mapping each context data set to the predictive stochastic process conditioned on $D_c$. 

\paragraph{Variational lower bounds for Neural processes.} 
Latent Neural processes \citep{garnelo2018neural, foong2020meta} approximate this prediction map by using a latent variable model with parameters $\theta$ in the form of 
$$ z \sim p_{\theta}, ~ p_{\theta}(x_t|a_t,z) = \prod_{(a,x) \in D_t} p_\epsilon(x-f_\theta(a,z))$$
for a prior $p_{\theta}$, decoder $f_{\theta}$ and a parameter free density $p_{\epsilon}$. The model is then trained by (approximately) maximizing a lower bound on $\log p_\theta(x_t|a_t,a_c,x_c)$. Note that for an encoding density $q_{\phi}$, we have that
$$ \log p_\theta(x_t|a_t,a_c,x_x) = \int q_\phi(z|x,a) \log p_{\theta}(x_t|a_t,z) \rmd z - \KL(q_\phi(z|a,x)|p_{\theta}(z|a_c,x_c)). $$
Since the posterior distribution $p_{\theta}(z|a_c,x_c)$ is generally intractable, one instead replaces it with a variational approximation or learned conditional prior $q_{\phi}(z|a_c,x_c)$, and optimizes the following objective
$$ \mcl^{\text{LNP} }_{\sm \mcc} (x,a) = \int q_\phi(z|x,a) \log p_{\theta}(x_t|a_t,z) \rmd z - \KL(q_\phi(z|a,x)|q_{\phi}(z|a_c,x_c)). $$
Note that this objective coincides with $\mcl_{\sm \mcc}$ conditioned on the covariate values $a$ and where $\mcc$ comprises the indices of the data points that are part of the context set. Using the variational lower bound $\mcl^{\text{LNP} }_{\sm \mcc}$ can yield subpar performance compared to another biased log-likelihood objective \citep{kim2018attentive, foong2020meta},
$$\log \hat{p}_\theta(x_t|a_t,a_c,x_c)=\log \left[ \frac{1}{L} \sum_{l=1}^L \exp \left( \sum_{(x_t,a_t) \in D_t} \log p_\theta(x_t|a_t,z_c^l) \right) \right]$$
for $L$ importance samples $z_c^l  \sim q_\phi(z_c|x_c,a_c)$ drawn from the conditional prior as the proposal distribution. The required number of importance samples $L$ for accurate estimation scales exponentially in the forward $\KL(q_\phi(z|x,a)|q_\phi(z|x_c,a_c))$, see \citet{chatterjee2018sample}. Unlike a variational approach, such an estimator does not enforce a Bayes-consistency term for the encoders and may be beneficial in the setting of finite data and model capacity. Note that the Bayes consistency term for including the target set $(x_t,a_t)$ into the context set $(x_c,a_c)$ writes as
$$ \KL(q_{\phi, \sm \mcc}^{\text{agg}}(z|x_c,a_c) | q_{\phi}(z|x_{c},a_c))= \KL \left(\int p_d(x_t|a_t,x_c,a_c) q_\phi(z|x,a) \rmd x_t \Bigg| q_{\phi}(z|x_{c},a_c) \right).$$
Moreover, if one wants to optimize not only the conditional but also the marginal distributions, one may additionally optimize the variational objective corresponding to $\mcl_\mcc$, i.e.,
$$ \mcl^{\text{LNP} }_{\mcc} (x_c,a_c) = \int q_\phi(z|x_c,a_c) \log p_{\theta}(x_c|a_c,z) \rmd z - \KL(q_\phi(z|a_c,x_c)|p_{\theta}(z)),$$
as we do in this work for multi-modal generative models. Note that the objective $ \mcl^{\text{LNP} }_{\mcc}$ alone can be seen as a form of a Neural Statistician model \citep{edwards2016towards} where $\mcc$ coincides with the indices of the target set, while a form of the mixture-based bound corresponds to a Neural process bound similar to variational Homoencoders \citep{hewitt2018variational}, see also the discussion in \citet{le2018empirical}. The multi-view variational information bottleneck approach developed in \citet{lee2021variational} for predicting $X_{\sm \mcs}$ given $X_{\mcs}$ involves the joint variational objective 
$$ \mcl^{\text{IB}}_\mcs(x, \theta, \phi, \beta)=\int q_{\phi}(z|x_\mcs) \log p_\theta(x_{\sm \mcs}|z) \rmd z - \beta \KL(q_\phi(z|x_\mcs)|p_\theta(z))$$
which can be interpreted as maximizing $ \hat{\I}^{\text{lb}}_{q_\phi}(X_{\sm \mcs},Z_\mcs)-\beta \hat{\I}^{\text{ub}}_{q_\phi}(X_\mcs,Z_\mcs)$ and corresponds to the variational information bottleneck for meta-learning in \citet{titsias2021information}.

\section{Information-theoretic perspective}
\label{app:info}

We recall first that the mutual information is given by
$$  \I_{q_\phi}(X_\mcs,Z_\mcs) = \int q_\phi(x_\mcs,z_\mcs) \log \frac{q_{\phi}(x_\mcs,z_\mcs)}{p_d(x_\mcs) q_{\phi,\mcs}^{\text{agg}}(z_\mcs)} \rmd z_\mcs \rmd x_\mcs,$$ where $q_{\phi,\mcs}^{\text{agg}}(z)=\int p_d(x_\mcs) q_\phi(z|x_\mcs) \rmd x_\mcs $ is the aggregated prior \citep{makhzani2016adversarial}. It can be bounded by standard \citep{barber2004algorithm, alemi2016deep, alemi2018fixing} lower and upper bounds using the  rate and distortion:
\begin{equation}
\mch_\mcs - D_\mcs \leq \mch_\mcs - D_\mcs + \Delta_1 = \I_{q_\phi}(X_\mcs,Z_\mcs) = R_\mcs - 
\Delta_2 \leq R_\mcs , \label{eq:mi_bounds}
\end{equation}
with $\Delta_1=  \int q_{\phi}^{\text{agg}}(z) \KL(q^\star(x_\mcs|z)| p_{\theta}(x_\mcs|z)) \rmd z>0 $, $\Delta_2=\KL(q_{\phi,\mcs}^{\text{agg}}(z)|p_\theta(z)) >0$ and $q^\star(x_\mcs|z) = q_\phi(x_\mcs,z)/q_{\phi}^{\text{agg}}(z)$.

Moreover, if the bounds in \eqref{eq:mutual_information_bounds} become tight with $\Delta_1=\Delta_2=0$ in the hypothetical scenario of infinite-capacity decoders and encoders, one obtains $\int p_d \mcl_\mcs=(1-\beta) \I_{q_\phi}(X_\mcs, Z_\mcs) + \mch_\mcs$. For $\beta>1$, maximizing $\mcl_\mcs$ yields an auto-decoding limit that minimizes $\I_{q_\phi}(x_\mcs,z)$ for which the latent representations do not encode any information about the data, whilst $\beta<1$ yields an auto-encoding limit that maximizes $\I_{q_\phi}(X_\mcs, Z_\mcs)$ and for which the data is perfectly encoded and decoded. 

The mixture-based bound can be interpreted as the maximization of a variational lower bound of $I_{q_\phi}(X_\mcm,Z_\mcs)$ and the minimization of a variational upper bound of $I_{q_\phi}(X_\mcs,Z_\mcs)$. Indeed, see also \citet{daunhawer2022limitations}, 
$$\mch_\mcm - D_\mcs - D_\mcs^c \leq \mch_\mcm - D_\mcs - D_\mcs^c + \Delta_1' = \I_{q_\phi}(X_\mcm,Z_\mcs),$$
where $\Delta_1'= \int q_{\phi}^{\text{agg}}(z) \KL(q^\star(x|z)| p_{\theta}(x|z)) \rmd z>0 $, due to
\begin{align*}
I_{q_\phi}(X_\mcm,Z_\mcs) &= \mch_\mcm  - \mch_{q_\phi}(X|Z_\mcs) = \mch_\mcm  + \int  p_d(x)q_{\phi}(z|x_\mcs) \left[ \log q^\star(x|z) \right] \rmd z \rmd x \\
&=\mch_\mcm + \int  p_d(x)q_{\phi}(z|x_\mcs) \left[ \log p_\theta(x_\mcs|z) +\log p_\theta(x_{\sm \mcs}|z) + \log \frac{q^\star(x|z)}{p_\theta(x|z)}  \right] \rmd z \rmd x.
\end{align*}
Recalling that $$\int p_d(\rmd x)\mcl^{\text{Mix}}_{\mcs}(x) = -D_\mcs - D^c_{\sm \mcs} - \beta R_{\mcs},$$
one can see that maximizing the first part of the mixture-based variational bound corresponds to maximizing $-D_\mcs - D^c_{\sm \mcs}$ as a variational lower bound of $I_{q_\phi}(X_\mcm,Z_\mcs)$, when ignoring the fixed entropy of the multi-modal data. Maximizing the second part of the mixture-based variational bound corresponds to minimizing $R_\mcs$ as a variational upper bound of $\I_{q_\phi}(X_\mcs,Z_\mcs)$, see \eqref{eq:mi_bounds}.


\begin{proof}[Proof of Lemma \ref{lemma:conditional_mutual_information}]
The proof follows by adapting the arguments in \citet{alemi2018fixing}. The law of $X_{\sm \mcs}$ and $Z$ conditional on $X_\mcs$ on the encoder path can be written as
$$q_{\phi}(z,x_{\sm \mcs}|x_\mcs) = p_d(x_{\sm \mcs}|x_\mcs)q_\phi(z|x) = q^{\text{agg}}_{\phi,\sm \mcs}(z|x_\mcs) q^\star(x_{\sm \mcs}|z,x_\mcs)$$ 
with $q^\star(x_{\sm \mcs}|z,x_\mcs) = q_{\phi}(z,x_{\sm \mcs}|x_\mcs)/ q^{\text{agg}}_{\phi,\sm \mcs}(z|x_\mcs)$. To prove a lower bound on the conditional mutual information, note that
\begin{align*}
    &\I_{q_\phi}(X_{\sm \mcs},Z_\mcm|X_\mcs) \\=& \int p_d(x_{\mcs}) \int q^{\text{agg}}_{\phi,\sm \mcs}(z|x_\mcs) \int q^\star(x_{\sm \mcs}|z,x_\mcs) \log \frac{q^{\text{agg}}_{\phi,\sm \mcs}(z|x_\mcs) q^\star(x_{\sm \mcs}|z,x_\mcs) }{q^{\text{agg}}_{\phi,\sm \mcs}(z|x_\mcs) p_d(x_{\sm \mcs}|x_{\sm \mcs})} \rmd z \rmd x_{\sm \mcs} \rmd x_\mcs \\
    =& \int p_d(x_\mcs) \int q^{\text{agg}}_{\phi,\sm \mcs}(z|x_\mcs) \left[ \int q^\star(x_{\sm \mcs}|z,x_\mcs) \log p_{\theta}(x_{\sm \mcs}|z)) \rmd x_{\sm \mcs} +   \KL( q^\star(x_{\sm \mcs}|z,x_\mcs) | p_{\theta}(x_{\sm \mcs}|z)) \right] \rmd z  \rmd  x_\mcs \\
    &~ - \int p_d(x_\mcs) \int p_d(x_{\sm \mcs}|x_\mcs) \log  p_d(x_{\sm \mcs}|x_\mcs) \rmd x \\
    =& \int p_d(x) \int q_{\phi}(z|x)  \log p_{\theta}(x_{\sm \mcs}|z) \rmd z \rmd x 
    - \underbrace{\int p_d(x_\mcs) \int p_d(x_{\sm \mcs}|x_{\mcs}) \log  p_d(x_{\sm \mcs}|x_{\mcs}) \rmd x}_{=-\mch_{\sm \mcs}=-\mch(X_{\sm \mcs}|X_\mcs)}\\
    &+  \underbrace{\int p_d(x_\mcs) \int q^{\text{agg}}_{\phi,\sm \mcs}(z|x_\mcs)  \KL( q^\star(x_{\sm \mcs}|z,x_\mcs) | p_{\theta}(x_{\sm \mcs}|z)) \rmd x_\mcs}_{=\Delta_{\sm \mcs,1} \geq 0}
    \\
    = & \Delta_{\sm \mcs, 1} + D_{\sm \mcs} + \mch_{\sm \mcs}.
\end{align*}
The upper bound follows by observing that
\begin{align*}
    &\I_{q_\phi}(X_{\sm \mcs},Z_\mcm|X_\mcs) \\= 
    &\int p_d(x_\mcs) \int p_d(x_{\sm \mcs}|x_{\mcs}) q_\phi(z|x) \log \frac{q_{\phi}(z|x) p_d(x_{\sm \mcs}|x_\mcs)}{q^{\text{agg}}_{\phi,\sm \mcs}(z|x_\mcs)  p_d(x_{\sm \mcs}|x_\mcs)} \rmd z \rmd x \\
    =&\int p_d(x)  \KL ( q_{\phi}(z|x) | q_{\phi}(z|x_\mcs)) \rmd x - \underbrace{\int p_d(x_\mcs)  \KL (q^{\text{agg}}_{\phi,\sm \mcs}(z|x_\mcs)  | q_{\phi}(z|x_\mcs))\rmd x_\mcs}_{=\Delta_{\sm \mcs,2} \geq 0}\\
    =& R_{\sm \mcs} - \Delta_{\sm \mcs,2}.
\end{align*}
\end{proof}

\begin{remark}[Total correlation based objectives] \normalfont
The objective suggested in \citet{hwang2021multi} is motivated by a conditional variational bottleneck perspective that aims to maximize the reduction of total correlation of $X$ when conditioned on $Z$, as measured by the conditional total correlation, see \citet{watanabe1960information, ver2015maximally, gao2019auto}, i.e.,
\begin{equation}
\text{minimizing } \Big\{ \text{TC}(X|Z)= \text{TC}(X) - \text{TC} (X,Z) = \text{TC}(X) + \I_{q_\phi}(X,Z) -\sum_{s=1}^M \I_{q_{\phi}}(X_s, Z)  \Big\}\label{eq:tc},
\end{equation}
where $\text{TC}(X)=\KL(p(x)| \prod_{i=1}^{d}p(x_i))$ for $d$-dimensional $X$. Resorting to variational lower bounds and using a constant $\beta>0$ that weights the contributions of the mutual information terms, approximations of \eqref{eq:tc} can be optimized by maximizing
$$ \mcl^{\text{TC}}(\theta,\phi,\beta)=\int \rho(\mcs) \int \left\{ q_{\phi}(z|x) \left[ \log p_{\theta}(x|z) \right] \rmd z - \beta \KL ( q_{\phi}(z|x)|q_{\phi}(z|x_{\mcs})) \right\} \rmd \mcs,$$
where $\rho$ is concentrated on the uni-modal subsets of $\mcm$. 
\end{remark}

\begin{remark}[Entropy regularised optimization] \label{remark:entropy} \normalfont
Let $q$ be a density over $\msc$, $\exp(g)$ be integrable with respect to $q$ and $\tau>0$. The maximum of
$$ f(q)=\int_{\msc} q(c) \left[ g(c)-\tau \log q(c) \right] \rmd c$$
that is attained at $q^\star(c)=\frac{1}{\mcz} \e^{g(c)/\tau}$ with normalizing constant $\mcz=\int_{\msc} \e^{g(c)/\tau}\rmd c$ is
$$ f^\star=f(q^\star)=\tau \log \int_{\msc} \e^{g(c)/\tau}\rmd c. $$
\end{remark}

\begin{remark}[Optimal variational distribution] \normalfont
\label{app:optimal_distribution}
The optimal variational density for the mixture-based \eqref{eq:mixture_bound} multi-modal objective,
\begin{align*}
    \int p_d(\rmd x) \mcl_\mcs^{\text{Mix}}(x) = &\int p_d(x_\mcs) \int q_{\phi}(z|x_\mcs) \int p_d(x_{\sm \mcs}|x_\mcs) \\& \left[ \log p_\theta(x_\mcs|z) + \log p_\theta (x_{\sm \mcs}|z) + \beta \log p_{\theta}(z) - \beta \log q_{\phi}(z|x_\mcs) \right] \rmd x_{\sm \mcs} \rmd z \rmd x_\mcs,
\end{align*}
using Remark \ref{remark:entropy}, is attained at
\begin{align*}
    q^\star(z|x_\mcs) & \propto \exp \left( \frac{1}{\beta} \int p_d(x_{\sm \mcs}|x_\mcs) \left[ \log p_\theta(x_\mcs|z) + \log p_\theta (x_{\sm \mcs}|z) + \beta \log p_{\theta}(z) \right] \rmd x_{\sm \mcs} \right) \\
    & \propto \tilde{p}_{\beta,\theta}(z|x_\mcs) \exp \left( \int p_d(x_{\sm \mcs}|x_\mcs) \log \tilde{p}_{\beta,\theta}(x_{\sm \mcs}|z)  \rmd x_{\sm \mcs} \right).
\end{align*}
\end{remark}

\section{Permutation-invariant architectures}\label{app:transformer}
\paragraph{Multi-head attention and masking.} We introduce here a standard multi-head attention \citep{bahdanau2014neural, vaswani2017attention} mapping  $\text{MHA}_{\vartheta} \colon \rset^{I \times D_X} \times \rset^{S \times D_Y} \to \rset^{I \times D_Y}$ given by
$$\text{MHA}_{\vartheta}(X,Y)=W^O\begin{bmatrix} \text{Head}^1(X,Y,Y), \ldots, \text{Head}^H(X,Y,Y) \end{bmatrix}, \quad \vartheta=(W_Q,W_K,W_V,W_O),$$ 
with output matrix $W_O \in \rset^{D_A\times D_Y}$, projection matrices $W_Q \in \rset^{D_X \times D_A}$ $W_K,W_V \in \rset^{D_Y\times D_A}$ and 
\begin{equation}
    \text{Head}^h(Q,K,V)=\text{Att}(Q W_Q^h, K W_K^h, V W_V^h) \in \rset^{I \times D} \label{eq:attention_head}
\end{equation}
where we assume that $D=D_A/H\in \nset$ is the head size. Here, the dot-product attention function is 
$$\text{Att}(Q,K,V)=\sigma(QK^\top) V,$$
where $\sigma$ is the softmax function applied to each column of $Q K^\top$. 

\paragraph{Masked multi-head attention.} In practice, it is convenient to consider masked multi-head attention models $\text{MMHA}_{\vartheta, M} \colon \rset^{I \times D_X} \times \rset^{T \times D_Y} \to \rset^{I \times D_Y}$ for mask matrix $M\in \{0,1\}^{I \times T}$ that operate on key or value sequences of fixed length $T$ where the $h$-th head \eqref{eq:attention_head} is given by 
$$\text{Head}^h(Q,K,V)= \left[ M \odot \sigma (Q W_Q^h (K W_K^h)^\top) \right] V_{t'}   W_V^h  \in \rset^{T \times D}. $$

Using the softmax kernel function 
$\text{SM}_D(q,k)=\exp(q^\top k /\sqrt{D})$, we set
\begin{equation}
    \text{MMHA}_{\vartheta,M}(X,Y)_i=\sum_{t=1}^T \sum_{h=1}^H  \frac{M_{it}\text{SM}_D(W_h^Q X_i,W_h^K Y_t)}{\sum_{t'=1}^T M_{it'}\text{SM}_D(X_i W_h^Q , Y_{t'} W_h^K )} Y_t W^V_h   W_h^O
\end{equation}
 which does not depend on $Y_t$ if $M_{\cdot t}=0$. 


\paragraph{Masked self-attention.} 
For mask matrix $M=mm^\top$ with $m=(1_{\left\{s\in\mcs\right\}})_{s\in \mcm}$, we write
$$ \text{MHA}_\vartheta(Y_\mcs,Y_\mcs)=\text{MMHA}_{\vartheta, M}(\mfi(Y_\mcs), \mfi(Y_\mcs))_{\mcs}.$$
where $\text{MMHA}_{\vartheta, M}$ operates on sequences with fixed length and $\mfi(Y_\mcs))_{t} =Y_t$ if $t \in \mcs$ and $0$ otherwise.

\paragraph{LayerNorm and SetNorm. }
Let $h\in \rset^{T \times D}$ and consider the normalization $$\text{N}(h)=\frac{h - \mu(h)}{\sigma(h)} \odot \gamma + \beta$$
where $\mu$ and $\sigma$ standardize the input $h$ by computing the mean, and the variance, respectively, over some axis of $h$, whilst $\gamma$ and $\beta$ define a transformation. LayerNorm \citep{ba2016layer} standardises inputs over the last axis, e.g., $\mu(h)=\frac{1}{D}\sum_{d=1}^D \mu_{\cdot,d}$, i.e., separately for each element. In contrast, SetNorm \citep{zhang2022set} standardises inputs over both axes, e.g., $\mu(h)=\frac{1}{TD}\sum_{t=1}^T\sum_{d=1}^D \mu_{t,d}$, thereby losing the global mean and variance only. In both cases, $\gamma$ and $\beta$ share their values across the first axis. Both normalizations are permutation-equivariant.

\paragraph{Transformer.} We consider a masked pre-layer-norm \citep{wang2019learning, xiong2020layer} multi-head transformer block  
$$(\text{MMTB}_{\vartheta,M}(\mfi_\mcs(Y_\mcs)))_\mcs=( Z + \sigma_\text{ReLU}(\text{LN}(Z)))_\mcs$$
with $\sigma_\text{ReLU}$ being a ReLU non-linearity and
$$ Z=\mfi_\mcs(Y_\mcs) + \text{MMHA}_{\vartheta, M}(\text{LN}(\mfi_\mcs(Y_\mcs)), \text{LN}(\mfi_\mcs(Y_\mcs)))$$
where $M=mm^\top$ for $m=(1_{\left\{s\in\mcs\right\}})_{s\in \mcm}$.

\paragraph{Set-Attention Encoders.} Set $g^0=\mfi_\mcs(\chi_\vartheta(h_\mcs))$ and for $k\in \{1, \ldots, L\}$, let $$g^k=\text{MMTB}_{\vartheta,M}(g_\mcs^{k-1}).$$ Then, we can express the self-attention multi-modal aggregation mapping via $f_\vartheta(h_\mcs)=\rho_\vartheta \left(\sum_{s \in \mcs} 
g^L_s\right)$.

\begin{remark}[Context-aware pooling]\normalfont
Assuming a single head for the transformer encoder in Example \ref{ex:transformer} with head size $D$ and projection matrices $W_Q,W_K,W_V\in \rset^{D_P \times D}$, the attention scores for the initial input sequence $g_\mcs = g^0_\mcs=\chi_\vartheta(h_\mcs) \in \rset^{|\mcs| \times D_P}$ are $a(g_s,g_t)=\langle W_Q^\top g_s, W_K^\top g_t \rangle /\sqrt{D}$. The attention outputs $o_s \in \rset^{D}$ for $s \in \mcs$ can then be written as
$$ o_s = \frac{1}{Z} \sum_{t \in \mcs} \kappa (g_s,g_t) v(g_t),$$
where $Z=\sum_{t\in \mcs} \kappa(g_s,g_t)>0$, $v(g_t)=W_V^\top g_t $ and
$$\kappa(g_s,g_t)=\exp (a(g_s,g_t))=\exp \left( \langle W_Q^\top g_s, W_K^\top g_t \rangle /\sqrt{D} \right) $$
can be seen as a learnable non-symmetric kernel \citep{wright2021transformers, cao2021choose}. Conceptually, the attention encoder pools a learnable $D$-dimensional function $v$ using a learnable context-dependent weighting function. While such attention models directly account for the interaction between the different encodings, a DeepSet aggregation approach may require a sufficiently high-dimensional latent space $D_P$ to achieve universal approximation properties \citep{wagstaff2022universal}. 

\end{remark}

\begin{remark}[Multi-modal time series models]
\normalfont
We have introduced a multi-modal generative model in a general form that also applies to the time-series setup, such as when a latent Markov process drives multiple time series. For example, consider a latent Markov process $Z=(Z_t)_{t \in \nset}$ with prior dynamics $p_{\theta}(z_1, \ldots, z_T)=p_{\theta}(z_1) \prod_{t=2}^T p_\theta(z_t|z_{t-1})$ for an initial density $p_{\theta}(z_1)$ and homogeneous Markov kernels $p_\theta(z_t|z_{t-1})$. Conditional on $Z$, suppose that the time-series $(X_{s,t})_{t\in \nset}$ follows the dynamics $p_{\theta}(x_{s,1}, \ldots, x_{s,T}|z_1, \ldots, z_T) = \prod_{t=2}^T p_{\theta}(x_{s,t}|z_t)$ for decoding densities $ p_{\theta}(x_{s,t}|z_t)$. A common choice \citep{chung2015recurrent} for modeling the encoding distribution for such sequential (uni-modal) VAEs is to assume the factorization $q_{\phi}(z_1, \ldots z_T|x_1, \ldots x_T)=q_{\phi}(z_1|x_1) \prod_{t=2}^T q_{\phi}(z_t|z_{t-1},x_t)$ for $x_t=(x_{s,t})_{s \in \mcm}$, with initial encoding densities $q_{\phi}(z_1|x_1)$ and encoding Markov kernels $q_{\phi}(z_t|z_{t-1},x_t)$. 
One can again consider modality-specific encodings $h_s=(h_{s,1}, \ldots, h_{s,T})$, $h_{s,t}= h_{s, \varphi}(x_{s,t})$, now applied separately at each time step that are then used to construct Markov kernels that are permutation-invariant in the form of $q'_{\phi}(z_{t}|z_{t-1},\pi h_{\varphi}(x_{t,\mcs})) = q'_{\phi}(z_{t}|z_{t-1}, h_{\varphi}(x_{t,\mcs}))$ for permutations $\pi \in \mathbb{S}_\mcs$. Alternatively, in the absence of the auto-regressive encoding structure with Markov kernels, one could also use transformer models that use absolute or relative positional embeddings across the last temporal axis but no positional embeddings across the first modality axis, followed by a sum-pooling operation across the modality axis. Note that previous works using multi-modal time series such as \citet{kramer2022reconstructing} use a non-amortized encoding distribution for the full multi-modal posterior only. A numerical evaluation of permutation-invariant schemes for time series models is, however, outside the scope of this work.
\end{remark}

\begin{remark}[Alternative multi-modal encoding models]
\normalfont
Learning different encoders for each modality subset can be an alternative when the number of modalities is small. For example, one could learn different MLP heads for each modality subset $\mcs$ that aggregates the encoded features $h_\mcs$ from modality-specific encoders that are shared for any modality mask. Our initial experiments with such encoders for the simulated linear modalities in Section \ref{subsec:linear} for $M=5$ modalities did not improve on the permutation-invariant models, while also being more computationally demanding. 

\end{remark}

\section{Permutation-equivariance and private latent variables}\label{app:equivariance}

\begin{remark}[Variational bounds with private latent variables]\normalfont
To compute the multi-modal variational bounds, notice that the required KL divergences can be written as follows:
\begin{equation*}
    \KL(q_{\phi}(z',\tilde{z}|x_\mcs)|p_{\theta}(z',\tilde{z})) 
    = \KL(q_\phi(z'|x_\mcs)|p_\theta (z')) + \int q_\phi(z'|x_\mcs) \KL(q_\phi(\tilde{z}_\mcs|z',x_\mcs) | p_\theta(\tilde{z}_\mcs|z')) \rmd z'
\end{equation*}
and
\begin{align*}
&\KL(q_{\phi}(z',\tilde{z}|x_\mcm)|q_{\phi}(z',\tilde{z}|x_\mcs)) \\
=& \KL(q_\phi(z'|x_\mcm)|(q_\phi(z'|x_\mcs)) + \int q_\phi(z'|x_\mcm) \KL( q_\phi(\msp_\mcs \tilde{z}|z',x_\mcm)|q_\phi(\msp_\mcs \tilde{z}|z',x_\mcs)) \rmd z' \\ & ~ + \int q_\phi(z'|x_\mcm)  \KL( q_\phi(\msp_{\sm\mcs} \tilde{z}|z',x_\mcs)| p_\theta(\msp_{\sm\mcs} \tilde{z}|z'))  \rmd z'
\end{align*}
where $\msp_\mcs \colon (\tilde{z}_1, \ldots \tilde{z}_M) \mapsto (\tilde{z}_s)_{s \in \mcs}$ projects all private latent variables to those contained in $\mcs$.

These expressions can be used to compute our overall variational bound $\mcl_\mcs + \mcl_{\sm \mcs}$ via
\begin{align*}
& \int q_\phi(z'|x_\mcs) q_\phi(\tilde{z}_\mcs|z',x_\mcs) ] \log p_\theta(x_\mcs|z',\tilde{z}_\mcs) \rmd z' \rmd \tilde{z}_\mcs \\
&- \KL \Big(q_\phi(z'|x_\mcs) q_\phi(\tilde{z}_\mcs|z',x_\mcs) 
\Big| p_\theta(z') p_\theta(\tilde{z}_\mcs|z') \Big) \\
&+ \int q_\phi(z'|x_\mcm) q_\phi(\tilde{z}_{\sm \mcs}|z',x_\mcm) ] \log p_\theta(x_\mcs|z',\tilde{z}_{\sm \mcs}) \rmd z' \rmd \tilde{z}_\mcs \\
&- \KL \Big(q_\phi(z',\tilde{z}_\mcs, \tilde{z}_{\sm \mcs}|x_\mcm)  \Big| q_\phi(z',\tilde{z}_\mcs, \tilde{z}_{\sm \mcs} | x_{\mcs}) \Big).
\end{align*}


\end{remark}

\begin{remark}[Comparison with MMVAE+ variational bound] \normalfont
It is instructive to compare our bound with the MMVAE+ approach suggested in \citet{palumbo2023mmvae+}. Assuming a uniform masking distribution restricted to uni-modal sets so that $\mcs=\{s\}$ for some $s\in \mcm$, we can write the bound from \citet{palumbo2023mmvae+} as $ \frac{1}{M} \sum_{s=1}^M \mcl^{\text{MMVAE+}}_{\{s\}}(x)$ with
\begin{align*}
\mcl^{\text{MMVAE+}}_{\{s\}}(x)=& \int q_\phi(z'|x_{\{s\}}) q_{\phi}(\tilde{z}_{\{s\}}| x_{\{s\}}) \Big[ \log p_\theta (x_{\{s\}}| z', \tilde{z}_{\{s\}}) \Big] \rmd z' \rmd \tilde{z}_{\{s\}} \\
& +  \int q_\phi(z'|x_{\{s\}}) r_{\phi}(\tilde{z}_{\sm \{s\}}) \Big[ \log p_\theta (x_{\sm \{s\}}| z', \tilde{z}_{\sm \{s\}})  \Big] \rmd z' \rmd \tilde{z}_{\sm \{s\}} \\
& - \KL \Big(q^{\text{MoE}}_\phi(z',\tilde{z}_\mcm |x_\mcm)  \Big| p_{\theta}(z') p_\theta(\tilde{z}_\mcm) \Big).
\end{align*}
Here, it is assumed that the multi-modal encoding distribution for computing the KL-divergence is of the form $$q^{\text{MoE}}_\phi(z',\tilde{z}_{\mcm} |x_\mcm)=\frac{1}{M} \sum_{s\in \mcm} \left( q_{\phi}(z'|x_s) q_{\phi}(\tilde{z}_s|x_s)\right)$$ and $r_\phi(\tilde{z}_{\mca}) = \prod_{s \in \mca} r_\phi(\tilde{z}_{s})$ are additional trainable \emph{prior} distributions.
\end{remark}

\begin{remark}[Cross-modal context variables and permutation-equivariant models] \normalfont
In contrast to the PoE model, where the private encodings are independent, the private encodings are dependent in the Sum-Pooling model by conditioning on a sample from the shared latent space. The shared latent variable $Z'$ can be seen as a shared cross-modal context variable, and similar probabilistic constructions to encode such context variables via permutation-invariant models have been suggested in few-shot learning algorithms \citep{edwards2016towards, giannone2022scha} or, particularly, for neural process models \citep{garnelo2018neural, garnelo2018conditional, kim2018attentive}. Permutation-equivariant models have been studied for stochastic processes where invariant priors correspond to equivariant posteriors \citep{holderrieth2021equivariant}, such as Gaussian processes or Neural processes with private latent variables, wherein dependencies in the private latent variables can be constructed hierarchically \citep{wang2020doubly, xu2023deep}.
\end{remark}

\section{Multi-modal posterior in exponential family models}
\label{app:exponential_family}
Consider the setting where the decoding and encoding distributions are of the exponential family form, that is
$$ p_{\theta}(x_s|z)=\mu_s(x_s) \exp \left[ \langle 
T_{s}(x_s), f_{s,\theta}(z)  \rangle - \log Z_s(f_{s,\theta}(z)) \right] $$
for all $s\in \mcm$, while for all $\mcs \subset \mcm$, 
$$ q_{\phi}(z|x_\mcs)=\mu(z) \exp \left[ \langle V(z), \lambda_{\phi, \mcs}(x_\mcs)  \rangle - \log \Gamma_{\mcs}(\lambda_{\phi,\mcs}(x_\mcs)) \right] $$
where $\mu_s$ and $\mu$ are base measures, $T_{s}(x_s)$ and $V(z)$ are sufficient statistics, while the natural parameters $\lambda_{\phi, \mcs}(x_\mcs) $ and $f_{s,\theta}(z)$ are parameterized by the decoder or encoder networks, respectively, with $Z_s$ and $\Gamma_\mcs$ being normalizing functions. Note that we made a standard assumption that the multi-modal encoding distribution has a fixed base measure and sufficient statistics for any modality subset. 
For fixed generative parameters $\theta$, we want to learn a multi-modal encoding distribution that minimizes  over $x_\mcs \sim p_d$,
\begin{align*}
&\KL (q_{\phi}(z|x_\mcs)| p_{\theta}(z|x_\mcs)) \\
&= \int q_{\phi}(z|x_\mcs) \Big[ \log q_{\phi}(z|x_\mcs) - \log p_{\theta}(z) - \sum_{s\in \mcs} \log p_{\theta}(x_s|z) \Big] \rmd z - \log p_{\theta}(x_\mcs) \\
&= \int q_{\phi}(z|x_\mcs) \Big[ \langle V(z), \lambda_{\phi,\mcs}(x_\mcs) \rangle - \log \Gamma_{\mcs}(\lambda_{\phi,\mcs}(x_\mcs)) - \sum_{s\in\mcs}\log \mu_s(x_s) \\
& \quad  - \big\{ \sum_{s\in \mcs} \langle T_{s,\theta}(x_s), f_{s,\theta}(z) \rangle + \log p_{\theta}(z) - \sum_{s \in \mcs} Z_s(f_{s,\theta}(z)) \big\}\Big] \rmd z - \log p_{\theta}(x_\mcs) \\
& = \int q_{\phi,\vartheta}(z|x_\mcs) \Big[  \Big\langle \begin{bmatrix} V(z)\\ 1 \end{bmatrix}, \begin{bmatrix} \lambda_{\phi,\vartheta,\mcs}(x_\mcs) \\ - \log \Gamma_\mcs(\lambda_{\phi,\vartheta,\mcs}(x_\mcs)) \end{bmatrix} \Big\rangle  -  \sum_{s \in \mcs} \Big\langle \begin{bmatrix} T_s(x_s) \\ 1 \end{bmatrix}, \begin{bmatrix} f_{\theta,s}(z) \\ b_{\theta,s}(z) \end{bmatrix} \Big\rangle  \Big] \rmd z,
\end{align*}
with $b_{\theta,s}(z)=\frac{1}{|\mcs|} p_{\theta}(z) - \log Z_s(f_{s,\theta}(z))$.

\section{Mixture model extensions for different variational bounds}
\label{app:mixture}

We consider the optimization of an augmented variational bound
\begin{align*}
\mcl(x,\theta,\phi)=\int \rho(\mcs) \Big[&\int  q_{\phi}(c, z|x_{\mcs}) \left[ \log p_{\theta}(c,x_\mcs|z) \right] \rmd z \rmd c - \KL(q_{\phi}(c, z|x_{\mcs})|p_{\theta}(c,z)) \\ +& \int  q_{\phi}(c, z|x_{\mcs}) \left[ \log p_{\theta}(x_{\sm \mcs}|z) \right] \rmd z \rmd c - \KL(q_{\phi}(c, z|x)|q_{\phi}(c, z|x_{\mcs})) \Big] \rmd \mcs.
\end{align*} 

We will pursue here an encoding approach that does not require modeling the encoding distribution over the discrete latent variables
explicitly, thus avoiding large variances in score-based Monte Carlo estimators \citep{ranganath2014black} or resorting to advanced variance reduction techniques \citep{kool2019buy} or alternatives such as continuous relaxation approaches \citep{jang2016categorical, maddison2016concrete}.

Assuming a structured variational density of the form
$$ q_{\phi}(c,z|x_{\mcs}) = q_{\phi}(z|x_{\mcs}) q_{\phi}(c|z,x_{\mcs}),$$
we can express the augmented version of \eqref{eq:marginal_bound} via
\begin{align*}
	\mcl_{\mcs}(x_{\mcs}, \theta, \phi)&= \int  q_{\phi}(c, z|x_{\mcs}) \left[ \log p_{\theta}(c,x_\mcs|z) \right] \rmd z - \beta \KL(q_{\phi}(c, z|x_{\mcs})|p_{\theta}(c,z)) \\&= \int q_{\phi}(z|x_{\mcs}) \left[f_x(z, x_{\mcs})  + f_c(z,x_{\mcs}) \right] \rmd z,
\end{align*}
where $f_x(z, x_{\mcs}) = \log p_{\theta}(x_{\mcs}|z) - \beta \log q_{\phi}(z|x_{\mcs}))$ and 
\begin{align}
	f_c(z, x_{\mcs}) = \int q_{\phi}(c|z,x_{\mcs})  \left[ - \beta \log q_{\phi}(c|z,x_{\mcs}) + \beta \log p_{\theta}(c,z) \right] \rmd c. \label{eq:cluster_objective}
\end{align}
We can also write the augmented version of  \eqref{eq:conditional_bound} in the form of
\begin{align*}
	\mcl_{\sm \mcs}(x, \theta, \phi)&=\int  q_{\phi}(c, z|x_{\mcs}) \left[ \log p_{\theta}(x_{\sm \mcs}|z) \right] \rmd z - \beta \KL(q_{\phi}(c, z|x)|q_{\phi}(c, z|x_{\mcs}))\\&=\int q_{\phi}(z|x) g_x(z,x)  \rmd z    
\end{align*}
where
\begin{align*}
	g_x(z,x) = \log p_{\theta}(x_{\sm \mcs}|z) - \beta \log q_{\phi}(z|x) + \beta \log q_{\phi}(z|x_{\mcs}) 
\end{align*}
which does not depend on the encoding density of the cluster variable. To optimize the variational bound with respect to the cluster density, we can thus optimize \eqref{eq:cluster_objective}, which attains its maximum value of 
$$f^{\star}_c(z, x_{\mcs}) = \beta \log \int p_{\theta}(c)  p_{\theta}(z|c) \rmd c = \beta \log p_{\theta}(z)$$
at $q_{\phi}(c|z,x_{\mcs}) = p_{\theta}(c|z)$ due to Remark \ref{remark:entropy} below with $g(c)=\beta \log p_{\theta}(c,z)$.

We can derive an analogous optimal structured variational density for the mixture-based and total-correlation-based variational bounds. 
First, we can write the mixture-based bound \eqref{eq:mixture_bound}  as
	\begin{align*}
		 \mcl^{\text{Mix}}_{\mcs}(x,\theta,\phi)&=\int  q_{\phi}(z|x_{\mcs}) \left[ \log p_{\theta}(c,x|z) \right] \rmd z - \beta \KL(q_{\phi}(c, z|x_{\mcs})|p_{\theta}(c,z)) \\
		 &= \int  q_{\phi}(z|x_{\mcs}) \left[  f_x^{\text{Mix}} (z,x)+ f_c(z,x) \right] \rmd z, 
	 \end{align*}
where $ f_x^{\text{Mix}} (z,x) = \log p_{\theta}(x|z) - \beta \log q_\phi(z|x_\mcs)$ and $f_c(z,x)$ has a maximum value of $f_c^\star(z,x)=\beta \log p_{\theta}(z)$. 
Second, we can express the corresponding terms from the total-correlation-based bound as
\begin{align*}
	 \mcl_{\mcs}^{\text{TC}}(\theta,\phi)&= \int  q_{\phi}(z|x) \left[ \log p_{\theta}(x|z) \right] \rmd z - \beta \KL ( q_{\phi}(c,z|x)|q_{\phi}(c,z|x_{\mcs})) \\
		&= \int  q_{\phi}(z|x) \left[  f_x^{\text{TC}} (z,x) \right] \rmd z , 
\end{align*}
where $f_x^{\text{TC}} (z,x) = \log p_{\theta}(x|z)- \beta \log q_{\phi}(z|x) + \beta \log q_{\phi}(z|x_{\mcs})$.

\section{Algorithm and STL-gradient estimators}
\label{app:stl_grad}
We consider a multi-modal extension of the sticking-the-landing (STL) gradient estimator \citep{roeder2017sticking} that has also been used in previous multi-modal bounds \citep{shi2019variational}. The gradient estimator ignores the score function terms when sampling $q_{\phi}(z|x_\mcs)$ for variance reduction purposes because it has a zero expectation. For the bounds \eqref{eq:masked_bound} that involves sampling from $q_{\phi}(z|x_\mcs)$ and $q_{\phi}(z|x_\mcm)$, we thus  ignore the score terms for both integrals. Consider the reparameterization with noise variables $\epsilon_\mcs$, $\epsilon_\mcm \sim p$ and transformations $z_\mcs=t_{{\mcs}}(\phi,\epsilon_\mcs, x_\mcs)=f_{\text{invariant-agg}}(\vartheta,\epsilon_\mcs, \mcs, h_{\mcs})$, for $h_{\mcs}=h_{\varphi,s}(x_s)_{s\in \mcs}$  and $z_\mcm=t_{\mcm}(\phi, \epsilon_\mcm, x_\mcm)=f_{\text{invariant-agg}}(\vartheta,\epsilon_\mcm, \mcm, h_{\mcm})$, for $h_{\mcm}=h_{\varphi,s}(x_s)_{s\in \mcm}$ . We need to learn only a single aggregation function that applies and masks the modalities appropriately. Pseudo-code for computing the gradients are given in Algorithm \ref{alg:implementation}. If the encoding distribution is a mixture distribution, we apply the stop-gradient operation also to the mixture weights. Notice that in the case of a mixture prior and an encoding distribution that includes the mixture component, the optimal encoding density over the mixture variable has no variational parameters and is given as the posterior density of the mixture component under the generative parameters of the prior.

\begin{algorithm}[!ht]
\KwIn{Multi-modal data point $x$, generative parameter $\theta$, variational parameters $\phi=(\varphi, \vartheta)$.}
Sample $\mcs \sim \rho$.

Sample $\epsilon_\mcs$, $\epsilon_\mcm \sim p$.

Set  $z_\mcs=t_{{\mcs}}(\phi,\epsilon_\mcs, x_\mcm)$ and $z_\mcm=t_{\mcm}(\phi, \epsilon_\mcm, x_\mcm)$.

Stop gradients of variational parameters $\phi'=\texttt{stop\_grad}(\phi)$. 

Set $\widehat{\mcl}_\mcs(\theta,\phi)=\log p_{\theta}(x_\mcs|z_\mcs) + \beta \log p_{\theta}(z_\mcs) - \beta \log q_{\phi'}(z_\mcs|x_\mcs)$. 

Set $\widehat{\mcl}_{\sm \mcs}(\theta,\phi)=\log p_{\theta}(x_{\sm \mcs}|z_\mcm) + \beta \log q_{\phi}(z_\mcm|x_\mcs) - \beta \log q_{\phi'}(z_\mcm|x_\mcm)$. 

\KwOut{$\nabla_{\theta,\phi}\left[ \widehat{\mcl}_\mcs(\theta,\phi) + \widehat{\mcl}_{\sm \mcs}(\theta,\phi) \right]$}
\caption{Single training step for computing unbiased gradients of $\mcl(x)$.}
\label{alg:implementation}
\end{algorithm}

In the case of private latent variables, we proceed analogously and rely on reparameterizations 
$z'_\mcs = t'_\mcs(\phi, \epsilon'_\mcs, x_\mcs)$ for the shared latent variable $z'_\mcs \sim q_\phi(z'|x_\mcs)$ as above and $\tilde{z}_\mcs = \tilde{t}_\mcs(\phi, z', \epsilon_\mcs, x_\mcs)=f_{\text{equivariant-agg}}(\vartheta,\tilde{\epsilon}_\mcs,z',\mcs, h_\mcs)$ for the private latent variables $\tilde{z}_\mcs \sim q_{\phi}(\tilde{z}_\mcs|z', x_\mcs)$. 
Moreover, we write $\msp_\mcs$ for a projection on the $\mcs$-coordinates. Pseudo-code for computing unbiased gradient estimates for our bound is given in Algorithm \ref{alg:implementation_private}.

\begin{algorithm}[!ht]
\KwIn{Multi-modal data point $x$, generative parameter $\theta$, variational parameters $\phi=(\varphi, \vartheta)$.}
Sample $\mcs \sim \rho$.

Sample $\epsilon'_\mcs$, $\epsilon_\mcs$, $\epsilon_{\sm \mcs}$, $\epsilon'_\mcm, \epsilon_\mcm$, $\epsilon_{\sm \mcm} \sim p$.

Set $z'_\mcs = t'_\mcs(\phi, \epsilon'_\mcs, x_\mcs)$, $\tilde{z}_\mcs = \tilde{t}_\mcs(\phi, z'_\mcs, \epsilon_\mcs, x_\mcs)$.

Set $z'_\mcm = t'_\mcm(\phi, \epsilon'_\mcm, x_\mcm)$, $\tilde{z}_\mcm = \tilde{t}_\mcm(\phi, z'_\mcm, \epsilon_\mcm, x_\mcm)$.

Stop gradients of variational parameters $\phi'=\texttt{stop\_grad}(\phi)$. 

Set $\widehat{\mcl}_\mcs(\theta,\phi)=\log p_{\theta}(x_\mcs|z'_\mcs, \tilde{z}_\mcs) + \beta \log p_{\theta}(z'_\mcs) - \beta \log q_{\phi'}(z'_\mcs|x_\mcs) + \beta \log p_{\theta}(\tilde{z}_\mcs |z'_\mcs) - \beta \log q_{\phi'}(\tilde{z}_\mcs| z'_\mcs, x_\mcs)$. 

Set $\widehat{\mcl}_{\sm \mcs}(\theta,\phi)=\log p_{\theta}(x_{\sm \mcs}|z'_\mcm) + \beta \log q_{\phi}(z'_\mcm|x_\mcs) - \beta \log q_{\phi'}(\tilde{z}_\mcm | z'_\mcm, x_\mcm) + \beta \log q_{\phi}(\msp_{\mcs}(\tilde{z}_\mcm)|z'_\mcm, x_\mcs) + \beta \log p_{\theta}(\msp_{\sm \mcs}(\tilde{z}_\mcm)|z'_\mcm, \tilde{z}_\mcm)  - \beta \log q_{\phi'}(\tilde{z}_\mcm|z'_\mcm , x_\mcm)$.

\KwOut{$\nabla_{\theta,\phi}\left[ \widehat{\mcl}_\mcs(\theta,\phi) + \widehat{\mcl}_{\sm \mcs}(\theta,\phi) \right]$}
\caption{Single training step for computing unbiased gradients of $\mcl(x)$ with private latent variables.}
\label{alg:implementation_private}
\end{algorithm}

\section{Evaluation of multi-modal generative models}
\label{app:evaluation}
We evaluate models using different metrics suggested previously for multi-modal learning, see for example \citet{shi2019variational, wu2019multimodal, sutter2021generalized}.

\paragraph{Marginal, conditional and joint log-likelihoods.} We can estimate the marginal log-likelihood using classic importance sampling
$$ \log p_{\theta}(x_{\mcs}) \approx \log \frac{1}{K}\sum_{k=1}^K \frac{p_{\theta}(z^k,x_{\mcs})}{q_{\phi}(z^k|x_{\mcs})}$$
for $z^k \sim q_{\phi}(\cdot|x_{\mcs})$. This also allows to approximate the joint log-likelihood $\log p_{\theta}(x)$, and consequently also the conditional $\log p_{\theta} (x_{\sm \mcs}|x_{\mcs})=\log p_{\theta}(x)-\log p_{\theta}(x_{\mcs})$.

\paragraph{Generative coherence with joint auxiliary labels.} Following previous work \citep{shi2019variational, sutter2021generalized, daunhawer2022limitations, javaloy2022mitigating}, we assess whether the generated data share the same information in the form of the class labels across different modalities. To do so, we use pre-trained classifiers $\text{clf}_s \colon \msx_s \to [K]$ that classify values from modality $s$ to $K$ possible classes. More precisely, for $\mcs \subset \mcm$ and $m \in \mcm$, we compute the self- ($m \in \mcs$) or cross- ($m \notin \mcs$) coherence $\text{C}_{\mcs \to m}$ as the empirical average of 
$$ 1_{\left\{\text{clf}_m(\hat{x}_m)=y\right\}},$$
over test samples $x$ with label $y$ where $\hat{z}_\mcs \sim q_{\phi}(z|x_\mcs)$ and $\hat{x}_m \sim p_{\theta}(x_m|\hat{z}_\mcs)$. The case $\mcs=\mcm\setminus\left\{m \right\}$ corresponds to a leave-one-out conditional coherence.

\paragraph{Linear classification accuracy of latent representations.} 
To evaluate how the latent representation can be used to predict the shared information contained in the modality subset $\mcs$ based on a linear model, we consider the accuracy $\text{Acc}_\mcs$ of a linear classifier $\text{clf}_z \colon \msz \to [K]$ that is trained to predict the label based on latent samples $z_\mcs\sim q_{\phi}(z_\mcs|x_\mcs^{\text{train}})$ from the training values $x_\mcs^{\text{train}}$ and evaluated on latent samples $z_\mcs \sim q_{\phi}(z|x_\mcs^{\text{test}})$ from the test values $x_\mcs^{\text{test}}$.


\section{Linear models}
\label{app:linear_model}

\paragraph{Data generation.}
We generate $5$ data sets of $N=5000$ samples, each with $M=5$ modalities. We set the latent dimension to $D=30$, while the dimension $D_s$ of modality $s$ is drawn from $\mathcal{U}(30, 60)$. We set the observation noise to $\sigma=1$, shared across all modalities, as is standard for a PCA model. 
We sample the components of $b_s$ independently from $\mcn(0,1)$. For the setting without modality-specific latent variables, $W_s$ is the orthonormal matrix from a QR algorithm applied to a matrix with elements sampled iid from $\mcu(-1,1)$. The bias coefficients $W_b$ are sampled independently from $\mcn(0,1/d)$. 
Conversely, the setting with private latent variables in the ground truth model allows us to describe modality-specific variation by considering the sparse loading matrix
\begin{align*}
    W_\mcm=\begin{bmatrix}
        W'_{1} & \tilde{W}_{1} & 0  &\hdots & 0 \\
        W'_{2} & 0 & \tilde{W}_{2}  &\hdots & 0\\
        \vdots & \vdots & \ddots & \ddots & \vdots \\
        W'_{M} & 0 & \hdots & 0&  \tilde{W}_{M}
    \end{bmatrix}.
\end{align*}
Here, $W'_{s}, \tilde{W}_{s} \in \rset^{D_s \times D'}$ with $D'=D/(M+1)=5$, Furthermore, the latent variable $Z$ can be written as $Z=(Z', \tilde{Z}_1, \ldots, \tilde{Z}_M)$ for private and shared latent variables $\tilde{Z}_s$, resp. $Z'$. We similarly generate orthonormal $\begin{bmatrix} W'_{s}, \tilde{W}_s \end{bmatrix}$ from a QR decomposition. Observe that the general generative model with latent variable $Z$ corresponds to the generative model \eqref{eq:private_model} with shared $Z'$ and private latent variables $\tilde{Z}$ with straightforward adjustments for the decoding functions. Similar models have been considered previously, particularly from a Bayesian standpoint with different sparsity assumptions on the generative parameters \citep{archambeau2008sparse, virtanen2012bayesian, zhao2016bayesian}.

\paragraph{Maximum likelihood estimation.}
Assume now that we observe $N$ data points $\{x_n\}_{n\in[N]}$, consisting of stacking the views $x_n=(x_{s,n})_{s\in \mcs}$ for each modality in $\mcs$ and let $S=\frac{1}{N} \sum_{n=1}^N (x_n - b)(x_n-b)^\top \in \rset^{D_x \times D_x}$, $D_x=\sum_{s=1}^M D_s$, be the sample covariance matrix across all modalities. Let $U_d \in \rset^{D_x \times D}$ be the matrix of the first $D$ eigenvectors of $S$ with corresponding eigenvalues $\lambda_1, \ldots \lambda_D$ stored in the diagonal matrix $\Lambda_D \in \rset^{D \times D}$. The maximum likelihood estimates are then given by $b_{\text{ML}}=\frac{1}{N} \sum_{n=1}^N x_n$, $\sigma_{\text{ML}}^2 = \frac{1}{N-D}\sum_{j=D+1}^N \lambda_j$ and $W_{\text{ML}}=U_D (\Lambda_D - \sigma_{\text{ML}}^2 \I)^{1/2}$ with the loading matrix identifiable up to rotations.

\paragraph{Model architectures.}
We estimate the observation noise scale $\sigma$ based on the maximum likelihood estimate $\sigma_{\text{ML}}$. We assume linear decoder functions $p_{\theta}(x_s|z)=\mcn(W_s^{\theta}z + b^\theta, \sigma_{\text{ML}}^2)$, fixed standard Gaussian prior $p(z)=\mcn(0,\I)$ and generative parameters $\theta=(W_1^{\theta}, b_1^{\theta}, \dots, W_M^{\theta}, b_M^{\theta})$. Details about the various encoding architectures are given in Table \ref{table:arch_gaussian}.
The modality-specific encoding functions for the PoE and MoE schemes have a hidden size of $512$, whilst they are of size $256$ for the learnable aggregation schemes having additional aggregation parameters $\varphi$.

\section{Non-linear identifiable models}

We also show in Figure \ref{fig:toy_ivae_masked} the reconstructed modality values and inferred latent variables for one realization with our bound, with the corresponding results for a mixture-based bound in Figure \ref{fig:toy_ivae_mixture}.

\begin{figure}[!ht]
    \centering
    \subfloat[Observed data $x$]{%
      \includegraphics[width=0.24\textwidth]{figures/iVAE_toy/true_x_2_masked_PoE_1.png}
    }
    \subfloat[True latents $z$]{%
      \includegraphics[width=0.24\textwidth]{figures/iVAE_toy/true_z_2_masked_PoE_1.png}
    }
    \subfloat[PoE ($x$)]{%
      \includegraphics[width=0.24\textwidth]{figures/iVAE_toy/recon_x_2_masked_PoE_1.png}
    }
    \subfloat[PoE ($z$)]{%
      \includegraphics[width=0.24\textwidth]{figures/iVAE_toy/recon_z_2_masked_PoE_1.png}
    } \\
    \subfloat[MoE ($x$)]{%
      \includegraphics[width=0.24\textwidth]{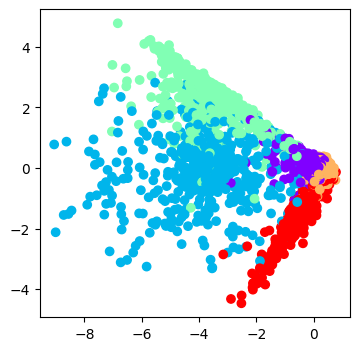}
    }
    \subfloat[MoE ($z$)]{%
      \includegraphics[width=0.24\textwidth]{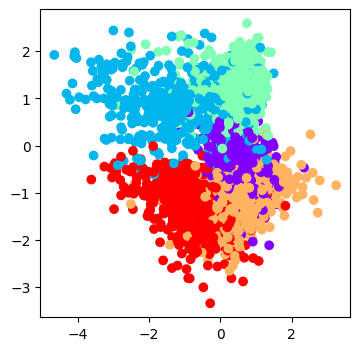}
    } 
    \subfloat[SumP, $K=1$ ($x$)]{%
      \includegraphics[width=0.24\textwidth]{figures/iVAE_toy/recon_x_2_masked_SumPooling_1.png}
    }
    \subfloat[SumP, $K=1$ ($z$)]{%
      \includegraphics[width=0.24\textwidth]{figures/iVAE_toy/recon_z_2_masked_SumPooling_1.png}
    } \\
    \subfloat[SumP, $K=5$ ($x$)]{%
      \includegraphics[width=0.24\textwidth]{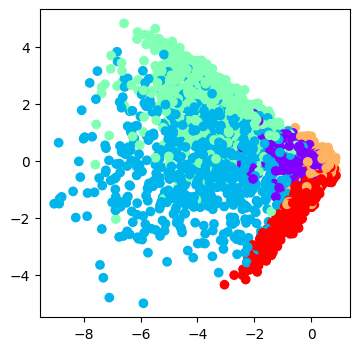}
    }
    \subfloat[SumP, $K=5$ ($z$)]{%
      \includegraphics[width=0.24\textwidth]{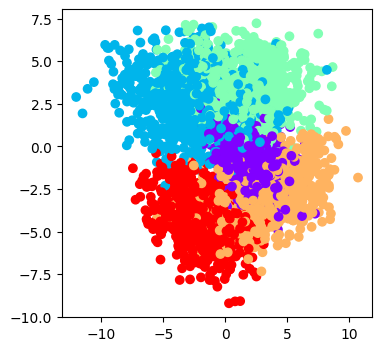}
    } 
    \subfloat[SumPM, $K=5$ ($z$)]{%
      \includegraphics[width=0.24\textwidth]{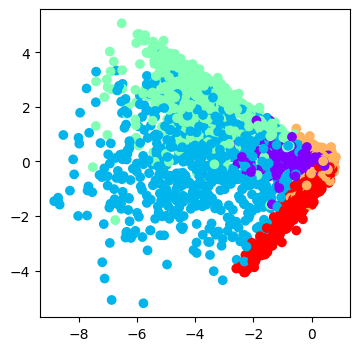}
    }
    \subfloat[SumPM, $K=5$ ($z$)]{%
      \includegraphics[width=0.24\textwidth]{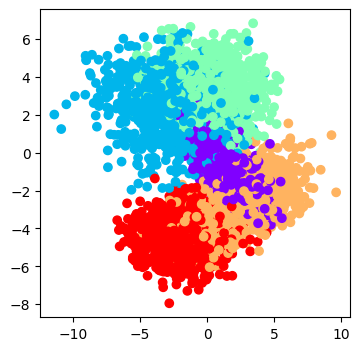}
    } \\
    \caption{Bi-modal non-linear model with label and continuous modality based on our proposed objective. SumP: SumPooling, SumPM: SumPoolingMixture.}
    \label{fig:toy_ivae_masked}
 \end{figure}

 \begin{figure}[!ht]
    \centering
    \subfloat[Observed data $x$]{%
      \includegraphics[width=0.24\textwidth]{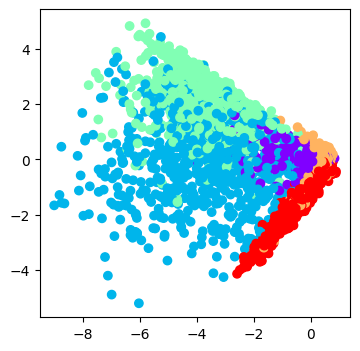}
    }
    \subfloat[True latents $z$]{%
      \includegraphics[width=0.24\textwidth]{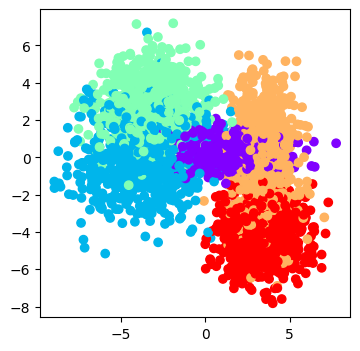}
    }
    \subfloat[PoE ($x$)]{%
      \includegraphics[width=0.24\textwidth]{figures/iVAE_toy/recon_x_2_mixture_PoE_1.png}
    }
    \subfloat[PoE ($z$)]{%
      \includegraphics[width=0.24\textwidth]{figures/iVAE_toy/recon_z_2_mixture_PoE_1.png}
    } \\
    \subfloat[MoE ($x$)]{%
      \includegraphics[width=0.24\textwidth]{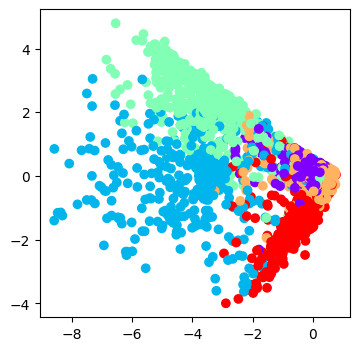}
    }
    \subfloat[MoE ($z$)]{%
      \includegraphics[width=0.24\textwidth]{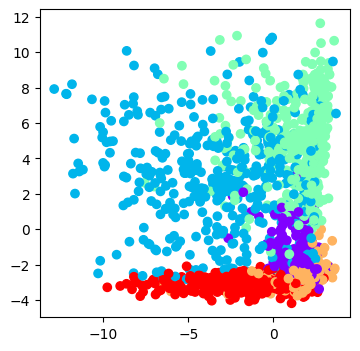}
    } 
    \subfloat[SumP, $K=1$ ($x$)]{%
      \includegraphics[width=0.24\textwidth]{figures/iVAE_toy/recon_x_2_mixture_SumPooling_1.png}
    }
    \subfloat[SumP, $K=1$ ($z$)]{%
      \includegraphics[width=0.24\textwidth]{figures/iVAE_toy/recon_z_2_mixture_SumPooling_1.png}
    } \\
    \subfloat[SumP, $K=5$ ($x$)]{%
      \includegraphics[width=0.24\textwidth]{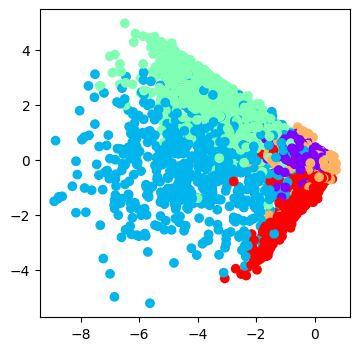}
    }
    \subfloat[SumP, $K=5$ ($z$)]{%
      \includegraphics[width=0.24\textwidth]{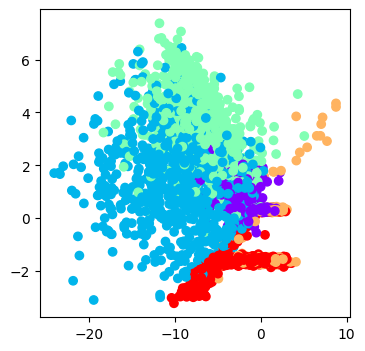}
    } 
    \subfloat[SumPM, $K=5$ ($z$)]{%
      \includegraphics[width=0.24\textwidth]{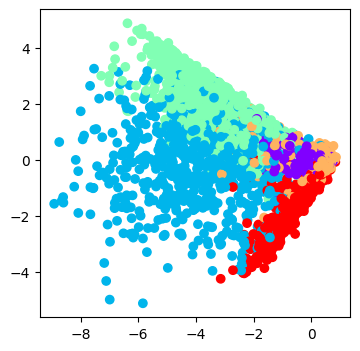}
    }
    \subfloat[SumPM, $K=5$ ($z$)]{%
      \includegraphics[width=0.24\textwidth]{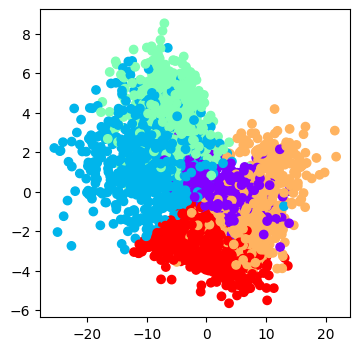}
    } \\
    \caption{Bi-modal non-linear model with label and continuous modality based on mixture bound. SumP: SumPooling, SumPM: SumPoolingMixture.}
    \label{fig:toy_ivae_mixture}
 \end{figure}

\clearpage
\section{MNIST-SVHN-Text}
\label{app:svhn}

\subsection{Training hyperparamters}

The MNIST-SVHN-Text data set is taken from the code accompanying \citet{sutter2021generalized} with around 1.1 million train and 200k test samples. All models are trained for 100 epochs with a batch size of 250 using Adam \citep{kingma2014adam} and a cosine decay schedule from 0.0005 to 0.0001.

\subsection{Multi-modal rates and distortions}

 \begin{figure}[!htb]
    \centering
    \subfloat[Full Reconstruction $-D_\mcm$]{%
      \includegraphics[width=0.35\textwidth]{figures/svhn_shared/full_recon.png}
    }
    \subfloat[Cross Prediction $-D^c_{\sm \mcs}$]{%
      \includegraphics[width=0.35\textwidth]{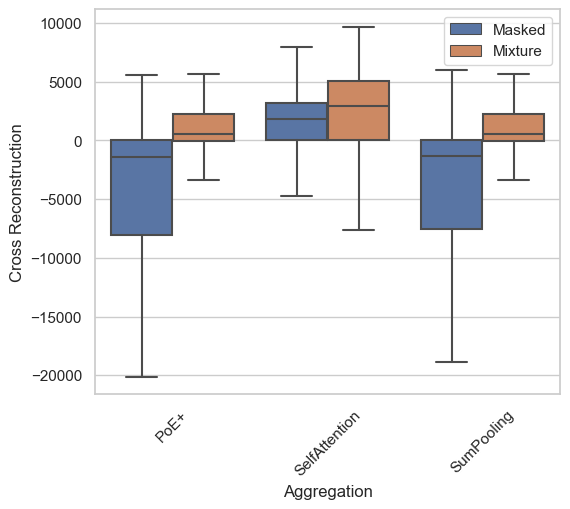}
    } \\
    \subfloat[Full Rates $R_\mcm$]{%
    \includegraphics[width=0.35\textwidth]{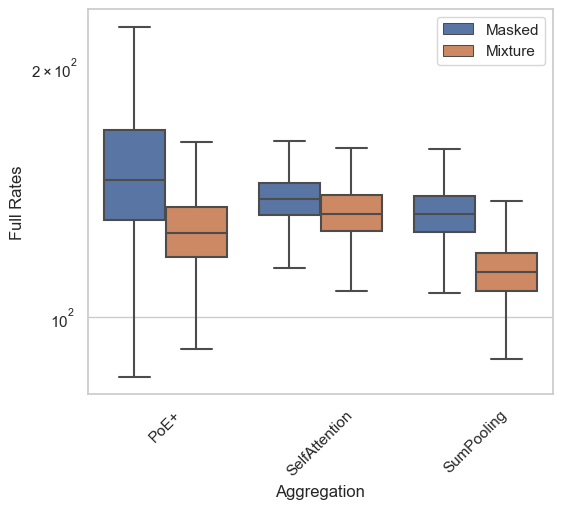}  
    }
    \subfloat[Cross Rates $R_{\sm \mcs}$]{%
      \includegraphics[width=0.35\textwidth]{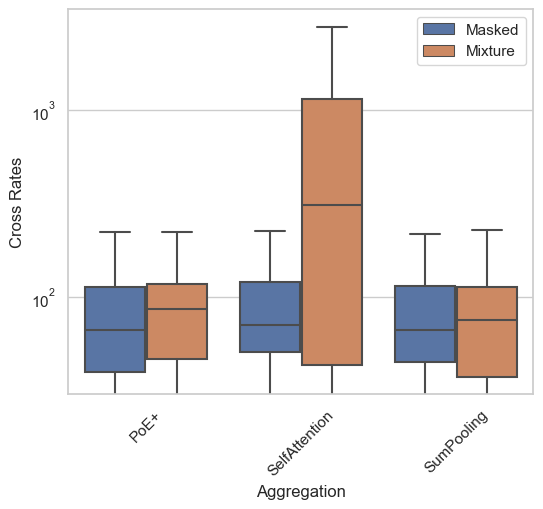}
    } 
\\
    \caption{Rate and distortion terms for  MNIST-SVHN-Text with shared and private latent variables.}
    \label{fig:svhn_private}
 \end{figure}

  \begin{figure}[!htb]
    \centering
    \subfloat[Full Reconstruction $-D_\mcm$]{%
      \includegraphics[width=0.35\textwidth]{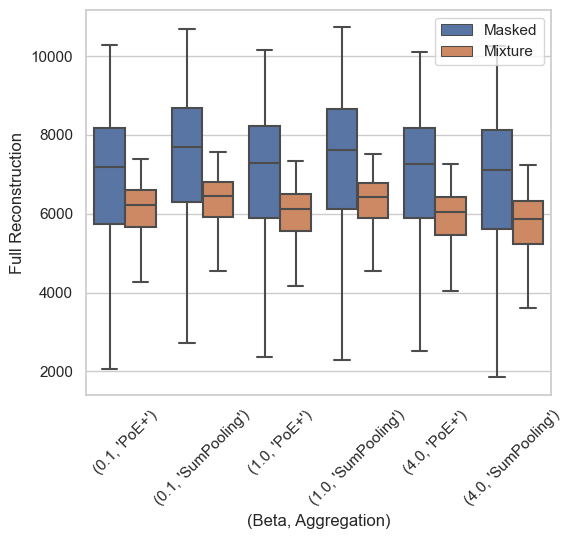}
    }
    \subfloat[Cross Prediction $-D^c_{\sm \mcs}$]{%
      \includegraphics[width=0.35\textwidth]{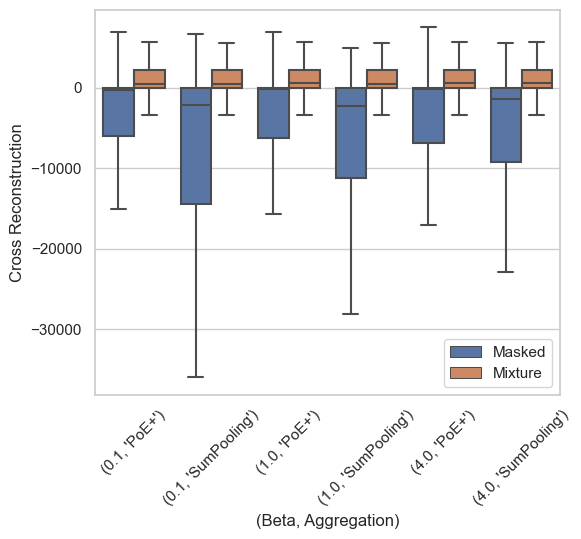}
    }\\
    \subfloat[Full Rates $R_\mcm$]{%
    \includegraphics[width=0.35\textwidth]{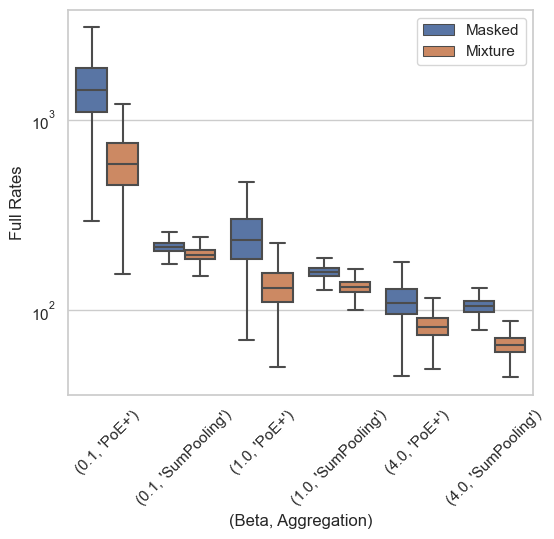}  
    }
    \subfloat[Cross Rates $R_{\sm \mcs}$]{%
      \includegraphics[width=0.35\textwidth]{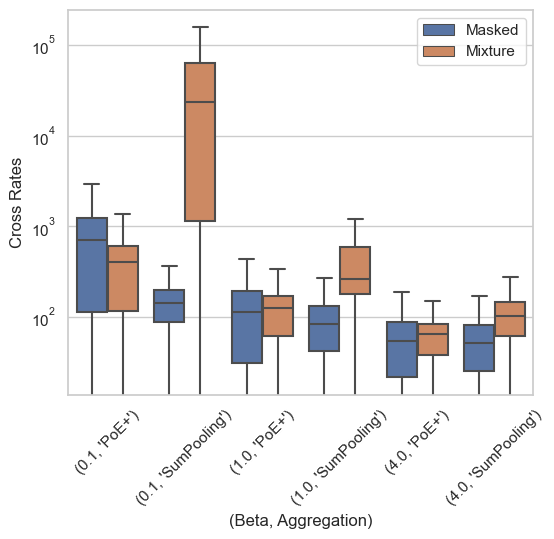}
    } 
    \caption{Rate and distortion terms for  MNIST-SVHN-Text with shared latent variables and different $\beta$.}
    \label{fig:svhn_betas}
 \end{figure}

\clearpage
\subsection{Log-likelihood estimates}

\begin{table}[!h]
\caption{Test log-likelihood estimates for varying $\beta$ choices for the joint data (M+S+T) as well as for the marginal data of each modality based on importance sampling (512 particles). Multi-modal generative model with a $40$-dimensional shared latent variable. The second part of the Table contains reported log-likelihood values from baseline methods that, however, impose more restrictive assumptions on the decoder variances, which likely contributes to much lower log-likelihood values reported in previous works, irrespective of variational objectives and aggregation schemes.}
\label{table:llh_svhn_shared_beta}
\centering
\resizebox{\linewidth}{!}{
\begin{tabular}{@{}lcccccccc@{}}
\toprule
& \multicolumn{4}{c}{Proposed objective} & \multicolumn{4}{c}{Mixture bound}\\
\cmidrule(lr){2-5}\cmidrule(lr){6-9}
($\beta$, Aggregation) &  M+S+T  & M &  S  &  T & M+S+T  & M &  S  &  T   \\ \midrule
(0.1, PoE+)       & 5433 (24.5)  & 1786 (41.6) & 3578 (63.5)  & -29 (2.4) & 5481 (18.4) & 2207 (19.8) & 3180 (33.7) & -39 (1.0)   \\
(0.1, SumPooling) & \B 7067 (78.0)    & 2455 (3.3)  & \B  4701 (83.5)  & -9 (0.4)  & 6061 (15.7) & 2398 (9.3)  & 3552 (7.4)  & -50 (1.9) \\
(1.0, PoE+)       & 6872 (9.6)   & 2599 (5.6)  & 4317 (1.1)   & -9 (0.2)  & 5900 (10.0)   & 2449 (10.4) & 3443 (11.7) & -19 (0.4) \\
(1.0, SumPooling) & 7056 (124.4) & 2478 (9.3)  & 4640 (113.9) & -6 (0.0)    & 6130 (4.4)  & 2470 (10.3) & 3660 (1.5)  & -16 (1.6) \\
(4.0, PoE+)       & 7021 (13.3)  & \B  2673 (13.2) & 4413 (30.5)  & \B -5 (0.1)  & 5895 (6.2)  & 2484 (5.5)  & 3434 (2.2)  & -13 (0.4) \\
(4.0, SumPooling) & 6690 (113.4) & 2483 (9.9)  & 4259 (117.2) & \B -5 (0.0)    & 5659 (48.3) & 2448 (10.5) & 3233 (27.7) & -10 (0.2) \\

\midrule
\midrule
\multicolumn{9}{c}{Results from \citet{sutter2021generalized} and \citet{sutter2020multimodal}}\\
\midrule
MVAE & -1790 (3.3) & NA & NA & NA \\
MMVAE & -1941 (5.7) & NA & NA &NA \\
MoPoE & -1819 (5.7)  & NA & NA &NA \\
MMJSD & -1961 (NA)  & NA & NA &NA \\
\bottomrule
\end{tabular}
}
\end{table}

\subsection{Generated modalities}

 \begin{figure}[!htbp]
    \centering
    \subfloat[Proposed objective, $\beta=0.1$]{%
      \includegraphics[width=0.24\textwidth]{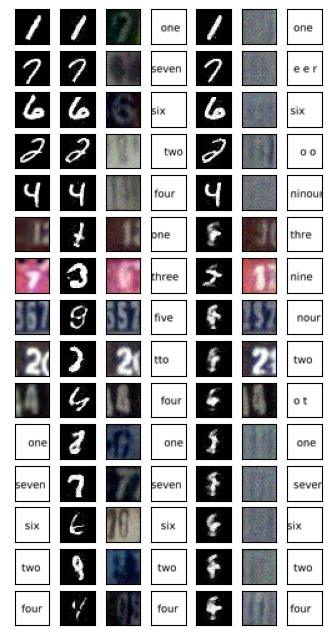}
    }
    \subfloat[Proposed objective, $\beta=4$]{%
      \includegraphics[width=0.24\textwidth]{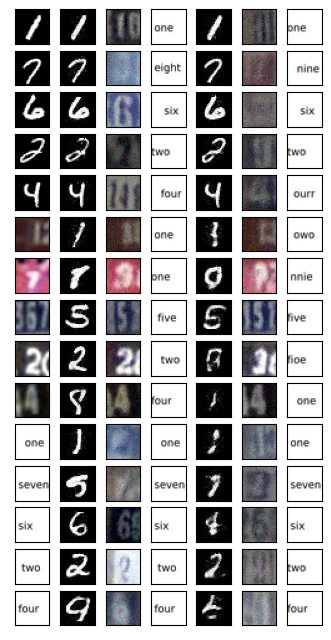}
    }
    \subfloat[Mixture-based bound, $\beta=0.1$]{%
      \includegraphics[width=0.24\textwidth]{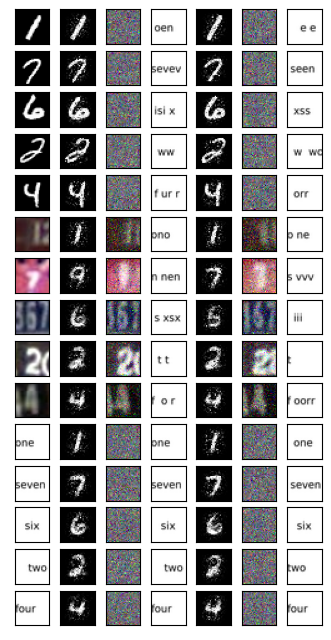}
    }
    \subfloat[Mixture-based bound, $\beta=4$]{%
      \includegraphics[width=0.24\textwidth]{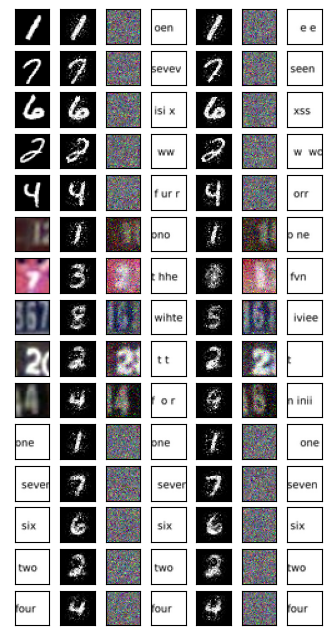}
    }
    \caption{Conditional generation for different $\beta$ parameters. The first column is the conditioned modality. The next three columns are the generated modalities using a SumPooling aggregation, followed by the three columns for a PoE+ scheme.
}
    \label{fig:generation_shared_betas}
\end{figure}

 \begin{figure}[!htbp]
    \centering
    \subfloat[Our bound]{%
      \includegraphics[width=0.33\textwidth]{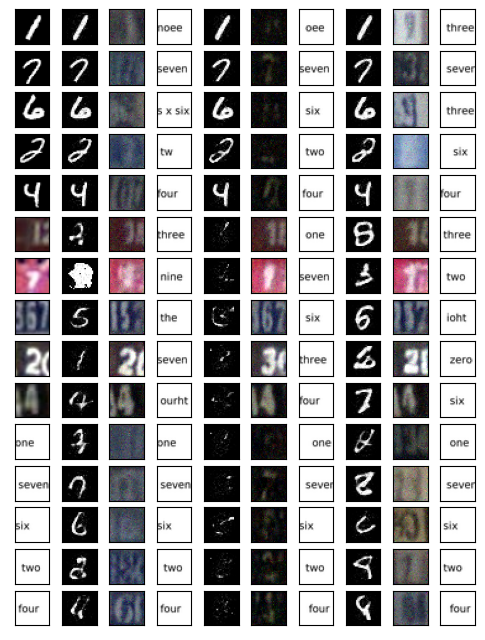}
    }
    \subfloat[Mixture-based bound]{%
      \includegraphics[width=0.33\textwidth]{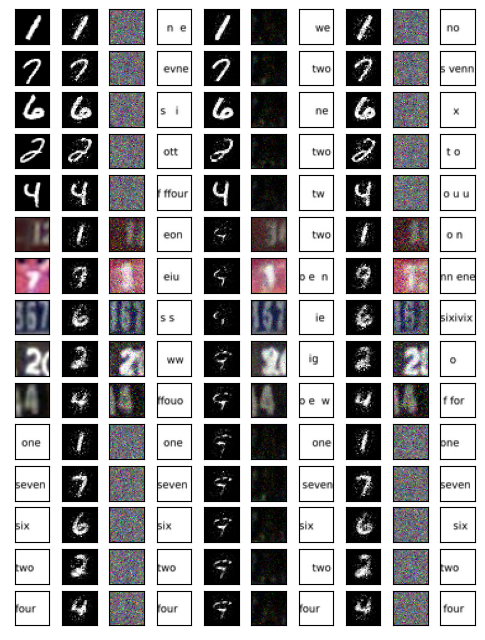}
    }
    \caption{Conditional generation for permutation-equivariant schemes and private latent variable constraints. The first column is the conditioned modality. The next three columns are the generated modalities using a SumPooling aggregation, followed by the three columns for a SelfAttention scheme and a PoE model.
}
    \label{fig:generation_private}
\end{figure}

\newpage
\subsection{Conditional coherence}

\begin{table}[htb]
\caption{Conditional coherence for models with shared latent variables and bi-modal conditionals. The letters on the second line represent the modality which is generated based on the sets of modalities on the line below it.}
\label{table:cond_coh_svhn_shared_bimodal}
\centering
\resizebox{\linewidth}{!}{
\begin{tabular}{@{}lcccccccccccccccccc@{}}
\toprule
\multicolumn{9}{c}{Proposed objective} & \multicolumn{9}{c}{Mixture bound}\\
\cmidrule(lr){2-10} \cmidrule(lr){11-19}
&\multicolumn{3}{c}{M} & \multicolumn{3}{c}{S} & \multicolumn{3}{c}{T} & \multicolumn{3}{c}{M} & \multicolumn{3}{c}{S} & \multicolumn{3}{c}{T}\\ 
\cmidrule(lr){2-4} \cmidrule(lr){5-7} \cmidrule(lr){8-10} \cmidrule(lr){11-13} \cmidrule(lr){14-16} \cmidrule(lr){17-19}
Aggregation& M+S  & M+T  & S+T  & M+S  & M+T  & S+T  & M+S  & M+T  & S+T  & M+S  & M+T  & S+T  & M+S  & M+T  & S+T  & M+S  & M+T  & S+T  \\
\midrule 
PoE         & \B 0.98                  & \B 0.98                  & 0.60                 & 0.75                  & \B 0.58                  & 0.77                 & 0.82                  & \B 1.00                  & \B 1.00                 & 0.96                  & 0.97                  & 0.95                 & 0.61                  & 0.11                  & 0.61                 & 0.45                  & 0.99                  & 0.98                \\

PoE+          & 0.97                  & \B 0.98                  & 0.55                 & 0.73                  & 0.52                  & 0.75                 & \B 0.83                  & \B 1.00                  & 0.99                 & 0.97                  & 0.97                  & \B 0.96                 & 0.64                  & 0.11                  & 0.63                 & 0.45                  & 0.99                  & 0.97                 \\
MoE          & 0.88                  & 0.97                  & 0.90                 & 0.35                  & 0.11                  & 0.35                 & 0.41                  & 0.72                  & 0.69                 & 0.88                  & 0.96                  & 0.89                 & 0.32                  & 0.10                  & 0.33                 & 0.42                  & 0.72                  & 0.69                 \\
MoE+          & 0.85                  & 0.94                  & 0.86                 & 0.32                  & 0.10                  & 0.32                 & 0.40                  & 0.71                  & 0.67                 & 0.87                  & 0.96                  & 0.89                 & 0.32                  & 0.10                  & 0.32                 & 0.42                  & 0.72                  & 0.69                 \\
SumPooling   & 0.97                  & 0.97                  & 0.86                 & 0.78                  & 0.30                  & \B 0.80                 & 0.76                  & 0.99                  & \B 1.00                 & 0.97                  & 0.97                  & 0.95                 & 0.65                  & 0.10                  & 0.65                 & 0.45                  & 0.99                  & 0.97                 \\

SelfAttention & 0.97                  & 0.97                  & 0.82                 & \B 0.76                  & 0.30                  & 0.78                 & 0.69                  & \B 1.00                  & \B 1.00                 & 0.97                  & 0.97                  & 0.99                 & 0.66                  & 0.10                  & 0.65                 & 0.45                  & 0.99                  & \B 1.00                 \\
\midrule
\midrule
\multicolumn{18}{c}{Results from \citet{sutter2021generalized}, \citet{sutter2020multimodal} and \citet{hwang2021multi}}\\
\midrule
MVAE & NA & NA & 0.32 & NA & 0.43 & NA & 0.29 & NA  & NA &&&&&&&&& \\
MMVAE & NA & NA & 0.87& NA  & 0.31 & NA  &  0.84  & NA & NA &&&&&&&&&\\
MoPoE & NA & NA & 0.94& NA   & 0.36   & NA &  0.93 &NA & NA  &&&&&&&&& \\
MMJSD & NA & NA & 0.95 & NA & 0.48 & NA  & 0.92 & NA &NA &&&&&&&&&\\
MVTCAE (w/o T) & NA & NA & NA & NA & NA & NA & NA & NA & NA  &&&&&&&&& \\
\bottomrule
\end{tabular}
}
\end{table}

\begin{table}[!ht]
\caption{Conditional coherence for models with private latent variables and uni-modal conditionals. The letters on the second line represent the modality which is generated based on the sets of modalities on the line below it.}
\label{table:cond_coh_svhn_private_unimodal}
\centering
\resizebox{\linewidth}{!}{
\begin{tabular}{@{}lcccccccccccccccccc@{}}
\toprule
\multicolumn{9}{c}{Proposed objective} & \multicolumn{9}{c}{Mixture bound}\\
\cmidrule(lr){2-10} \cmidrule(lr){11-19}
&\multicolumn{3}{c}{M} & \multicolumn{3}{c}{S} & \multicolumn{3}{c}{T} & \multicolumn{3}{c}{M} & \multicolumn{3}{c}{S} & \multicolumn{3}{c}{T}\\ 
\cmidrule(lr){2-4} \cmidrule(lr){5-7} \cmidrule(lr){8-10} \cmidrule(lr){11-13} \cmidrule(lr){14-16} \cmidrule(lr){17-19}
Aggregation & M    & S    & T    & M    & S    & T    & M    & S    & T    & M    & S    & T    & M    & S    & T    & M    & S    & T    \\
\midrule 
PoE+         & 0.97 & 0.12 & 0.13 & 0.20 & 0.62 & 0.24 & 0.16 & 0.15 & \B 1.00 & 0.96 & 0.83 & \B 0.99 & 0.11 & 0.58 & 0.11 & 0.44 & 0.39 & 1.00 \\
SumPooling   & 0.97 & 0.42 & 0.59 & \B 0.44 & 0.67 & \B 0.40 & \B 0.65 & \B 0.45 & \B 1.00 & 0.97 & \B 0.86 & \B 0.99 & 0.11 & 0.62 & 0.11 & 0.45 & 0.40 & 1.00 \\
SelfAttention & 0.97 & 0.12 & 0.12 & 0.27 & \B 0.71 & 0.28 & 0.46 & 0.40 & 1.00 & 0.96 & 0.09 & 0.08 & 0.12 & 0.67 & 0.12 & 0.15 & 0.17 & 1.00 \\
\bottomrule
\end{tabular}
}
\end{table}

\begin{table}[!ht]
\caption{Conditional coherence for models with private latent variables and bi-modal conditionals. The letters on the second line represent the modality, which is generated based on the sets of modalities on the line below it.}
\label{table:cond_coh_svhn_private_bimodal}

\centering
\resizebox{\linewidth}{!}{
\begin{tabular}{@{}lcccccccccccccccccc@{}}
\toprule
\multicolumn{9}{c}{Proposed objective} & \multicolumn{9}{c}{Mixture bound}\\
\cmidrule(lr){2-10} \cmidrule(lr){11-19}
&\multicolumn{3}{c}{M} & \multicolumn{3}{c}{S} & \multicolumn{3}{c}{T} & \multicolumn{3}{c}{M} & \multicolumn{3}{c}{S} & \multicolumn{3}{c}{T}\\ 
\cmidrule(lr){2-4} \cmidrule(lr){5-7} \cmidrule(lr){8-10} \cmidrule(lr){11-13} \cmidrule(lr){14-16} \cmidrule(lr){17-19}
Aggregation& M+S  & M+T  & S+T  & M+S  & M+T  & S+T  & M+S  & M+T  & S+T  & M+S  & M+T  & S+T  & M+S  & M+T  & S+T  & M+S  & M+T  & S+T  \\
\midrule 
PoE+          & \B 0.97                  & \B 0.97                  & 0.14                 & 0.66                  & 0.33                  & 0.67                 & 0.18                  & \B 1.00                  & \B 1.00                 & \B 0.97                  & \B 0.97                  & \B 0.94                 & 0.63                  & 0.11                  & 0.63                 & 0.45                  & 0.99                  & 0.96                 \\
SumPooling   & \B 0.97                  & \B 0.97                  & 0.54                 & 0.79                  & \B 0.43                  & 0.80                 & \B 0.57                  & \B 1.00                  & \B 1.00                 & \B 0.97                  & \B 0.97                  & 0.93                 & 0.64                  & 0.11                  & 0.63                 & 0.45                  & 0.99                  & 0.97                 \\
SelfAttention & \B 0.97                  & \B 0.97                  & 0.12                 & \B 0.80                  & 0.29                  & \B 0.81                 & 0.49                  & \B 1.00                  & \B 1.00                 & 0.96                  & 0.96                  & 0.08                 & 0.70                  & 0.12                  & 0.70                 & 0.15                  & \B 1.00                  & \B 1.00                \\
\bottomrule
\end{tabular}
}
\end{table}

\begin{table}[htb]
\caption{Conditional coherence for models with shared latent variables for different $
\beta$s and uni-modal conditionals. The letters on the second line represent the modality which is generated based on the sets of modalities on the line below it.}
\label{table:cond_coh_svhn_shared_beta_unimodal}
\centering
\resizebox{\linewidth}{!}{
\begin{tabular}{@{}lcccccccccccccccccc@{}}
\toprule
\multicolumn{9}{c}{Proposed objective} & \multicolumn{9}{c}{Mixture bound}\\
\cmidrule(lr){2-10} \cmidrule(lr){11-19}
&\multicolumn{3}{c}{M} & \multicolumn{3}{c}{S} & \multicolumn{3}{c}{T} & \multicolumn{3}{c}{M} & \multicolumn{3}{c}{S} & \multicolumn{3}{c}{T}\\ 
\cmidrule(lr){2-4} \cmidrule(lr){5-7} \cmidrule(lr){8-10} \cmidrule(lr){11-13} \cmidrule(lr){14-16} \cmidrule(lr){17-19}
($\beta$, Aggregation) & M    & S    & T    & M    & S    & T    & M    & S    & T    & M    & S    & T    & M    & S    & T    & M    & S    & T    \\
\midrule 
(0.1, PoE+)       & \B 0.98 & 0.11 & 0.12 & 0.12 & 0.62 & 0.14 & 0.61 & 0.25 & \B 1.00 & 0.96 & 0.83 & \B 0.99 & 0.11 & 0.58 & 0.11 & 0.45 & 0.39 & \B 1.00 \\
(0.1, SumPooling) & 0.97 & 0.48 & 0.81 & 0.30 & \B 0.72 & 0.33 & \B 0.86 & \B 0.55 & \B 1.00 & 0.97 & 0.86 & \B 0.99 & 0.11 & 0.64 & 0.11 & 0.45 & 0.40 & \B 1.00 \\
(1.0, PoE+)       & 0.97 & 0.15 & 0.63 & 0.24 & 0.63 & 0.42 & 0.79 & 0.35 & \B 1.00 & 0.96 & 0.83 & \B 0.99 & 0.11 & 0.59 & 0.11 & 0.45 & 0.39 & \B 1.00 \\
(1.0, SumPooling) & 0.97 & 0.48 & 0.87 & 0.25 & \B 0.72 & 0.36 & 0.73 & 0.48 & \B 1.00 & 0.97 & \B 0.86 & \B 0.99 & 0.10 & 0.63 & 0.10 & 0.45 & 0.40 & \B 1.00 \\
(4.0, PoE+)       & 0.97 & 0.29 & 0.83 & \B 0.41 & 0.60 & \B 0.58 & 0.76 & 0.38 & \B 1.00 & 0.96 & 0.82 & \B 0.99 & 0.10 & 0.57 & 0.10 & 0.44 & 0.38 & \B 1.00 \\
(4.0, SumPooling) & 0.97 & 0.48 & 0.88 & 0.35 & 0.66 & 0.44 & 0.83 & 0.53 & \B 1.00 & 0.96 & 0.85 & \B 0.99 & 0.11 & 0.57 & 0.10 & 0.45 & 0.39 & \B 1.00\\
\bottomrule
\end{tabular}
}
\end{table}

\begin{table}[!ht]
\caption{Conditional coherence for models with shared latent variables for different $
\beta$s and bi-modal conditionals. The letters on the second line represent the modality, which is generated based on the sets of modalities on the line below it.}
\label{table:cond_coh_svhn_shared_beta_bimodal}
\centering
\resizebox{\linewidth}{!}{
\begin{tabular}{@{}lcccccccccccccccccc@{}}
\toprule
\multicolumn{9}{c}{Proposed objective} & \multicolumn{9}{c}{Mixture bound}\\
\cmidrule(lr){2-10} \cmidrule(lr){11-19}
&\multicolumn{3}{c}{M} & \multicolumn{3}{c}{S} & \multicolumn{3}{c}{T} & \multicolumn{3}{c}{M} & \multicolumn{3}{c}{S} & \multicolumn{3}{c}{T}\\ 
\cmidrule(lr){2-4} \cmidrule(lr){5-7} \cmidrule(lr){8-10} \cmidrule(lr){11-13} \cmidrule(lr){14-16} \cmidrule(lr){17-19}
($\beta$, Aggregation) & M+S  & M+T  & S+T  & M+S  & M+T  & S+T  & M+S  & M+T  & S+T  & M+S  & M+T  & S+T  & M+S  & M+T  & S+T  & M+S  & M+T  & S+T  \\
\midrule 
(0.1, PoE+)       & \B 0.98 & \B 0.98 & 0.15 & 0.70 & 0.14 & 0.72 & 0.66 & \B 1.00 & \B 1.00 & 0.96 & 0.96 & 0.93 & 0.62 & 0.11 & 0.62 & 0.45 & 0.99 & 0.95 \\
(0.1, SumPooling) & 0.97 & 0.97 & 0.86 & \B 0.83 & 0.31 & \B 0.84 & 0.85 & 0.99 & \B 1.00 & 0.97 & 0.97 & 0.94 & 0.66 & 0.11 & 0.65 & 0.45 & 0.99 & 0.96 \\
(1.0, PoE+)       & 0.97 & \B 0.98 & 0.55 & 0.73 & 0.52 & 0.75 & 0.83 & \B 1.00 & 0.99 & 0.97 & 0.97 & \B 0.96 & 0.64 & 0.11 & 0.63 & 0.45 & 0.99 & 0.97 \\
(1.0, SumPooling) & 0.97 & 0.97 & 0.86 & 0.78 & 0.30 & 0.80 & 0.76 & 0.99 & \B 1.00 & 0.97 & 0.97 & 0.95 & 0.65 & 0.10 & 0.65 & 0.45 & 0.99 & 0.97 \\
(4.0, PoE+)       & 0.97 & \B 0.98 & 0.84 & 0.76 & \B 0.66 & 0.78 & 0.82 & \B 1.00 &  \B 1.00 & 0.97 & 0.97 & \B 0.96 & 0.62 & 0.10 & 0.62 & 0.45 & 0.99 & 0.98 \\
(4.0, SumPooling) & 0.97 & 0.97 & 0.89 & 0.77 & 0.40 & 0.78 & \B 0.86 & 0.99 & \B 1.00 & 0.97 & 0.97 & \B 0.96 & 0.61 & 0.10 & 0.60 & 0.45 & 0.99 & 0.97\\
\bottomrule
\end{tabular}
}
\end{table}

\clearpage
\subsection{Latent classification accuracy}

\begin{table}[htb]
\caption{Unsupervised latent classification for $\beta=1$ and models with shared latent variables only (top half) and shared plus private latent variables (bottom half). Accuracy is computed with a linear classifier (logistic regression) trained on multi-modal inputs (M+S+T) or uni-modal inputs (M, S or T).}
\label{table:latent_acc}
\centering
\resizebox{\linewidth}{!}{
\begin{tabular}{@{}lcccccccc@{}}
\toprule
& \multicolumn{4}{c}{Proposed objective} & \multicolumn{4}{c}{Mixture bound}\\
\cmidrule(lr){2-5}\cmidrule(lr){6-9}
Aggregation &  M+S+T  & M &  S  &  T & M+S+T  & M &  S  &  T   \\ \midrule
PoE          & 0.988 (0.000)     & 0.940 (0.009)  & 0.649 (0.039) & 0.998 (0.001) & 0.991 (0.004) & 0.977 (0.002) & 0.845 (0.000)     & \B 1.000 (0.000) \\
PoE+         & 0.978 (0.002) & 0.934 (0.001) & 0.624 (0.040)  & 0.999 (0.001) & \B 0.998 (0.000)     & 0.981 (0.000)     & 0.851 (0.000)     & \B 1.000 (0.000) \\
MoE           & 0.841 (0.008) & 0.974 (0.000)     & 0.609 (0.032) & \B 1.000 (0.000)         & 0.940 (0.001)  & 0.980 (0.001)  & 0.843 (0.001) & \B 1.000 (0.000) \\
MoE+          & 0.850 (0.039)  & 0.967 (0.014) & 0.708 (0.167) & 0.983 (0.023) & 0.928 (0.017) & 0.983 (0.002) & 0.846 (0.001) & \B 1.000 (0.000) \\
SelfAttention & 0.985 (0.001) & 0.954 (0.002) & 0.693 (0.037) & 0.986 (0.006) & 0.991 (0.000)     & 0.981 (0.001) & 0.864 (0.003) & \B 1.000 (0.000) \\
SumPooling    & 0.981 (0.000)     & 0.962 (0.000)     & 0.704 (0.014) & 0.992 (0.008) & 0.994 (0.000)     & \B 0.983 (0.000)     & 0.866 (0.002) & \B 1.000 (0.000) \\
\midrule
PoE+          & 0.979 (0.009) & 0.944 (0.000)     & 0.538 (0.032) & 0.887 (0.07)  & 0.995 (0.002) & 0.980 (0.002) & 0.848 (0.006) & \B 1.000 (0.000) \\
SumPooling    & 0.987 (0.004) & 0.966 (0.004) & 0.370 (0.348)  & 0.992 (0.002) & 0.994 (0.001) & 0.982 (0.000)    & \B 0.870 (0.001)  & \B 1.000 (0.000) \\
SelfAttention & 0.990 (0.003)  & 0.968 (0.002) & 0.744 (0.008) & 0.985 (0.000)     & 0.997 (0.001) & 0.974 (0.000)    & 0.681 (0.031) & \B 1.000 (0.000)\\
\midrule
\midrule
\multicolumn{9}{c}{Results from \citet{sutter2021generalized}, \citet{sutter2020multimodal} and \citet{hwang2021multi}}\\
\midrule
MVAE & 0.96 (0.02) & 0.90 (0.01) & 0.44 (0.01) & 0.85 (0.10)\\
MMVAE & 0.86 (0.03) & 0.95 (0.01) & 0.79 (0.05)& 0.99 (0.01)  \\
MoPoE & 0.98 (0.01) & 0.95 (0.01) & 0.80 (0.03) & 0.99 (0.01)\\
MMJSD & 0.98 (NA) &0.97 (NA)& 0.82 (NA)& 0.99 (NA) \\
MVTCAE (w/o T) & NA & 0.93 (NA) & 0.78 (NA) & NA& \\

\bottomrule
\end{tabular}
}
\end{table}

\begin{table}[htb]
\caption{Unsupervised latent classification for different $\beta$s and models with shared latent variables only. Accuracy is computed with a linear classifier (logistic regression) trained on multi-modal inputs (M+S+T) or uni-modal inputs (M, S or T).}
\label{table:latent_acc_betas}
\centering
\resizebox{\linewidth}{!}{
\begin{tabular}{@{}lcccccccc@{}}
\toprule
& \multicolumn{4}{c}{Proposed objective} & \multicolumn{4}{c}{Mixture bound}\\
\cmidrule(lr){2-5}\cmidrule(lr){6-9}
($\beta$, Aggregation) &  M+S+T  & M &  S  &  T & M+S+T  & M &  S  &  T   \\ \midrule
(0.1, PoE+)       & 0.983 (0.006) & 0.919 (0.001) & 0.561 (0.048) & 0.988 (0.014) & 0.992 (0.002) & 0.979 (0.002) & 0.846 (0.004) & \B 1.000 (0.000) \\
(0.1, SumPooling) & 0.982 (0.004) & 0.965 (0.002) & 0.692 (0.047) & 0.999 (0.001) & 0.994 (0.000)     & 0.981 (0.002) & 0.863 (0.005) & \B 1.000 (0.000) \\
(1.0, PoE+)       & 0.978 (0.002) & 0.934 (0.001) & 0.624 (0.040)  & 0.999 (0.001) & \B 0.998 (0.000)     & 0.981 (0.000)     & 0.851 (0.000)     & \B 1.000 (0.000) \\
(1.0, SumPooling) & 0.981 (0.000)     & 0.962 (0.000)     & 0.704 (0.014) & 0.992 (0.008) & 0.994 (0.000)     & \B 0.983 (0.000)     & \B 0.866 (0.002) & \B 1.000 (0.000) \\
(4.0, PoE+)       & 0.981 (0.006) & 0.943 (0.007) & 0.630 (0.008)  & 0.993 (0.001) & \B 0.998 (0.000)     & 0.981 (0.000)     & 0.846 (0.001) & \B 1.000 (0.000) \\
(4.0, SumPooling) & 0.984 (0.004) & 0.963 (0.001) & 0.681 (0.009) & 0.995 (0.000)     & 0.992 (0.002) & 0.980 (0.001)  & 0.856 (0.001) & \B 1.000 (0.000)\\
\bottomrule
\end{tabular}
}

\end{table}

\clearpage
\section{Encoder Model architectures}
\subsection{Linear models}
\begin{table}[!ht]
\caption{Encoder architectures for Gaussian models.}
\label{table:arch_gaussian}
\begin{subtable}{.55\linewidth} 
\caption{Modality-specific encoding functions $h_s(x_s)$. Latent dimension $D=30$,
modality dimension $D_s \sim \mcu(30,60)$.}
\centering
\begin{tabular}{ll}
\toprule
MoE/PoE & SumPooling/SelfAttention  \\ \midrule
Input: $D_s$ & Input: $D_s$ \\
Dense $D_s \times 512$, ReLU &  Dense $D_s \times 256$, ReLU \\
Dense $512 \times 512$, ReLU &  Dense $256 \times 256$, ReLU \\
Dense $512 \times 60$ &  Dense $256 \times 60$ \\
\bottomrule
\end{tabular}
\end{subtable} 
\hfill
\begin{subtable}{.4\linewidth}
\caption{Model for outer aggregation function $\rho_{\vartheta}$ for SumPooling and SelfAttention schemes.}
\centering
\begin{tabular}{l}
\toprule
Outer Aggregation   \\ \midrule
Input: $256$ \\
Dense $256 \times 256$, ReLU \\
Dense $256 \times 256$, ReLU \\
Dense $256 \times 60$  \\
\bottomrule
\end{tabular}
\end{subtable}
\\
\begin{subtable}{.55\linewidth} 
\caption{Inner aggregation function $\chi_{\vartheta}$.}
\centering
\begin{tabular}{ll}
\toprule
SumPooling & SelfAttention  \\ \midrule
Input: $256$ & Input: $256$ \\
Dense $256 \times 256$, ReLU &  Dense $256 \times 256$, ReLU \\
Dense $256 \times 256$, ReLU &  Dense $256 \times 256$ \\
Dense $256 \times 256$ &   \\
\bottomrule
\end{tabular}
\end{subtable} 
\hfill
\begin{subtable}{.4\linewidth}
\caption{Transformer parameters.}
\centering
\begin{tabular}{l}
\toprule
SelfAttention (1 Layer)  \\ \midrule
Input: $256$ \\
Heads: $4$ \\
Attention size: $256$ \\
Hidden size FFN: $256$ \\
\bottomrule
\end{tabular}
\end{subtable}

\end{table}

\subsection{Linear models with private latent variables}

\begin{table}[!ht]
\caption{Encoder architectures for Gaussian models with private latent variables.}
\label{table:arch_gaussian_private}
\begin{subtable}{.60\linewidth} 
\caption{Modality-specific encoding functions $h_s(x_s)$. All private and shared latent variables are of dimension $10$.
Modality dimension $D_s \sim \mcu(30,60)$.}
\centering
\begin{tabular}{ll}
\toprule
PoE ($h_s^{\text{shared}}$ and $h_s^{\text{private}}$) & SumPooling/SelfAttention   \\ \midrule
Input: $D_s$ & Input: $D_s$ \\
Dense $D_s \times 512$, ReLU &  Dense $D_s \times 128$, ReLU \\
Dense $512 \times 512$, ReLU &  Dense $128 \times 128$, ReLU \\
Dense $512 \times 10$ &  Dense $128 \times 10$ \\
\bottomrule
\end{tabular}
\hfill
\end{subtable} 
\hfill
\begin{subtable}{.35\linewidth}
\caption{Model for outer aggregation function $\rho_{\vartheta}$ for SumPooling scheme.}
\centering
\begin{tabular}{l}
\toprule
Outer Aggregation ($\rho_{\vartheta}$) \\ \midrule
Input: $128$ \\
Dense $128 \times 128$, ReLU \\
Dense $128 \times 128$, ReLU \\
Dense $128 \times 10$  \\
\bottomrule
\end{tabular}
\end{subtable}
\\
\begin{subtable}{.60\linewidth} 
\caption{Inner aggregation functions.}
\centering
\begin{tabular}{ll}
\toprule
SumPooling ($\chi_{0,\vartheta}$, $\chi_{1,\vartheta}$, $\chi_{2,\vartheta}$) & SelfAttention $(\chi_{1,\vartheta}$, $\chi_{2,\vartheta}$) \\ \midrule
Input: $128$ & Input: $128$ \\
Dense $128 \times 128$, ReLU  & Dense $128 \times 128$, ReLU \\
Dense $128 \times 128$, ReLU  & Dense $128 \times 128$ \\
Dense $128 \times 128$ &   \\
\bottomrule
\end{tabular}
\end{subtable} 
\hfill
\begin{subtable}{.35\linewidth}
\caption{Transformer parameters.}
\centering
\begin{tabular}{l}
\toprule
SelfAttention (1 Layer)  \\ \midrule
Input: $128$ \\
Heads: $4$ \\
Attention size: $128$ \\
Hidden size FFN: $128$ \\
\bottomrule
\end{tabular}
\end{subtable}
\end{table}

\newpage
\subsection{Nonlinear model with auxiliary label}

\begin{table}[!ht]
\caption{Encoder architectures for nonlinear model with auxiliary label.}
\label{table:arch_iVAE_toy}
\begin{subtable}{.55\linewidth} 
\caption{Modality-specific encoding functions $h_s(x_s)$. 
Modality dimension $D_1=2$ (continuous modality) and $D_2=5$ (label). Embedding dimension $D_E=4$ for PoE and MoE and $D_E=128$ otherwise.}
\centering
\begin{tabular}{l}
\toprule
Modality-specific encoders  \\ \midrule
Input: $D_s$ \\
Dense $D_s \times 128$, ReLU  \\
Dense $128 \times 128$, ReLU \\
Dense $128 \times D_E$ \\
\bottomrule
\end{tabular}
\end{subtable} 
\hfill
\begin{subtable}{.4\linewidth}
\caption{Model for outer aggregation function $\rho_{\vartheta}$ for SumPooling and SelfAttention schemes and mixtures thereof. Output dimension is $D_0=25$ for mixture densities and $D_O=4$ otherwise.}
\centering
\begin{tabular}{l}
\toprule
Outer Aggregation   \\ \midrule
Input: $128$ \\
Dense $128 \times 128$, ReLU \\
Dense $128 \times 128$, ReLU \\
Dense $128 \times D_O$  \\
\bottomrule
\end{tabular}
\end{subtable}
\\
\begin{subtable}{.55\linewidth} 
\caption{Inner aggregation function $\chi_{\vartheta}$.}
\centering
\begin{tabular}{ll}
\toprule
SumPooling & SelfAttention  \\ \midrule
Input: $128$ & Input: $128$ \\
Dense $128 \times 128$, ReLU &  Dense $128 \times 128$, ReLU \\
Dense $128 \times 128$, ReLU &  Dense $128 \times 128$ \\
Dense $128 \times 128$ &   \\
\bottomrule
\end{tabular}
\end{subtable} 
\hfill
\begin{subtable}{.4\linewidth}
\caption{Transformer parameters.}
\centering
\begin{tabular}{l}
\toprule
SelfAttention   \\ \midrule
Input: $128$ \\
Heads: $4$ \\
Attention size: $128$ \\
Hidden size FFN: $128$ \\
\bottomrule
\end{tabular}
\end{subtable}

\end{table}

\subsection{Nonlinear model with five modalities}

\begin{table}[!ht]
\caption{Encoder architectures for a nonlinear model with five modalities.}
\label{table:arch_iVAE_mm}
\begin{subtable}{.55\linewidth} 
\caption{Modality-specific encoding functions $h_s(x_s)$. 
Modality dimensions $D_s=25$. Latent dimension $D=25$}
\centering
\begin{tabular}{ll}
\toprule
MoE/PoE & SumPooling/SelfAttention  \\ \midrule
Input: $D_s$ & Input: $D_s$ \\
Dense $D_s \times 512$, ReLU &  Dense $D_s \times 256$, ReLU \\
Dense $512 \times 512$, ReLU &  Dense $256 \times 256$, ReLU \\
Dense $512 \times 50$ &  Dense $256 \times 256$ \\
\bottomrule
\end{tabular}
\end{subtable} 
\hfill
\begin{subtable}{.4\linewidth}
\caption{Model for outer aggregation function $\rho_{\vartheta}$ for SumPooling and SelfAttention schemes and mixtures thereof. Output dimension is $D_0=50$ for mixture densities and $D_O=25$ otherwise.}
\centering
\begin{tabular}{l}
\toprule
Outer Aggregation   \\ \midrule
Input: $256$ \\
Dense $256 \times 256$, ReLU \\
Dense $256 \times 256$, ReLU \\
Dense $256 \times D_O$  \\
\bottomrule
\end{tabular}
\end{subtable}
\\
\begin{subtable}{.55\linewidth} 
\caption{Inner aggregation function $\chi_{\vartheta}$.}
\centering
\begin{tabular}{ll}
\toprule
SumPooling & SelfAttention  \\ \midrule
Input: $256$ & Input: $256$ \\
Dense $256 \times 256$, ReLU &  Dense $256 \times 256$, ReLU \\
Dense $256 \times 256$, ReLU &  Dense $ \times 256$ \\
Dense $256 \times 256$ &   \\
\bottomrule
\end{tabular}
\hfill
\end{subtable} 
\begin{subtable}{.40\linewidth}
\caption{Transformer parameters.}
\centering
\begin{tabular}{l}
\toprule
SelfAttention   \\ \midrule
Input: $256$ \\
Heads: $4$ \\
Attention size: $256$ \\
Hidden size FFN: $256$ \\
\bottomrule
\end{tabular}
\end{subtable}

\end{table}

\subsection{MNIST-SVHN-Text}
\label{app:svhn_enc_arch}
For SVHN and Text, we use 2d- or 1d-convolutional layers, respectively, denoted as Conv($f,k,s$) for feature dimension $f$, kernel-size $k$, and stride $s$. We denote transposed convolutions as tConv. We use the neural network architectures as implemented in Flax \citet{flax2020github}.

\begin{table}[!ht]
\caption{Encoder architectures for MNIST-SVHN-Text.}
\label{table:arch_svhn}
\begin{subtable}{.45\linewidth} 
\caption{MNIST-specific encoding functions $h_s(x_s)$. 
Modality dimensions $D_s=28 \times 28$. The embedding dimension is $D_E=2D$ for PoE/MoE and $D_E=256$ for SumPooling/SelfAttention. For PoE+/MoE+, we add four times a Dense layer of size $256$ with ReLU layer before the last linear layer.}
\centering
\begin{tabular}{l}
\toprule
MoE/PoE/SumPooling/SelfAttention \\ \midrule
Input: $D_s$,  \\
Dense $D_s \times 400$, ReLU \\
Dense $400 \times 400$, ReLU \\
Dense $400 \times D_E$ \\
\bottomrule
\end{tabular}
\end{subtable} 
\hfill
\begin{subtable}{.45\linewidth} 
\caption{SVHN-specific encoding functions $h_s(x_s)$. 
Modality dimensions $D_s=3 \times 32 \times 32$. The embedding dimension is $D_E=2D$ for PoE/MoE and $D_E=256$ for SumPooling/SelfAttention. For PoE+/MoE+, we add four times a Dense layer of size $256$ with ReLU layer before the last linear layer.}
\centering
\begin{tabular}{l}
\toprule
MoE/PoE/SumPooling/SelfAttention \\ \midrule
Input: $D_s$  \\
Conv(32, 4, 2), ReLU \\
Conv(64, 4, 2), ReLU \\
Conv(64, 4, 2), ReLU \\
Conv(128, 4, 2), ReLU, Flatten \\
Dense $2048 \times D_E$ \\
\bottomrule
\end{tabular}
\end{subtable}  \\
\begin{subtable}{.45\linewidth} 
\caption{Text-specific encoding functions $h_s(x_s)$. 
Modality dimensions $D_s=8 \times 71$. Embedding dimension is $D_E=2D$ for PoE/MoE and $D_E=256$ for permutation-invariant models (SumPooling/SelfAttention) and $D_E=128$ for permutation-equivariant models (SumPooling/SelfAttention). For PoE+/MoE+, we add four times a Dense layer of size $256$ with ReLU layer before the last linear layer.}
\centering
\begin{tabular}{l}
\toprule
MoE/PoE/SumPooling/SelfAttention \\ \midrule
Input: $D_s$  \\
Conv(128, 1, 1), ReLU \\
Conv(128, 4, 2), ReLU \\
Conv(128, 4, 2), ReLU, Flatten \\
Dense $128 \times D_E$ \\
\bottomrule
\end{tabular}
\end{subtable}  
\hfill
\begin{subtable}{.45\linewidth}
\caption{Model for outer aggregation function $\rho_{\vartheta}$ for SumPooling and SelfAttention schemes. Output dimension is $D_0=2D=80$ for models with shared latent variables only and $D_0=10+10$ for models with private and shared latent variables. $D_E=256$ for permutation-invariant and $D_I=128$ for permutation-invariant models.}
\centering
\begin{tabular}{l}
\toprule
Outer Aggregation   \\ \midrule
Input: $D_E$ \\
Dense $D_E \times D_E$, LReLU \\
Dense $D_E \times D_E$, LReLU \\
Dense $D_E \times D_O$  \\
\bottomrule
\end{tabular}
\end{subtable}
\\
\begin{subtable}{.45\linewidth} 
\caption{Inner aggregation function $\chi_{\vartheta}$ for permutation-invariant models ($D_E=256$) and permutaion-equivariant models ($D_E=128$).}
\centering
\begin{tabular}{ll}
\toprule
SumPooling & SelfAttention  \\ \midrule
Input: $D_E$ & Input: $D_E$ \\
Dense $D_E \times D_E$, LReLU &  Dense $D_E \times D_E$, LReLU \\
Dense $D_E \times D_E$, LReLU &  Dense $ \times D_E$ \\
Dense $D_E \times D_E$ &   \\
\bottomrule
\end{tabular}
\end{subtable} 
\hfill
\begin{subtable}{.45\linewidth}
\caption{Transformer parameters for permutation-invariant models. $D_E=256$ for permutation-invariant and $D_I=128$ for permutation-invariant models.}
\centering
\begin{tabular}{l}
\toprule
SelfAttention ($2$ Layers)   \\ \midrule
Input: $D_E$ \\
Heads: $4$ \\
Attention size: $D_E$ \\
Hidden size FFN: $D_E$ \\
\bottomrule
\end{tabular}
\end{subtable}
\end{table}

\section{MNIST-SVHN-Text Decoder Model architectures}

For models with private latent variables, we concatenate the shared and private latent variables.
We use a Laplace likelihood as the decoding distribution for MNIST and SVHN, where the decoder function learns both its mean as a function of the latent and a constant log-standard deviation at each pixel. Following previous works \citep{shi2019variational, sutter2021generalized}, we re-weight the log-likelihoods for different modalities relative to their dimensions.

\begin{table}[!ht]
\caption{Decoder architectures for MNIST-SVHN-Text. }
\label{table:arch_svhn_decoder}
\begin{subtable}{.45\linewidth} 
\caption{MNIST decoder. $D_I=40$ for models with shared latent variables only, and $D_I=10+10$ otherwise.}
\centering
\begin{tabular}{l}
\toprule
MNIST\\ \midrule
Input: $D_I$ \\
Dense $40 \times 400$, ReLU  \\
Dense $400 \times 400$, ReLU \\
Dense $400 \times D_s$, Sigmoid \\
\bottomrule
\end{tabular}
\end{subtable} 
\hfill
\begin{subtable}{.45\linewidth} 
\caption{SVHN decoder. $D_I=40$ for models with shared latent variables only, and $D_I=10+10$ otherwise.}
\centering
\begin{tabular}{l}
\toprule
SVHN \\ \midrule
Input: $D_I$  \\
Dense $D_I \times 128$, ReLU\\
tConv(64, 4, 3), ReLU \\
tConv(64, 4, 2), ReLU \\
tConv(32, 4, 2), ReLU \\
tConv(3, 4, 2) \\
\bottomrule
\end{tabular}
\end{subtable}  \\
\begin{subtable}{.45\linewidth} 
\caption{Text decoder. $D_I=40$ for models with shared latent variables only, and $D_I=10+10$ otherwise.}
\centering
\begin{tabular}{l}
\toprule
Text \\ \midrule
Input: $D_I$  \\
Dense $D_I \times 128$, ReLU\\
tConv(128, 4, 3), ReLU \\
tConv(128, 4, 2), ReLU \\
tConv(71, 1, 1) \\
\bottomrule
\end{tabular}
\end{subtable}  

\end{table}

\section{Compute resources and existing assets}

A reference implementation is available at \url{https://github.com/marcelah/MaskedMultimodalVAE}.
Our computations were performed on shared HPC systems. All experiments except  Section \ref{sec:svhn} were run on a CPU server using one or two CPU cores. The experiments in Section \ref{sec:svhn} were run on a GPU server using one NVIDIA A100.

Our implementation is based on JAX \citep{jax2018github} and Flax \citep{flax2020github}. 
We compute the mean correlation coefficient (MCC) between true and inferred latent variables following \citet{khemakhem2020ice}, as in \url{https://github.com/ilkhem/icebeem} and follow the data and model generation from \citet{khemakhem2020variational}, \url{https://github.com/ilkhem/iVAE} in Section \ref{subsec:identifiable}, as well as \url{https://github.com/hanmenghan/CPM_Nets} from \citet{zhang2019cpm} for generating the missingness mechanism. In our MNIST-SVHN-Text experiments, we use code from \citet{sutter2021generalized}, \url{https://github.com/thomassutter/MoPoE}.

\vskip 0.2in

\end{document}